\documentclass{article}

\PassOptionsToPackage{numbers, sort&compress}{natbib}
\usepackage[preprint]{neurips_2026}

\usepackage[utf8]{inputenc} 
\usepackage[T1]{fontenc}    
\usepackage[hidelinks]{hyperref}       
\usepackage{url}            
\usepackage{booktabs}       
\usepackage{amsfonts}       
\usepackage{nicefrac}       
\usepackage{microtype}      
\usepackage{xcolor}         
\usepackage{wrapfig}
\usepackage{enumitem} 
\usepackage{multirow}

\usepackage{amsmath, amssymb,mathtools}
\usepackage{amsthm,thmtools}
\usepackage{bm}
\usepackage{dsfont}  

\usepackage{caption}
\usepackage{subcaption}

\usepackage[noend]{algpseudocode}
\usepackage{algorithm}

\usepackage{xpatch}

\makeatletter
\xpatchcmd{\algorithmic}
  {\ALG@tlm\z@}{\leftmargin\z@\ALG@tlm\z@}
  {}{}
\makeatother

\theoremstyle{definition}
\newtheorem{definition}{Definition}
\newtheorem{assumption}{Assumption}

\theoremstyle{plain}
\newtheorem{theorem}{Theorem}
\newtheorem{proposition}{Proposition}
\newtheorem{lemma}{Lemma}
\newtheorem{corollary}{Corollary}

\newtheorem{restatedthminner}{Theorem}
\newenvironment{restatedthm}[1]{%
\renewcommand\therestatedthminner{#1}%
\restatedthminner
}{\endrestatedthminner}

\newtheorem{restatedpropinner}{Proposition}
\newenvironment{restatedprop}[1]{%
\renewcommand\therestatedpropinner{#1}%
\restatedpropinner
}{\endrestatedpropinner}

\renewcommand{\S}{\mathsf{S}}
\newcommand{\A}{\mathsf{A}}
\newcommand{\R}{\mathbf{R}}
\renewcommand{\P}{\mathbf{P}}
\newcommand{\AP}{\mathsf{AP}}
\newcommand{\X}{\mathbf X}

\renewcommand{\L}{\mathsf{L}}
\newcommand{\Q}{\mathsf{Q}}
\newcommand{\F}{\mathsf{F}}

\renewcommand{\F}{\mathcal{F}}

\newcommand{\ind}[1]{\mathds{1}{\{#1\}}}
\newcommand{\indt}[1]{\mathds{1}_t{\{#1\}}}

\title{\textsc{RecReg}: Recovery-based Shielding for Regular Safety Properties in Reinforcement Learning}

\author{%
  Alexander W. Goodall\\
  Department of Computer Science\\
  Imperial College London\\
  \texttt{a.goodall22@imperial.ac.uk} \\
  \And
  Francesco Belardinelli\\
  Department of Computer Science\\
  Imperial College London\\
  \texttt{francesco.belardinelli@imperial.ac.uk} \\
}

\begin{document}

\maketitle

\begin{abstract}
In many real-world scenarios we need to consider formal requirements, such as safety and constraint compliance, to deploy Reinforcement Learning (RL) agents. In this paper we introduce \textsc{RecReg}, a safe RL framework capable of enforcing non-Markovian regular safety properties via recovery-based shielding. \textsc{RecReg} is a unified approach that supports different levels of {\em a priori} knowledge and safety guarantees. The paper primarily focuses on building the theoretical foundations for \textsc{RecReg} in the tabular setting, providing rigorous safety and convergence guarantees that are not present in related work. We provide an extensive set of empirical results using the MASA-Safe-RL library, and we outline an extension of \textsc{RecReg} to deep RL domains using SOTA model-based RL. Our empirical results demonstrate that \textsc{RecReg} achieves efficient learning and matches the theory in this paper.
\end{abstract}

\section{Introduction}

Safe Reinforcement Learning (RL) is an active area of research focusing on 
the training of agents with RL while maintaining 
some 
standard of safe behaviour at deployment and possibly during training as well \citep{amodei2016concrete}. There exist many competing paradigms in safe RL, the most common being \emph{Constrained Markov Decision Processes} (CMDPs) \citep{altman1999constrained}, which frame safe RL as a constrained optimization problem, enforcing safety through cost functions and expected cumulative cost thresholds; although this approach is
often inappropriate in highly safety-critical domains, as it only enforces safety in expectation and often lacks rigorous guarantees \citep{voloshin2022policy}. 

An alternative view comes from \emph{formal methods}. Methods from this class often leverage some form of \emph{Linear Temporal Logic} (LTL) \citep{pnueli1989synthesis} to express formal safety constraints and task objectives, enabling expressive and interpretable specifications \citep{hasanbeig2018logically,hasanbeig2020cautious, de2020restraining, voloshin2022policy,voloshin2023eventual, shah2024deep, manganaris2025automaton}. Following this line of work, \emph{shielding} has emerged as a powerful approach for enforcing LTL safety properties in RL \citep{alshiekh2018safe}. Predicated on ideas like the \emph{Simplex Architecture} \citep{seto1998simplex}, shielding works by monitoring the agent's actions and overriding them with a safe alternative if they propose an action that could lead to a safety violation. Classical approaches to shielding leverage reactive synthesis to build correct-by-construction monitors from formal safety specification and a formal model of the environment called the \emph{safety abstraction} \citep{bloem2015shield, alshiekh2018safe}. The \emph{safety abstraction} is designed to encode the minimum amount of environment dynamics required to guarantee safety. As such, it is often built from a pessimistic view of the environment derived from prior knowledge, thus classical shielding approaches can only reasonably be applied to known environments or simple safety specifications. This fundamentally departs from the usual RL framework (unknown environment), but does provide strong safety guarantees by construction. 

More recent shielding approaches consider probabilistic safety constraints \citep{jansen2020safe, court2025probabilisticshieldingsafereinforcement}. While these are less pessimistic, they still require knowledge of the transition kernel, using probabilistic model checking to verify actions. There has however been a shift to lift these assumptions, in particular, by leveraging model-based RL to estimate the unknown environment dynamics and verify actions via forward simulation in the latent space of a recurrent state space model \citep{he2022androids, xiao2023model, goodall2023approximate, goodall2024leveraging}, with Gaussian Process dynamics models \citep{goodall2026safe} or ensembles of probabilistic neural networks \citep{gronauer2024reinforcement}. \textsc{RecReg} addresses this same question: \emph{how can we obtain meaningful safety guarantees when the environment dynamics are unknown?}. We summarize our contributions hereafter.
\paragraph{Contributions.} To bridge the gap between the strong empirical results achievable by data driven methods (e.g., CMDP, model-based RL) and formally guaranteed safety (classical shielding) we propose \textsc{RecReg}, a unified framework for enforcing regular safety properties that spans multiple levels of \emph{a priori} knowledge. In a similar fashion to Recovery RL \citep{thananjeyan2021recovery}, \textsc{RecReg} learns a backup policy trained to maximize safety probability and action replacement (shielding) is implemented by identifying \emph{probabilistic safe sets} (or ``recovery zones''). A central contribution of this paper is in the development of explicit safety guarantees, convergence and optimality results 
which are missing in prior work \citep{thananjeyan2021recovery} 
. In particular, we identify and formalize three complementary operation modes. In the \textsc{Exact} mode, \textsc{RecReg} assumes full knowledge of the transition kernel and guarantees safety from the beginning of training.  In the \textsc{ModelBased} mode, \textsc{RecReg} estimates the transition kernel from observed data, then performs on-the-fly verification via finite-horizon reachability analysis, enabling principled shielding under partial model knowledge with statistical guarantees. In the \textsc{ModelFree} mode, \textsc{RecReg} dispenses with explicit model learning altogether and instead directly estimates the safety critic, yielding a purely model-free shielding mechanism with convergence guarantees. We also use \textsc{DreamerV3} \cite{hafner2025mastering} to develop a deep RL variant of \textsc{RecReg} that uses latent rollout models and safety critics to provide safety guarantees under correct model calibration. In the tabular setting, we validate that \textsc{RecReg} matches the outlined theory and that comparable approaches cannot reliably enforce safety constraints while maximising task performance. In deep RL we also observe that \textsc{RecReg} is able to reliably reduce safety violations while achieving good returns compared to similar approaches based on the \textsc{DreamerV3} architecture. Together these results establish \textsc{RecReg} as a practical and theoretically-grounded approach to safe RL
%
%
\paragraph{Related work.}
Existing methods for safe RL can roughly be dividing into two camps: ({\bf A}) methods that incorporate safety into the learning objective, encouraging safe behaviour as an outcome of learning \citep{chow2018risk, liang2018accelerated, tessler2018reward, ray2019benchmarking, as2022constrained, achiam2017constrained, yang2020projection, hasanbeig2018logically, shah2024deep}, and ({\bf B}) methods that enforce safety via external mechanisms that directly modify the agent's actions \citep{alshiekh2018safe, jansen2020safe, konighofer2021online, bastanib2021safe, thananjeyan2021recovery, konighofer2023online,gronauer2024reinforcement}. Methods from camp ({\bf B}) make up a large portion of ``provably safe`` RL algorithms \citep{krasowski2023provably}. They can also be further divided into \emph{hard constraints} \citep{alshiekh2018safe} and \emph{probabilistic constraints} \citep{jansen2020safe, bastanib2021safe, court2025probabilisticshieldingsafereinforcement}. Methods based on CMDPs are from camp ({\bf A}) -- they use a cost function, that often represents safety violations, in the learning objective to reduce the probability of unsafe behaviour. CMDPs themselves can be solved via linear programming in the tabular setting \citep{altman1999constrained}. For more complex environments, methods based on {\em Lagrange relaxations} and  {\em dual-gradient descent} \citep{chow2018risk, liang2018accelerated, tessler2018reward, ray2019benchmarking, as2022constrained}, and {\em local policy search} with gradient projections \citep{achiam2017constrained, yang2020projection} are popular approaches. While CMDPs provide a convenient framework for learning, they have several limitations. In particular, safety constraints are only enforced in expectation, which do not guarantee that unsafe events will never occur. Furthermore, the discounting accumulated costs also poses issues with constraining limiting behaviour \citep{voloshin2022policy}.

In contrast, our approach \textsc{RecReg} fits neatly into camp ({\bf B}), enforcing \emph{regular safety properties}  (more expressive than LTL safety properties) with high probability. LTL constraints have been considered in RL for both task specification \citep{hasanbeig2018logically, hasanbeig2020cautious, de2020restraining} and constraint specification \citep{voloshin2022policy, voloshin2023eventual, shah2024deep}. LTL constraints are PAC learnable under specific conditions \citep{yang2021tractability, voloshin2022policy}, but optimality under LTL constraints can only be achieved subject to optimal discount factors and Lagrange multipliers \citep{voloshin2023eventual,shah2024deep}. Earlier theoretical work assumed access to a perfect generative model of the environment which can be queried from any state, breaking standard RL assumptions \citep{voloshin2022policy}. Recent work on \emph{Automata Constrained Q-learning} uses reward shaping and penalties to ensure the optimal policy is also safe at convergence although does not provide explicit safety guarantees during learning unlike our approach \citep{manganaris2025automaton}. For RL with LTL safety constraints, Shielding can be used to enforce constraint compliance \citep{alshiekh2018safe}. In partially observable settings (e.g., deep RL) safety can be guaranteed if the underlying transition structure is known \citep{carr2023safe}. \textsc{RecReg}, by contrast, offers a unified framework for safety guarantees even when the transition dynamics or structure are truly unknown. 

Another related line of research is \emph{Reward Machines} \citep{icarte2018using,icarte2022reward}; these are closely related to our work as simple reward machines can encode reward functions for learning regular safety properties. While the \emph{Reward Machines} literature is primarily focused on task specification, there are many key insights such as \emph{counterfactual experiences} and \emph{reward shaping} \citep{ng1999policy} that are used to speed up learning over \emph{Reward Machines}, which can applied in our framework too. 

\section{Preliminaries}

For a finite set $\S$, let $P(\S)$ denote the power set and let $D(\S)$ denote the set of all probability distributions on $\S$. Let $\S^*$ and $\S^{\omega}$ denote the set of finite and infinite sequences over $\S$ respectively,
and denote their union as 
$\S^{\infty} = \S^* \cup \S^{\omega}$. Let $|\rho| \in \mathbb{N} \cup \{ \infty \}$ denote the length of a sequence  $\rho \in \S^{\infty}$, with $|\rho| = \infty$ for $\rho \in S^{\omega}$. We also let $\rho[i]$ denote the $i+1$-th element, when $i < |\rho|$, and we denote the last element of a sequence $\rho  \in \mathcal{S}^*$ as $\rho\!\!\downarrow\, = \rho[|\rho|-1]$.
A sequence $\rho_1$ is a prefix of $\rho_2$, denoted $\rho_1 \preceq \rho_2$, if $|\rho_1| \leq |\rho_2|$ and $\rho_1[i] = \rho_2[i]$ for all $0 \leq i \leq |\rho_1|$. A sequence $\rho_1$ is a proper prefix of $\rho_2$, denoted $\rho_1 \prec \rho_2$, if  $\rho_1 \preceq \rho_2$ and $\rho_1 \neq \rho_2$. 

\paragraph{Reinforcement learning.}
We follow the usual convention of modelling the environment as a \emph{Markov Decision Process} (MDP) $\mathcal{M} = (\S, \A, \S_0, \P, \R)$, where $\S$ and $\A$ are finite sets of states and actions respectively; $\S_0 \subseteq \S$ is the set of initial states; $\P : \S \times \A \to D(\S)$ is a (generally unknown) transition kernel; and $\R : \S \times \A \times \S \to \mathbb{R}$ is a reward function.
A solution for the RL problem can take the form of a (memoryless, stochastic) policy $\pi : \S \to D(\A)$, which induces a {\em Markov chain} $\mathcal{M}_{\pi}$ with transition kernel
$\P_\pi(s' | s) = \sum_{a \in \A} \pi(a | s)\P(s' | s,a)$. 
We let $\Pi$ denote the set of memoryless policies on $\S$.
For a given discount factor $\gamma \in [0,1]$, the {\em value function} of policy $\pi$ is the {\em expected discount return}
$V^{\mathcal{M}}_{\pi}(s) = \mathbb{E}^{\mathcal{M}}_{\pi} \left[ \sum^{\infty}_{t=0} \gamma^t \R(s_t, a_t, s_{t+1}) \mid s_0 = s \right]$, where $\mathbb{E}^{\mathcal{M}}_{\pi}$ denotes the expected value of following $\pi$ in $\mathcal{M}$. 
Similarly, the {\em state-action value function} or Q-function, is defined as $Q^\mathcal{M}_{\pi}(s, a) = \mathbb{E}^{\mathcal{M}}_{\pi}[\sum^{\infty}_{t=0}\gamma^t R(s, a, s') \mid s_0 = s, a_0 = a]$. Hereafter 
we drop the superscript $\mathcal{M}$ when it is clear from the context. 
The aim of RL is to find an optimal policy
$\pi^*$ such that for all $s \in \S$, we have $\pi^*(s) = \arg\sup_{\pi \in \Pi} V_{\pi}(s)$. We write, $V^* = \max_{\pi} V_{\pi}$ and $Q^* = \max_{\pi} Q_{\pi}$ to denote the optimal value functions. Note that there always exists an optimal policy in $\Pi$ \citep{puterman2014markov}.

\paragraph{Safety semantics.}
We consider a set $\AP$ of {\em atomic propositions} (or atoms). Then,
$\Sigma = P(\AP)$ is the alphabet over $\AP$ (e.g., if $\AP = \{a, b\}$, then $P(\AP)=\{ \varnothing, \{a\}, \{b\}, \{a, b\}\}$). The \emph{labelling function} $\L : \S \to \Sigma$ maps states to sets of atoms (intuitively, those that hold in a given state). A {\em path} is a sequence  $\rho = s_0s_1\ldots \in \S^{\infty}$ of states which gives rise to a corresponding {\em trace} $\tau(\rho) = \L(s_0)\L(s_1)\ldots \in \Sigma^{\infty}$. By standard probability theory, the transition kernel $\P_\pi$ for a policy $\pi$ in $\mathcal{M}$ induces a probability measure $\Pr^\mathcal{M}_\pi$ over paths and traces in the Markov chain $\mathcal{M}_{\pi}$ \citep{baier2008principles}.

\textbf{Regular safety property.} A linear-time property $P_{\text{safe}} \subseteq \Sigma^\omega$ is a \emph{safety property} if every violating trace has a finite \emph{bad prefix}, i.e., a finite word that cannot be extended to any trace in $P_{\text{safe}}$. A safety property is \emph{regular} if its set of bad prefixes forms a regular language. Equivalently, every regular safety property can be represented by a deterministic finite automaton (DFA) $\mathcal{D} = (\Q, \Sigma, \Delta, \Q_0, \F)$ that accepts exactly its bad prefixes. Here $\Q$ is a finite set of states with initial state $\Q_0 \in \Q$; $\Sigma$ is a finite alphabet; $\Delta : \Q \times \Sigma \to \Q$ is the deterministic transition function;
and $\F \subseteq \Q$ is the set of accepting states. A trace satisfies the safety property iff
no finite prefix of the trace is accepted by $\mathcal{D}$. It is well know that for any LTL safety property there is a regular language that contains its bad prefixes \citep{baier2008principles}. Therefore, \textsc{RecReg} can be straightforwardly applied to LTL safety constraints. 

\textbf{Safety probability.} Given an MDP $\mathcal{M}$ with labelling function $\L$, the probability of satisfying a regular safety property under policy $\pi$ from state $s \in \S$ is defined as the probability that the induced traces satisfies the safety property, i.e.,
\begin{equation}
    \mathbb{P}_\pi(s \models P_{\text{safe}}) := \textstyle{\Pr^{\mathcal{M}}_\pi} \left( \{ \rho \in \S^\omega \mid \tau(\rho) \in P_{\text{safe}},\ \rho[0] = s \} \right) \label{eq:safetyprobability}
\end{equation}
Equivalently, using the characterization via bad prefixes and the DFA $\mathcal{D}$ recognizing them, this is the probability that no finite prefix of the generated trace is accepted by $\mathcal{D}$. That is,
\begin{equation}
    \mathbb{P}_\pi(s \models P_{\text{safe}}) = \textstyle{\Pr^{\mathcal{M}}_\pi} \left(\{ \rho \in \S^\omega \mid \forall\, \rho' \prec \rho,\ \tau(\rho') \notin \mathcal{L}(\mathcal{D}),\ \rho[0] = s \} \right)
\end{equation}
where $\mathcal{L}(\mathcal{D})$ denotes the language of words accepted by $\mathcal{D}$. Intuitively $\mathbb{P}_\pi(s_0 \models P_{\text{safe}})$ measures the probability that from $s_0$ following $\pi$ the system never violates $P_\text{safe}$ in finite time.

\textbf{Problem Statement.} Our objective is to synthesize a policy that maximizes reward while guaranteeing that the safety probability remains above a given threshold. Formally,
\begin{equation}
    \max_\pi V_\pi \;\; \text{subject to} \;\; \mathbb{P}_\pi(s_0 \models P_{\text{safe}})  \geq 1 - \epsilon \;\;  \forall s_0 \in \S_0\label{eq:problem}
\end{equation}
where $\epsilon \in (0, 1)$ is a user-defined safety lower bound. For regular safety properties more complex than simple invariants, memoryless policies may not be able to achieve the desired bound or act suboptimally under the constraint. In the next section we present a solution to this issue. 

\section{\textsc{RecReg}: Recovery-based Shielding}
This section details the \textsc{RecReg} approach for solving \eqref{eq:problem}. We start by introducing a \emph{product MDP} construction; defining memoryless policies on an expanded state space that preserves optimality. We then continue by formalizing \emph{probabilistic safe sets} and how the backup policy is trained, followed by details regarding how the task policy is trained including a safety-aware Bellman operator that guarantees convergence. We then detail each of the three operation modes and corresponding guarantees (safety, convergence, optimality), and finally we conclude this section by providing a high level overview of \textsc{RecReg} applied to deep RL.

\textbf{Product MDP and reachability semantics.} We address \eqref{eq:problem} by synchronizing MDP states with the DFA for monitoring bad prefixes. 
\begin{definition}[Product MDP] \label{def:product-mdp}
For a given MDP $\mathcal{M}$ with labelling function $\L$ and DFA $\mathcal{D}$ whose \emph{language} $\mathcal{L}(\mathcal{D})$ is precisely the set of bad prefixes for $P_\text{safe}$, the \emph{product MDP} is denoted $\mathcal{M}^\otimes$, with state space $\S^\otimes = \S \times \Q$, initial states $\S^\otimes_0 = \S_0 \times \Q_0$, transition kernel $\P^\otimes : \S^\otimes \times \A \to D(\S^\otimes)$, where $\P^\otimes((s, q), a, (s', q')) = \P(s, a, s')$ if $q' = \Delta(q, \L(s'))$. The \emph{product labelling function} can also be defined: $\L^\otimes: \S^\otimes \to P(\{\mathrm{unsafe}\})$, where $\mathrm{unsafe} \in \L((s, q)) $ if $q \in \F$; labelling accepting DFA states as ``unsafe''. The action space $\A$ and reward function $\R$ remain unchanged. 
\end{definition}

For convenience we may also write $s^\otimes = (s, q) \in \S^\otimes$ to denote a generic product state and $\rho^\otimes$ to denote a product path. 
A key observation we can now make is that the safety probability in \eqref{eq:safetyprobability} reduces to a \emph{reachability probability} in the product MDP \citep{baier2008principles} (for proof c.f.~Appendix \ref{sec:reachability-proof}). Formally for a policy $\pi$ from state $s \in \S$,
\begin{equation}
    \mathbb{P}_\pi(s \models P_{\text{safe}}) = 1 - \mathbb{P}_\pi(s^\otimes \models \lozenge \mathrm{unsafe})
\end{equation}
where $s^{\otimes} = (s, \Delta(\Q_0, L(s)))$ and $\lozenge \mathrm{unsafe}$ is an LTL formula that reads ``eventually unsafe''. Let,
\begin{equation}
    \mathbb{P}_\pi(s^\otimes \models \lozenge \mathrm{unsafe}) := \textstyle{\Pr^{\mathcal{M}^\otimes}_\pi}\left(\{ \rho^\otimes \in (\S^\otimes)^\omega 
    \mid \rho^\otimes[0] = s^\otimes ,\ \exists\, t \geq 0,\ 
    \mathrm{unsafe} \in \L^\otimes(\rho^\otimes[t]) \}\right)
\end{equation}
be defined as the \emph{reachability probability}, i.e., the probability of reaching a product state labelled ``unsafe'', where $\Pr^{\mathcal{M}^\otimes}_\pi$ is the corresponding path measure in the product MDP. Intuitively the safety probability corresponds to the probability of avoiding ``unsafe'' (accepting) states in the product MDP. This is useful because we can evaluate and quantify the safety probability of a policy $\pi$ from a given state via probabilistic model checking in the product MDP (e.g., via reachability analysis) which is polynomial in the size of the product state space \citep{baier2008principles}. We now write $\Pi^\otimes$ to denote the set of memoryless policies over $\S^\otimes$, in general these are non-Markovian in the original state space $\S$, and the product states provide the minimum amount of information to learn optimal (safe) policies.

\textbf{Probabilistic safe set.} Even if a state $s^\otimes$ is not labelled ``unsafe'' it does not mean that it is operationally safe, since there could be some unavoidable high-probability of eventually reaching an ``unsafe'' state under the current policy. To truly characterize operationally safe states and actions we use the idea of \emph{probabilistic safe sets} and recoverable actions \citep{abate2008probabilistic,thananjeyan2021recovery}.  We define,
\begin{align}
    S(\pi, \epsilon) &:= \{s^\otimes \in \S^\otimes \mid \mathbb{P}_\pi(s^\otimes \models \lozenge \mathrm{unsafe}) < \epsilon \} \\
    S(\pi, a, \epsilon) &:= \{s^\otimes \in \S^\otimes \mid \mathbb{P}^a_\pi(s^\otimes \models \lozenge \mathrm{unsafe}) < \epsilon \}
\end{align}
where $\mathbb{P}^a_\pi$ denotes the probability of paths under $\pi$ but in the first step resolving the non-determinism of the MDP with action $a$. In words, $S(\pi, \epsilon)$ defines the set of states that have a risk level of no more than $\epsilon$ under policy $\pi$, and $S(\pi, a, \epsilon)$ is the state-action analogue. We also define the \emph{maximal probabilistic safe sets} which exploit the best available policy from each state,
\begin{align}
    S^*(\epsilon) &:= \{s^\otimes \in \S^\otimes \mid \textstyle{\inf_\pi}\{ \mathbb{P}_\pi(s^\otimes \models \lozenge \mathrm{unsafe}) \} < \epsilon \} \\
    S^*(a, \epsilon) &:= \{s^\otimes \in \S^\otimes \mid \textstyle{\inf_\pi}\{  \mathbb{P}^a_\pi(s^\otimes \models \lozenge \mathrm{unsafe}) \} < \epsilon \} \label{eq:maximal-state-action}
\end{align}
It is not too hard to see that $S(\pi, \epsilon) \subseteq S^*(\epsilon)$ (resp.~ $S(\pi, a, \epsilon) \subseteq S^*(a, \epsilon)$) for all $\pi \in \Pi^\otimes$ and $\epsilon \in (0, 1)$ (and $a \in \A$), thus $S^*(\epsilon)$ and $S^*(a,\epsilon)$ are maximal, providing the largest set of operationally safe states and actions \citep{abate2008probabilistic}. These probabilistic safe sets can be viewed as safety value functions constrained by $\epsilon$, thus there exists an optimal or maximally safe policy in $\Pi^\otimes$ \citep{abate2008probabilistic}.

\textbf{Backup (safe) policy.} \textsc{RecReg} trains two policies in $\Pi^\otimes$ simultaneously: (i) a task policy $\widehat{\pi}$, trained to maximize expected discounted reward, and (ii) a backup policy $\pi_{\text{safe}}$ used to characterize the \emph{probabilistic safe sets} and provide backup actions in the event of a shield override. We first discuss here how the backup policy is trained, before discussing how both policies are reconciled via action replacement or shielding. In particular, we want to train $\pi_{\text{safe}}$ to compute the \emph{maximally safe policy} and recover the \emph{maximal probabilistic safe sets}. This can be achieved by solving a \emph{stochastic avoidance problem}. To this end we introduce the following cost function,
\begin{equation}
    \mathbf C(s^\otimes, a, {s^\otimes}') = \begin{cases}
        1 & \text{if $\mathrm{unsafe} \in \L^\otimes({s^\otimes}')$}, \\
        0 & \text{otherwise}.
    \end{cases} \label{eq:costfn}
\end{equation}
Rather than standard exponential discounted with $\gamma<1$ we use \emph{eventual discounting} (ED) which assigns equal cost to all violating traces regardless of when violation occurs \cite{voloshin2023eventual}. Under this criteria minimizing the expected return is equivalent to maximizing the probability of reaching an ``unsafe'' state as established by Proposition~\ref{prop:maximally}.

\begin{proposition}[Maximal safe policy] \label{prop:maximally} The optimal policy $\pi^*$ that minimizes the expected return under eventual discounting \cite{voloshin2023eventual} and with rewards given by \eqref{eq:costfn}, is such that $S(\pi^*, \epsilon) = S^*(\epsilon)$ for any $\epsilon \in (0, 1)$ (is maximally safe). 
\end{proposition}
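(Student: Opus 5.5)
The plan is to show that, under eventual discounting with the cost \eqref{eq:costfn}, the expected return of any $\pi \in \Pi^\otimes$ from $s^\otimes$ equals $\mathbb{P}_\pi(s^\otimes \models \lozenge \mathrm{unsafe})$. The optimal (cost-minimizing) policy is then also the reachability-minimizing policy, and the set equality follows directly.

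\textbf{Step 1 (value equals reachability).} Recall the eventual discounting convention of \cite{voloshin2023eventual}: discounting is applied only at timesteps where a nonzero cost is incurred, i.e.\ the return is $\sum_{t} \Gamma_t \mathbf C(s^\otimes_t,a_t,s^\otimes_{t+1})$ with $\Gamma_t = \gamma^{j_t}$, where $j_t$ counts the earlier steps with nonzero cost. Equivalently, and more conveniently here, we make the unsafe product states absorbing with zero cost after the first entry. This does not change $\mathbb{P}_\pi(\cdot \models \lozenge \mathrm{unsafe})$, since that event depends only on the path up to the first unsafe visit.

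With this convention, every path accumulates cost exactly $1$ if it ever reaches an unsafe state and $0$ otherwise. Hence the return is the indicator of the event $\{\exists t\ge 0: \mathrm{unsafe}\in\L^\otimes(\rho^\otimes[t])\}$. The case $t=0$ needs a small convention; it can be handled by counting the initial label, or by noting that such states have reachability $1$ and are excluded from every safe set for $\epsilon<1$. Taking expectations gives $V^{\mathbf C}_\pi(s^\otimes)=\mathbb{P}_\pi(s^\otimes\models\lozenge\mathrm{unsafe})$ for all $\pi$ and all $s^\otimes$.

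\textbf{Step 2 (optimal policy attains the infimum pointwise).} The cost is bounded and the problem is a finite-state minimal reachability problem, i.e.\ a stochastic shortest-path/avoidance problem. By the standard existence result cited in the paper \citep{abate2008probabilistic,puterman2014markov,baier2008principles}, there is a memoryless deterministic $\pi^*\in\Pi^\otimes$ that simultaneously minimizes the value at every state. Concretely, $V^{\mathbf C}_{\pi^*}(s^\otimes)=\inf_\pi V^{\mathbf C}_\pi(s^\otimes)$ for all $s^\otimes$, where the infimum ranges over all (even history-dependent) policies. Combining this with Step 1 gives $\mathbb{P}_{\pi^*}(s^\otimes\models\lozenge\mathrm{unsafe})=\inf_\pi \mathbb{P}_\pi(s^\otimes\models\lozenge\mathrm{unsafe})$ for every $s^\otimes$.

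\textbf{Step 3 (set equality).} For any $\epsilon\in(0,1)$, $s^\otimes\in S(\pi^*,\epsilon)$ holds iff $\mathbb{P}_{\pi^*}(s^\otimes\models\lozenge\mathrm{unsafe})<\epsilon$. By Step 2 this holds iff $\inf_\pi\mathbb{P}_\pi(s^\otimes\models\lozenge\mathrm{unsafe})<\epsilon$, which is the defining condition of $S^*(\epsilon)$.

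\textbf{Main obstacle.} The delicate part is Step 2 when $\gamma=1$, i.e.\ undiscounted total cost. The Bellman minimization operator then need not be a contraction, and the optimal value is the least fixed point, not a unique one. Attainment by a memoryless policy also requires care, since one must avoid policies that loop forever in zero-cost states while a better option exists.

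I would handle this by appealing directly to the known result for minimal reachability probabilities in finite MDPs \citep{baier2008principles}. That result gives both a memoryless optimal scheduler and the characterization of $\inf_\pi \mathbb{P}_\pi(\cdot\models\lozenge\mathrm{unsafe})$ as the least fixed point of the Bellman equation, after fixing the value to $0$ on states that cannot reach unsafe states. Step 1 transfers this result to the ED cost formulation verbatim.
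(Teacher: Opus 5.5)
Your proof is correct and follows essentially the same route as the paper: both show that the eventual-discounted cost of any $\pi$ is a positive multiple of $\mathbb{P}_\pi(s^\otimes \models \lozenge\mathrm{unsafe})$, so cost-minimization coincides with reachability-minimization, and the set equality then follows directly from the definitions (the paper phrases this last step as a contradiction). The differences are minor. The paper keeps cost $1$ on the absorbing accepting set and gets the factor $1/(1-\gamma)$, which is exactly what justifies your ``equivalently'' when you normalize to zero cost after first entry. Your Step 2 makes explicit the existence of a single memoryless policy attaining the infimum at every state, which the paper leaves implicit.
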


\textbf{Action replacement (shielding).} We denote the final (shielded) policy that we use during training and deployment as $\pi^\lozenge$. The safety constraint in \eqref{eq:problem} is enforced by action replacement or shielding: filtering actions proposed by the task policy $\widehat{\pi}$, based on our current estimate of the maximally safe set. Formally, fix a sequence of step-wise risk thresholds $\epsilon_0,\epsilon_1, \epsilon_2, \ldots$ satisfying $\epsilon_t \leq \epsilon$ and $\sum_{t=0}^\infty \epsilon_t \leq \epsilon$, the shielded policy is then defined as, 
\begin{equation*}
    \pi^\lozenge(s^\otimes_t) \begin{cases}
        a \sim \widehat{\pi}(\cdot \mid s^\otimes_t) & \text{if $s^\otimes_t \in S(\pi_{\text{safe}}, a ,\epsilon_t)$} 
 \\
        a \sim \pi_{\text{safe}}(\cdot \mid s^\otimes_t) & \text{otherwise.} \label{eq:shieldedpolicy}
    \end{cases}
\end{equation*}
In practice, we can use either a fixed $\epsilon_t=\epsilon/T$ for finite episodes with length $T$ or a convergent series for the infinite horizon guarantees.\footnote{E.g., $\epsilon_t = 6\epsilon/((t+1)^2\pi^2)$ satisfies $\sum_{t=0}^{\infty}\epsilon_t\le \epsilon$.} Intuitively the shielded policy allows the task policy to act whenever the current (product) state lies within probabilistic safe set (under $\pi_{\text{safe}}$). If the condition $s^\otimes_t \in S(\pi_{\text{safe}}, a ,\epsilon_t)$ can be correctly checked then the constraint in \eqref{eq:costfn} is provably satisfied via a simple recursive feasibility argument. Theorem \ref{thm:constraintsatisfaction} establishes the conditions for when this holds.

\begin{theorem}[Constraint satisfaction] \label{thm:constraintsatisfaction} Assume that all initial states satisfy the global safety constraint under the safe policy, i.e., $\S^\otimes_0 \subseteq S(\pi_{\mathrm{safe}}, \epsilon)$. Then if the step-wise risk budgets satisfy $\sum_{t=0}^\infty \epsilon_t \leq \epsilon$, then the shielded policy $\pi^\lozenge$ satisfies $\mathbb{P}_{\pi^\lozenge}(s_0 \models P_{\mathrm{safe}}) \geq 1-\epsilon, \;\; \forall s_0 \in \S_0$.   
\end{theorem}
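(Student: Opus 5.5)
We reduce to reachability in the product MDP and run a recursive-feasibility (union bound over shield decisions) argument. By the reachability identity above, it suffices to show that for every $s_0 \in \S_0$, with $s^\otimes_0 = (s_0, \Delta(\Q_0, \L(s_0)))$, we have $\mathbb{P}_{\pi^\lozenge}(s^\otimes_0 \models \lozenge \mathrm{unsafe}) \leq \epsilon$.

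The plan is to bound the probability that the shielded run ever reaches an unsafe state by the sum of risks incurred each time the task action is accepted. At each time $t$ where $\widehat{\pi}$'s action $a$ is accepted, we have $\mathbb{P}^a_{\pi_{\mathrm{safe}}}(s^\otimes_t \models \lozenge\mathrm{unsafe}) < \epsilon_t$. Thus the ``safety value'' $V_t := \mathbb{P}_{\pi_{\mathrm{safe}}}(s^\otimes_t \models \lozenge \mathrm{unsafe})$ is well controlled along the trajectory.

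\textbf{Martingale step.} The key object is the process $V_t$ along the shielded trajectory. When $\pi_{\mathrm{safe}}$ acts, the Bellman equation for the reachability probability gives $\mathbb{E}[V_{t+1}\mid s^\otimes_t] = V_t$ on non-unsafe states, with unsafe states absorbing at value $1$. When the task action $a$ is accepted, we get instead
\[
\mathbb{E}[V_{t+1} \mid s^\otimes_t] = \mathbb{P}^a_{\pi_{\mathrm{safe}}}(s^\otimes_t \models \lozenge \mathrm{unsafe}) < \epsilon_t .
\]
I would avoid the ``increments'' form, which is too weak. Instead I would use a cleaner decomposition based on the first time the run switches permanently to $\pi_{\mathrm{safe}}$, or more simply a direct induction.

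\textbf{Induction.} Define $U_t$ as the probability, under $\pi^\lozenge$ from the current state, of eventually reaching unsafe. I would prove by induction on a horizon $H$ that the probability of reaching unsafe within $H$ steps is at most $V_0 + \sum_{t<H}\epsilon_t$, or more sharply at most $\max(V_0, \text{stuff})$. The fallback is to show that the event of reaching unsafe is covered by two cases:
\begin{enumerate}
\item[(i)] reaching unsafe while following $\pi_{\mathrm{safe}}$ from a state reached after an accepted step, whose probability at step $t$ is bounded by $\epsilon_t$;
\item[(ii)] reaching unsafe from $s^\otimes_0$ while $\pi_{\mathrm{safe}}$ acts from time $0$, whose probability is bounded by $V_0 < \epsilon$.
\end{enumerate}

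\textbf{Obstacle.} The main obstacle is that the paper's assumption $\S_0^\otimes \subseteq S(\pi_{\mathrm{safe}},\epsilon)$, combined with the budgets $\sum_t \epsilon_t \le \epsilon$, naively yields a bound of $V_0 + \sum_t \epsilon_t \le 2\epsilon$, not $\epsilon$. To obtain exactly $\epsilon$ I would argue that the tightest correct statement is a supermartingale-type bound: $\mathbb{E}[V_{t+1}\mid \mathcal{F}_t] \le \max(V_t, \epsilon_t)$ componentwise. I would then take $\epsilon_0$ to absorb the initial risk, or interpret the check at $t=0$ as already enforcing $V_1$-risk $<\epsilon_0$.

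If exactness fails, I would state the bound as $\sum_t \epsilon_t$ by observing the following. If the task action is accepted at step $0$, the initial risk is replaced by $\epsilon_0$. If instead $\pi_{\mathrm{safe}}$ acts from step $0$, then the $\pi_{\mathrm{safe}}$-branch risk is at most $\epsilon$. I would then handle mixtures via optional stopping on the bounded process $V_{t\wedge\tau}$ together with the accumulated budget.

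Finally, since the unsafe indicator is at most $\liminf V_t$ on absorbing unsafe states, Fatou's lemma (or monotone limits of finite-horizon probabilities) passes the finite-horizon bound to the infinite horizon.
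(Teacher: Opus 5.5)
\paragraph{Verdict.} There is a genuine gap, and it is exactly the step you call the obstacle: the proposal never gets from $V_0+\sum_t\epsilon_t$ down to $\epsilon$, and none of your patches does so.

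\paragraph{What your argument proves.} Your computations are right. On $\pi_{\mathrm{safe}}$-steps $\mathbb{E}[V_{t+1}\mid\mathcal F_t]=V_t$, and on accepted steps it is $<\epsilon_t$. Hence $\mathbb{E}[V_{t+1}\mid\mathcal F_t]\le V_t+\epsilon_t$. Since unsafe product states are absorbing with $V=1$, $\Pr(\text{unsafe by time }T)\le\mathbb{E}[V_T]\le V_0+\sum_{t<T}\epsilon_t$. Letting $T\to\infty$ gives $V_0+\sum_t\epsilon_t<2\epsilon$. This increment form is not ``too weak''; it is the correct bound.

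\paragraph{Why the patches fail.}
\begin{itemize}
\item The inequality $\mathbb{E}[V_{t+1}\mid\mathcal F_t]\le\max(V_t,\epsilon_t)$ cannot be iterated, because $\mathbb{E}[\max(V_t,\epsilon_t)]$ can be far larger than $\max(\mathbb{E}[V_t],\epsilon_t)$. On branches where $V_t$ has fallen to $0$, the task policy can add fresh risk up to $\epsilon_t$, while the branches carrying $V_0$ keep it.
\item Absorbing $V_0$ into $\epsilon_0$ only helps if $\widehat\pi$'s action is accepted at $t=0$. If it is rejected, the risk $V_0$ is incurred while $\pi_{\mathrm{safe}}$ acts, and no budget pays for it.
\item The fallback bound $\sum_t\epsilon_t$ fails for the same reason. $\pi_{\mathrm{safe}}$ acting at step $0$ does not mean it acts forever, so the $\pi_{\mathrm{safe}}$-branch risk and the later accepted-step risks add up.
\end{itemize}

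\paragraph{The statement is false as written.} Take the property $\square\neg u$.
\begin{itemize}
\item From $s_0$, every action leads with probability $p$ to a state $B$ from which $u$ is reached surely, and with probability $1-p$ to a state $G$.
\item At $G$, $\pi_{\mathrm{safe}}$ moves surely to a safe absorbing state. Another action $b$ hits $u$ with probability $r$ and otherwise reaches the same absorbing state.
\item Set $\epsilon=0.1$, $p=0.09$, $\epsilon_0=0.005$, $\epsilon_1=0.09$, the remaining $\epsilon_t>0$ summing to at most $0.005$, $r=0.08$, and $\widehat\pi\equiv b$.
\end{itemize}
All hypotheses hold, and $\pi_{\mathrm{safe}}$ is even maximally safe. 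Action $b$ is rejected at $s_0$ (its risk $p$ exceeds $\epsilon_0$) but accepted at $G$ (since $r<\epsilon_1$). The violation probability is $p+(1-p)r=0.1628>\epsilon$, consistent with your $V_0+\sum_t\epsilon_t$ bound.

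\paragraph{The paper's proof has the same hole.} It splits on the last time $\widehat\pi$ acted and bounds the risk before that time by $\sum_{i<t}\epsilon_i$. This drops the risk accumulated on $\pi_{\mathrm{safe}}$-steps (the $V_0$ term), and it treats a random, path-dependent time as if it were fixed. So do not try to reproduce it.

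\paragraph{What can be proved.} Your argument establishes $\mathbb{P}_{\pi^\lozenge}(s_0\models P_{\mathrm{safe}})\ge 1-V_0-\sum_t\epsilon_t>1-2\epsilon$. To get $1-\epsilon$, you must strengthen the hypothesis to $\mathbb{P}_{\pi_{\mathrm{safe}}}(s^\otimes_0\models\lozenge\mathrm{unsafe})+\sum_t\epsilon_t\le\epsilon$. For example, assume $\S^\otimes_0\subseteq S(\pi_{\mathrm{safe}},\epsilon/2)$ and $\sum_t\epsilon_t\le\epsilon/2$.
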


Importantly the proof for Theorem \ref{thm:constraintsatisfaction} is \emph{policy-agnostic}, i.e., it works for an arbitrary $\widehat \pi$. Furthermore, we do not necessary need $\pi_{\text{safe}}$ to be maximally safe, only ``good enough'' from the initial states. Although as $\pi_{\text{safe}}$ converges to the maximally safe policy it enlarges $S(\pi_{\text{safe}}, a ,\epsilon_t)$. The crux of the problem is how to reliably check $s^\otimes_t \in S(\pi_{\text{safe}}, a ,\epsilon_t)$, this is the key difference between the three complementary modes of \textsc{RecReg}, with the \textsc{Exact} mode directly computing $S(\pi_{\text{safe}}, a ,\epsilon_t)$ via probabilistic model checking to the \textsc{ModelFree} mode estimating it via online Bellman updates. 

This is where our method fundamentally departs from prior work, e.g., Recovery RL \citep{thananjeyan2021recovery}. While Recovery RL also uses a separate
task policy and backup or recovery policy, its recovery set is defined by a
learned safety critic for the composite policy (i.e., $\pi^\lozenge$), and the task
policy is otherwise trained to maximize task reward in the modified MDP
induced by action replacement. Consequently, the safe region used for
filtering actions need not coincide with $S^*(a,\epsilon)$, and may be a strict subset of it, leading to unnecessary interventions and an over conservative shield. In contrast with \textsc{RecReg} as $\pi_{\mathrm{safe}}$
converges, the shield approaches the largest set of actions that can be
certified safe under the given risk budget. This distinction is important
when task reward and safety are conflicting objective and allows us to obtain principled convergence and optimality guarantees for the task policy later on.

\textbf{Task policy.} Another important detail missing in prior work is how corrected actions are used in the Bellman update for the task policy. During early learning if the shield incorrectly allows unsafe actions to pass through, the task policy may get stuck proposing high-reward but unsafe actions, unless we add a corrective mechanism. This phenomenon has been observed in robotics environments \cite{goodall2024leveraging}.  Prior work, e.g., Recovery RL \citep{thananjeyan2021recovery} manages this by treating the proposed action as the executed action, essentially viewing the action interventions as part of the environment. This doesn't resolve the issue however, rather we train the task policy with a \emph{safety-aware Bellman operator} defined as,
\begin{equation}
(\mathbf{B}_{\textbf{safe}}Q)(s^{\otimes}, a) :=\begin{cases}
        \mathbb{E}^{\mathcal{M}^\otimes} \left[ \R(s, a, s') + \gamma \max_{a'} Q({s^{\otimes}}', a')\right] & s^{\otimes}\in S^*(a,\epsilon_t)\\
        0 & \text{otherwise}\\
    \end{cases}\label{eq:modifiedbellman}
\end{equation}
where $S^*(a,\epsilon)$ is the maximal state-action probabilistic safe set \eqref{eq:maximal-state-action}. $\mathbf{B}_{\textbf{safe}}$ essentially mimics the effect of action replacement but on value propagation and under reasonable assumptions Proposition \ref{prop:bellmanoperator} establishes that this Bellman operator converges to a Q-function that is reward-optimal under the action restrictions imposed by thr maximal safe set $\S^*(a, \epsilon_t)$. 

\begin{proposition} \label{prop:bellmanoperator}
If (i) $\gamma\in[0,1)$ and (ii) $0\le \R(s,a,s')\le R_{\max}<\infty$ for all $(s, a, s') \in \S \times \A \times \S$, then $\mathbf{B}_{\textbf{safe}}$ is a $\gamma$-contraction under the sup-norm, $\|\cdot\|_{\infty}$, and converges to the unique fixed point $Q^*_{\text{safe}}$, the optimal Q-function subject to the action restrictions imposed by $S^*(a, \epsilon_t)$.
\end{proposition}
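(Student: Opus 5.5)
The plan is to treat $\mathbf{B}_{\textbf{safe}}$ as the ordinary Bellman optimality operator on the product MDP, with one modification: every state-action pair outside the safe set is sent to the constant $0$. I fix the thresholds so that the indicator $\chi(s^\otimes,a) := \mathds{1}\{s^\otimes \in S^*(a,\epsilon_t)\}$ is a fixed function of $(s^\otimes,a)$. If $\epsilon_t$ depends on time, I would either fold $t$ into the state or treat a fixed $\epsilon_t$ per application. With this, the operator takes the form
\[
(\mathbf{B}_{\textbf{safe}}Q)(s^\otimes,a) = \chi(s^\otimes,a)\,\mathbb{E}^{\mathcal{M}^\otimes}\bigl[\R(s,a,s') + \gamma \max_{a'} Q({s^\otimes}',a')\bigr].
\]
It acts on the Banach space $(\mathbb{R}^{\S^\otimes\times\A},\|\cdot\|_\infty)$. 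The space is finite-dimensional because $\S$, $\Q$ and $\A$ are finite, so it is complete. Assumption (ii) guarantees that $\mathbf{B}_{\textbf{safe}}$ maps bounded functions to bounded functions.

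\textbf{Contraction.} Take $Q_1,Q_2$ and fix a pair $(s^\otimes,a)$.
\begin{itemize}
\item If $\chi(s^\otimes,a)=0$, both images equal $0$, so the difference is $0$.
\item If $\chi(s^\otimes,a)=1$, the reward terms cancel. The difference is then $\gamma\,\mathbb{E}[\max_{a'}Q_1({s^\otimes}',a') - \max_{a'}Q_2({s^\otimes}',a')]$. Using $|\max f - \max g|\le \max|f-g|$, this is bounded in absolute value by $\gamma\|Q_1-Q_2\|_\infty$.
\end{itemize}
Taking the supremum over pairs gives $\|\mathbf{B}_{\textbf{safe}}Q_1-\mathbf{B}_{\textbf{safe}}Q_2\|_\infty\le\gamma\|Q_1-Q_2\|_\infty$. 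Since $\gamma<1$ by assumption (i), the Banach fixed-point theorem gives a unique fixed point $Q^*_{\text{safe}}$, and the iterates $\mathbf{B}_{\textbf{safe}}^kQ_0$ converge to it geometrically from any bounded $Q_0$.

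\textbf{Identifying the fixed point.} This step needs the most care. I would construct an auxiliary MDP $\mathcal{M}^\otimes_{\text{safe}}$ from $\mathcal{M}^\otimes$. Each restricted pair $(s^\otimes,a)$ with $\chi=0$ is redirected to an absorbing sink state $s_\bot$, with reward $0$ and all actions at $s_\bot$ giving reward $0$. This is precisely how action replacement truncates value: a disallowed action earns nothing thereafter.

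The ordinary Bellman optimality operator of $\mathcal{M}^\otimes_{\text{safe}}$, restricted to non-sink states with $Q(s_\bot,\cdot)\equiv 0$, coincides with $\mathbf{B}_{\textbf{safe}}$. Standard MDP theory \citep{puterman2014markov} then shows that its unique fixed point is the optimal Q-function of $\mathcal{M}^\otimes_{\text{safe}}$. That Q-function equals the supremum of expected discounted return over policies whose restricted choices yield zero continuation. In this sense $Q^*_{\text{safe}}$ is optimal subject to the restrictions imposed by $S^*(a,\epsilon_t)$.

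Nonnegativity of rewards, from (ii), ensures $Q^*_{\text{safe}}\ge 0$. Hence $\max_{a'}$ never strictly prefers a zeroed restricted action over an allowed one with positive value. So the greedy policy w.r.t.\ $Q^*_{\text{safe}}$ can be taken to select allowed actions whenever one exists, which justifies the interpretation.

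The main obstacle I anticipate is making "optimal subject to the action restrictions" precise. Two issues need handling: the time-dependence of $\epsilon_t$, and what happens in states with no allowed action, where every value is $0$. I would settle both through the sink-state construction above, with the time index made part of the state, or thresholds fixed, so that the operator is stationary.
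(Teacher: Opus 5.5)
Your proof is correct. The contraction case split and the appeal to Banach's theorem are the same as in the paper; the two arguments differ in how the fixed point is identified.

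You build an auxiliary MDP in which restricted pairs jump to a zero-reward absorbing sink. You note that its Bellman optimality operator coincides with $\mathbf{B}_{\textbf{safe}}$ once $Q(s_\bot,\cdot)\equiv 0$, which any fixed point must satisfy. You then import optimality from standard MDP theory. The paper instead re-derives optimality inline, in three steps:
\begin{itemize}
\item it shows $Q^*_{\text{safe}}$ vanishes on restricted pairs;
\item for every policy $\pi$ that puts no mass on restricted actions, it proves $(\mathbf{B}_{\textbf{safe}})^nQ_0\ge(\mathbf{B}_\pi)^nQ_0$, giving $Q^*_{\text{safe}}\ge Q_\pi$;
\item it checks Bellman consistency of the greedy policy, concluding that on allowed pairs $Q^*_{\text{safe}}$ is the maximum of $Q_\pi$ over such policies.
\end{itemize}

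The paper's route is self-contained and states that characterization explicitly. Yours is shorter and handles the edge cases more cleanly. Your sink semantics gives ``subject to the action restrictions'' a precise meaning: a restricted choice earns zero continuation, mirroring action replacement. It also covers states with no allowed action, and every accepting product state is one. At such states the paper's class of safe policies, read literally, is empty, and the paper falls back on an informal tie-breaking remark.

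Relatedly, the paper's comparison lemma needs $\mathbf{B}_\pi$ to be zeroed on restricted pairs, which as written it is not; your construction builds that convention in. Your nonnegativity remark then recovers the paper's characterization, since the greedy policy can be taken to pick allowed actions wherever they exist. Finally, your handling of the time-dependence of $\epsilon_t$ addresses a point the paper's proof leaves implicit.
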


In the tabular setting a Q-learning update for $\widehat \pi$ can be derived from a stochastic approximation of the fixed point of $\mathbf{B}_{\textbf{safe}}$. When $\pi_{\text{safe}}$ is also being updated simultaneously, $S^*(a, \epsilon_t)$ must be replaced by $S(\pi, a,\epsilon_t)$ in \eqref{eq:modifiedbellman}. As $\pi_{\text{safe}}$ converges to the maximally safe policy, then $S(\pi, a,\epsilon_t)$ will converge to $S^*(a, \epsilon_t)$ letting the Q-learning update for $\widehat \pi$ truly become an approximation of $\mathbf{B}_{\textbf{safe}}$. This is essentially the high-level reasoning used in the proof of Theorem \ref{thm:convergence} established in the next section.

\subsection{\textsc{RecReg} for Tabular RL} In tabular RL, both the state space $\S$ and action space $\A$ are assumed to be of a manageable size. This setting admits explicit representations of value functions, exact or empirical transition models, and tractable verification of safety over finite horizons via probabilistic model checking \cite{baier2008principles}. We make the following central assumption for practical and theoretical reasons, which establishes that unsafe actions can be detected within a finite-horizon.

\begin{assumption} \label{ass:finite-horizon} There exists some finite horizon $H^* < \infty$ such that if $\mathbb{P}_\pi^a(s^\otimes \models \lozenge\mathrm{unsafe}) \geq \epsilon_t$ then  $\mathbb{P}_\pi^a(s^\otimes \models \lozenge^{\leq{H^*}}\mathrm{unsafe}) \geq \epsilon_t$ (where $\lozenge^{\leq{H^*}}$ denotes bounded reachability).
\end{assumption}

\begin{wrapfigure}{R}{0.5\textwidth}
\vspace*{-2.0\intextsep}
\begin{minipage}{0.5\textwidth}
\begin{algorithm}[H]
\caption{\textsc{RecReg} (Tabular)}
\label{alg:tabular}
\raggedright
\textbf{Input:} $\mathcal{M}$, $\mathcal{D}$ $\epsilon_0,\epsilon_1, \epsilon_2,\ldots$ \\
\textbf{Initialize:} $Q_{\text{task}}$, $Q_{\text{safe}}$, $Q$, $\widehat{\P}$
\begin{algorithmic}
\For{{\bf each episode}}
\State Observe $s^{\otimes}_0$
\For{$t = 0,1,2,\ldots$}
\State Sample action $a_t \sim \widehat{\pi}(\cdot \mid s^{\otimes}_t)$.
\If{$s^\otimes_t \not\in S(\pi, a_t, \epsilon_t)$}
\State Update $Q_\text{task}(s_t^\otimes, a_t)$ via \eqref{eq:modifiedbellman}
\State Resample action $a_t \sim \pi_{\text{safe}}(\cdot \mid s_t^\otimes)$.
\EndIf
\State Execute $a_t$ and observe $(s_t^\otimes, r_t, c_t)$.
\State Update $Q_\text{task}(s_t^\otimes, a_t)$ via \eqref{eq:modifiedbellman}
\State Update $Q_\text{safe}(s_t^\otimes, a_t)$ with $c_t$ and ED \cite{voloshin2023eventual}.
\If{\textsc{ModelBased}}
    \State Update $\widehat{\P}$ with $(s, a, s')$
\ElsIf{\textsc{ModelFree}}
    \State Update $Q(s_t^\otimes, a_t)$ via \eqref{eq:finitiebellman}
\EndIf
\EndFor
\EndFor
\end{algorithmic}
\end{algorithm}
\end{minipage}
\vspace*{-1.0\intextsep}
\end{wrapfigure}

Similar assumptions have been made in prior work \citep{thomas2021safe} which reflect many real-world invariants, such as, a car that has exceeded its minimum stopping distance will inevitably collide with an obstacle in a bounded time, and a submarine that fails to resurface will exhaust oxygen within a finite limit. In practice $H^*$ can be treated as a hyperparameter, increasing it until the desired safety probability is met, or based on prior knowledge of the environment dynamics. Often the maximum distance from initial state to accepting state in the DFA is a good heuristic. Once the minimum $H^*$ is reached, increasing it further does not affect any guarantees, but may incur additional computational overhead.

Both the task policy and safe policy are represented by Q-tables: $Q_{\text{task}}$ and $Q_{\text{safe}}$. At each step, an action proposed by the task policy is either executed or replaced by a backup action according to $\pi^\lozenge$. Any rejected action is used to update $Q_{\text{task}}$ with the masked Bellman operator $\mathbf{B}_{\textbf{safe}}$, accumulating zero value. The executed action is used to update all Q-tables correspondingly. Algorithm \ref{alg:tabular} provides a complete overview of the procedure and we continue by addressing each operation mode individually.

\textbf{Exact.} In \textsc{Exact} mode, the transition kernel $\P$
of is assumed to be known ($\R$ still unknown). The maximally safe policy and maximal probabilistic safe set $S^*(a, \epsilon)$ are computed offline via dynamic programming and probabilistic model checking in the product MDP \citep{baier2008principles}. During learning the task policy can be directly updated with $\mathbf{B}_{\textbf{safe}}$ with $S^*(a, \epsilon_t)$, ensuring optimality and convergence and satisfying the constraint in \eqref{eq:problem} immediately from the beginning of training.

\textbf{Model-based.} In \textsc{ModelBased} mode, the transition kernel is unknown
and estimated via empirical visit counts. For a given dataset of transitions $D = \langle (s_i, a_i, s_i')\rangle^{n}_{i=1}$, let $\widehat \P(s' \mid s, a) = \vert \{ (s, a, s') \in D \} \vert/\vert \{ (s, a) \in D \}\vert$ denote the empirical transition kernel. Under Assumption \ref{ass:finite-horizon} finite horizon reachability probabilities computed with $\widehat \P$ via exact or statistical probabilistic model checking can be used to check $s_t^\otimes \in S(\pi, a,\epsilon_t)$. Proposition \ref{prop:statisticalmc} establishes the conditions for this fact. 

\begin{proposition} \label{prop:statisticalmc} Let $\epsilon' \leq \epsilon_t$ and suppose the total variation distance between $\P$ and $\widehat{\P}$ is bounded by $\epsilon' / H$, i.e., $\|\widehat \P(\cdot\mid s,a)-\P(\cdot\mid s,a)\|_1\le \epsilon'/H^*$ for all $(s,a)$.\footnote{For distributions $\mu_1$ and $\mu_2$ on the finite set $\X$ we define $\Vert \mu_1(\cdot) - \mu_2(\cdot) \Vert_1 := \frac{1}{2}\sum_{x \in \X} \vert \mu_1(x) - \mu_2(x) \vert$} Then exact or statistical model checking with $\widehat{\P}$ allows $(s_t^\otimes, a_t)$ only if $s^\otimes_t \in S(\pi_{\mathrm{safe}},a_t, \epsilon_t)$ with probability (w.p.)~1 or w.p.~at least $(1-\delta)$ respectively.
\end{proposition}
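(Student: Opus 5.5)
The plan is to combine a standard \emph{simulation lemma} for finite-horizon reachability with Assumption~\ref{ass:finite-horizon}. I read the model-checking procedure as allowing $(s^\otimes_t,a_t)$ only when the bounded reachability probability computed under $\widehat{\P}$ clears the threshold with a margin. For the exact variant, the rule is: allow iff $\widehat{p}_{H^*}(s^\otimes_t,a_t) := \widehat{\mathbb{P}}^{a_t}_{\pi_{\mathrm{safe}}}(s^\otimes_t \models \lozenge^{\leq H^*}\mathrm{unsafe}) < \epsilon_t - \epsilon'$. The statement leaves this margin implicit and only requires $\epsilon'\le\epsilon_t$; if the paper's test is literally $\widehat p_{H^*}<\epsilon_t$, I would make this tightening explicit. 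The argument then has three steps: lift the TV bound to the product MDP, bound the error of $H^*$-step reachability by $\epsilon'$, and pass from bounded to unbounded reachability via Assumption~\ref{ass:finite-horizon}.

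\textbf{Step 1 (product lifting).} Since $q'=\Delta(q,\L(s'))$ is a deterministic function of $(q,s')$, the product kernel $\P^\otimes(\cdot\mid (s,q),a)$ is the pushforward of $\P(\cdot\mid s,a)$ under the injective map $s'\mapsto(s',\Delta(q,\L(s')))$. The same holds for $\widehat{\P}^\otimes$, so
\[
\|\widehat{\P}^\otimes(\cdot\mid s^\otimes,a)-\P^\otimes(\cdot\mid s^\otimes,a)\|_1\le \epsilon'/H^* \quad\text{for all } (s^\otimes,a).
\]
I would also make $\mathrm{unsafe}$-labelled states absorbing in both models. This does not change $\lozenge^{\le H^*}\mathrm{unsafe}$, and it turns bounded reachability into the expectation of the indicator of $\{\rho^\otimes[H^*]\text{ unsafe}\}$ over length-$H^*$ paths.

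\textbf{Step 2 (simulation lemma).} Fix the first action $a_t$ and then $\pi_{\mathrm{safe}}$. Let $\mu_H$ and $\widehat\mu_H$ be the induced distributions over product paths of length $H$ under $\P^\otimes$ and $\widehat\P^\otimes$. Coupling the two processes step by step, or telescoping by swapping one kernel at a time, gives $\|\mu_H-\widehat\mu_H\|_1\le\|\mu_{H-1}-\widehat\mu_{H-1}\|_1+\max_{s^\otimes,a}\|\P^\otimes-\widehat\P^\otimes\|_1$. The policy term cancels because both processes use the same $\pi_{\mathrm{safe}}$. By induction, $\|\mu_{H^*}-\widehat\mu_{H^*}\|_1\le H^*\cdot\epsilon'/H^*=\epsilon'$. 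Since the reachability event is a subset of path space, this yields
\[
|p_{H^*}-\widehat p_{H^*}|\le\epsilon',
\]
using the paper's normalised (half-$\ell_1$) TV. I expect this to be the main technical step. The care needed is in conditioning on a common history: I would use the maximal coupling at each step and a union bound over the $H^*$ steps at which the coupled trajectories may first decouple.

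\textbf{Step 3 (conclusion).} In the exact case, allowing the action gives $p_{H^*}\le\widehat p_{H^*}+\epsilon'<\epsilon_t$. By the contrapositive of Assumption~\ref{ass:finite-horizon}, $\mathbb{P}^{a_t}_{\pi_{\mathrm{safe}}}(s^\otimes_t\models\lozenge\mathrm{unsafe})<\epsilon_t$, i.e.\ $s^\otimes_t\in S(\pi_{\mathrm{safe}},a_t,\epsilon_t)$, and this holds w.p.~1.

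In the statistical case, $\widehat p_{H^*}$ is replaced by a Monte Carlo estimate $\tilde p$ from $N$ independent $H^*$-step rollouts in $\widehat{\P}^\otimes$. By Hoeffding, with $N\ge \ln(1/\delta)/(2\kappa^2)$ we have $\widehat p_{H^*}\le\tilde p+\kappa$ w.p.\ at least $1-\delta$, where $\kappa$ is the extra margin built into the test ($\tilde p<\epsilon_t-\epsilon'-\kappa$). On that event the exact-case chain applies, giving the claim w.p.\ at least $1-\delta$. A one-sided bound suffices because only false acceptances matter.
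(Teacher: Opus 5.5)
Your proposal is correct and follows essentially the paper's route. Both arguments use a linear-in-$H^*$ error-amplification (simulation) bound for $H^*$-step path distributions, a conservative acceptance threshold with margin $\epsilon'$, Hoeffding's inequality for the statistical variant, and the contrapositive of Assumption~\ref{ass:finite-horizon} to pass from bounded to unbounded reachability.

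The one real difference is the error budget. The paper claims a modelling error of only $\epsilon'/2$ (via an averaged path distribution) and spends the other $\epsilon'/2$ on sampling error. Your direct bound $|p_{H^*}-\widehat p_{H^*}|\le\epsilon'$ plus a separate Hoeffding margin $\kappa$ is the safer accounting. A per-step leak of $\epsilon'/H^*$ into unsafe states over $H^*$ steps can produce a gap of nearly $\epsilon'$, so the paper's $\epsilon'/2$ does not hold in general.
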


Furthermore to ensure the conditions of Proposition \ref{prop:statisticalmc} are met we can, we can obtain a high probability safety guarantee for the \textsc{ModelBased} mode, based on visit counts for each state-action pair. 

\begin{theorem}\label{thm:safetyguarantees} Let $\epsilon'<\epsilon_t$ and $\delta_t=2\delta$. Under the conditions in Theorem \ref{thm:constraintsatisfaction} and Assumption \ref{ass:finite-horizon}, and if each $(s,a) \in \S \times \A$ is visited at least $O((H^*/\epsilon')^2 |\S| \log(|\A||\S|/\delta))$ times, then by using exact (or statistical) model checking with $\widehat{\P}$, with shielded policy $\pi^\lozenge$ satisfies the step-wise risk level $\epsilon_t$ w.p.~at least $(1-\delta_t)$, and thus satisfies the global safety constraint in \eqref{eq:problem} w.p.~at least $(1 - \sum_{t=0}^{\infty} \delta_t)$.
\end{theorem}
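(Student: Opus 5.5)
The argument has three layers. First, a concentration step gives an accurate empirical kernel. Proposition~\ref{prop:statisticalmc} then turns that accuracy into a correct shield at each step. Finally, the recursive-feasibility argument of Theorem~\ref{thm:constraintsatisfaction} is combined with a union bound over the steps.

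\textbf{Step 1: concentration of the empirical kernel.} Fix a pair $(s,a)$ that has been visited $n$ times, and use the standard $L_1$ concentration inequality for the empirical distribution over $|\S|$ outcomes (Weissman et al.). It gives
\[
\Pr\bigl(2\|\widehat\P(\cdot\mid s,a)-\P(\cdot\mid s,a)\|_1 \ge \eta\bigr) \le (2^{|\S|}-2)\exp(-n\eta^2/2),
\]
where the factor $2$ reflects the half-$L_1$ convention in the footnote. We require the TV distance to be at most $\epsilon'/H^*$, so we set $\eta = 2\epsilon'/H^*$. We also allocate failure probability $\delta/(|\S||\A|)$ to each pair. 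Solving for $n$ then gives
\[
n \gtrsim (H^*/\epsilon')^2\bigl(|\S|\log 2 + \log(|\S||\A|/\delta)\bigr),
\]
which is dominated by $O((H^*/\epsilon')^2|\S|\log(|\A||\S|/\delta))$. A union bound over all pairs shows that, with probability at least $1-\delta$, the hypothesis of Proposition~\ref{prop:statisticalmc} holds simultaneously for every $(s,a)$. The product kernel $\P^\otimes$ inherits the same TV bound, because the DFA component is deterministic.

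\textbf{Step 2: correctness of the shield at each step.} On the good event from Step 1, Proposition~\ref{prop:statisticalmc} applies together with Assumption~\ref{ass:finite-horizon}. The assumption is what lets an $H^*$-horizon check certify the unbounded reachability probability. In the exact model-checking case, every action the shield allows satisfies $s_t^\otimes\in S(\pi_{\mathrm{safe}},a_t,\epsilon_t)$ with probability $1$. In the statistical case, this holds with probability at least $1-\delta$, conditional on the good event. Combining the two failure sources, the step-$t$ check is correct with probability at least $1-2\delta=1-\delta_t$. This is exactly the step-wise claim of the theorem.

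\textbf{Step 3: global guarantee.} We condition on the event $E$ that every step's check is correct. On $E$, the proof of Theorem~\ref{thm:constraintsatisfaction} applies verbatim, since that proof is policy-agnostic and only uses the property that every executed task action satisfies $s_t^\otimes\in S(\pi_{\mathrm{safe}},a_t,\epsilon_t)$. It yields safety probability at least $1-\epsilon$. A union bound gives $\Pr(E^c)\le\sum_t\delta_t$, which produces the stated guarantee of $1-\sum_t\delta_t$. The intended reading is that the overall safety guarantee holds with confidence $1-\sum_t\delta_t$, rather than as a product of the two probabilities.

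\textbf{Expected obstacle.} The delicate point is Step 1: visit counts are random and data-dependent, so the samples are not i.i.d. with a fixed $n$. I would handle this in the standard way, by viewing the successive next-states observed from each $(s,a)$ as an i.i.d. stream from $\P(\cdot\mid s,a)$ (the Strong Markov or skeleton argument). The bound is then applied at the deterministic threshold $n$, with any extra samples only tightening it. A second, minor point is that Step 1's event is shared across all $t$, so the $\sum_t\delta_t$ bound is loose. That looseness is conservative, so the stated bound remains valid.
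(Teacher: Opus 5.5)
Your proposal is correct and follows the paper's proof essentially step for step. The steps are the same:
\begin{itemize}
\item Weissman's $L_1$ concentration with a per-pair union bound (the paper's Lemma~\ref{lem:tvdistance} with $p=\epsilon'/H^*$).
\item Proposition~\ref{prop:statisticalmc}.
\item A union bound over the concentration and model-checking failures, giving $\delta_t=2\delta$.
\item The recursive-feasibility argument of Theorem~\ref{thm:constraintsatisfaction}, with a union bound over steps.
\end{itemize}

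Your remark on data-dependent visit counts goes beyond the paper, which ignores the issue. However, ``extra samples only tighten it'' is not true pathwise. You should either freeze $\widehat\P$ after the first $n$ samples of each pair, or union-bound over the possible counts $N\ge n$. The union bound costs only about $(H^*/\epsilon')^2\log(H^*/\epsilon')$ additional visits, up to constants.
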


\textbf{Model-free.} In \textsc{ModelFree} mode, the \emph{finite-horizon safety value function} is estimated via,
\begin{equation}
    (\mathbf{B}_\text{fin} Q^{(i)})(s^\otimes, a) :=\mathbb{E}_\pi^{\mathcal{M}^\otimes}\left[ \max\{\mathbf C(s^\otimes, a, {s^\otimes}'), Q^{(i+1)}({s^{\otimes}}', \arg\min Q_\text{safe}({s^\otimes}')) \}\right] \label{eq:finitiebellman}
\end{equation}
for $i < H^*$. Intuitively by applying a stochastic approximation for the Bellman operator $\mathbf{B}_\text{fin}$ we can directly estimate the finite-horizon reachability probability (under $\pi_{\text{safe}}$) by construction of the cost function $\mathbf C$ and the $\max$ operator, avoiding model checking entirely. Then checking $s_t^\otimes \in S(\pi, a,\epsilon_t)$ is implemented as a simple threshold check, i.e., $Q^{(0)}(s^\otimes, a) \leq \epsilon_t$. In Theorem \ref{thm:convergence} we establish unified convergence for all operation modes of \textsc{RecReg}, the statement is generic but complete and broken down for all operation modes in Appendix \ref{sec:convergence-proof}.

\begin{theorem} \label{thm:convergence} Assume: (i) $\gamma\in[0,1)$ and rewards are bounded and non-negative;
(ii) the step-sizes for each Q-approximation satisfy standard stochastic approximation conditions
(e.g., $\sum_t \alpha_t=\infty$ and $\sum_t \alpha_t^2<\infty$); (iii) the step-wise risk levels and initial states satisfy the conditions for Theorem \ref{thm:constraintsatisfaction}; and (iv) Assumption \ref{ass:finite-horizon} holds. Then, in all modes the task Q-iterates ($Q_\textrm{task}$) produced by stochastic approximation of $\mathbf{B}_{\mathbf{safe}}$ converge w.p.~1 to the unique fixed point $Q^*_{\mathrm{safe}}$, i.e., the optimal Q-function subject to the action restrictions imposed by $S^*(a, \epsilon_t)$. And the deployed policy $\pi^\lozenge$ satisfies the global safety constraint in \eqref{eq:problem} w.p.~1.
\end{theorem}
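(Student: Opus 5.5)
The plan is a two-timescale stochastic-approximation argument. The first part establishes that the shield's safe sets converge. The second part shows that, once the sets are frozen, the task Q-iterates converge to the fixed point of $\mathbf{B}_{\mathbf{safe}}$ by Proposition~\ref{prop:bellmanoperator}. Safety is then handled separately through Theorem~\ref{thm:constraintsatisfaction}.

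\textbf{Step 1: the backup critic converges.} I would first treat $Q_{\text{safe}}$. Under eventual discounting with the $0/1$ cost \eqref{eq:costfn}, its update is an asynchronous stochastic approximation of the undiscounted reachability Bellman operator. Accepting product states are absorbing for cost purposes, which makes this operator a monotone map. I would invoke the standard Q-learning convergence results (Tsitsiklis / Jaakkola et al.) under assumption (ii), giving $Q_{\text{safe}}\to Q^*_{\text{safe-risk}}$ w.p.~1.

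This is where I expect the main obstacle. Without discounting, the operator is not a sup-norm contraction. I would try to recover contraction in one of two ways:
\begin{itemize}
\item restrict attention to the $H^*$-bounded problem using Assumption~\ref{ass:finite-horizon}, where the finite-horizon recursion is well-founded; or
\item use a weighted max-norm contraction under a properness condition, in the style of stochastic shortest path problems.
\end{itemize}
By Proposition~\ref{prop:maximally}, the greedy $\pi_{\text{safe}}$ is eventually maximally safe.

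\textbf{Step 2: the safe sets converge, mode by mode.} Next I would show that the estimated set $S(\pi_{\text{safe}},a,\epsilon_t)$ used by the shield eventually equals $S^*(a,\epsilon_t)$.
\begin{itemize}
\item In \textsc{Exact} mode this holds by construction.
\item In \textsc{ModelBased} mode, infinite visitation of every $(s,a)$ is guaranteed by the exploration assumption implicit in (ii). The strong law of large numbers then gives $\widehat\P\to\P$ a.s. For finite $t$, the bounded-horizon reachability probabilities are continuous in $\P$, so Assumption~\ref{ass:finite-horizon} makes the finite-horizon check exact. Eventually the conditions of Proposition~\ref{prop:statisticalmc} and Theorem~\ref{thm:safetyguarantees} hold with arbitrarily small $\delta$.
\item In \textsc{ModelFree} mode, $\mathbf{B}_\text{fin}$ is a finite-stage (nilpotent in depth $H^*$) recursion. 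Convergence follows by induction on $i$ from $H^*$ down to $0$: each level is an SA with a converging target. This is combined with Step~1 via the argmin of $Q_{\text{safe}}$.
\end{itemize}
There is a threshold subtlety. If some value equals $\epsilon_t$ exactly, set membership may oscillate. I would handle this with a genericity/tie assumption on the thresholds, or by noting that the strict inequality in the definition makes the limit set well defined whenever no value sits exactly at $\epsilon_t$.

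\textbf{Step 3: the task Q-iterates converge.} After a random but a.s.\ finite time $T_0$, the mask in \eqref{eq:modifiedbellman} coincides with $S^*(a,\epsilon_t)$. From $T_0$ on, the $Q_\text{task}$ updates are an asynchronous stochastic approximation of $\mathbf{B}_{\mathbf{safe}}$ plus a vanishing perturbation. Rejected actions are also updated (toward $0$), so all pairs continue to be updated. Since $\mathbf{B}_{\mathbf{safe}}$ is a $\gamma$-contraction by Proposition~\ref{prop:bellmanoperator}, the Tsitsiklis asynchronous SA theorem gives $Q_\text{task}\to Q^*_{\mathrm{safe}}$ w.p.~1. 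Bounded non-negative rewards (assumption (i)) keep the iterates bounded.

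\textbf{Step 4: safety of the deployed policy.} The deployed $\pi^\lozenge$ uses the limiting (exact) safe sets, and assumption (iii) holds. Theorem~\ref{thm:constraintsatisfaction}, which is policy-agnostic, then yields $\mathbb{P}_{\pi^\lozenge}(s_0\models P_{\mathrm{safe}})\ge 1-\epsilon$ for every $s_0\in\S_0$.
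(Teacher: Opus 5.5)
Your four steps follow the paper's decomposition: backup-critic convergence by an SSP-type stochastic-approximation argument, eventual correctness of the shield mask mode by mode, convergence of $Q_{\text{task}}$ as stochastic approximation of the $\gamma$-contraction $\mathbf{B}_{\textbf{safe}}$ once the mask is frozen, and safety from Theorem~\ref{thm:constraintsatisfaction} applied to the limiting shield. Your tie condition at $\epsilon_t$ is a hypothesis the paper's indicator lemma silently needs.

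\textbf{Gap 1: statistical model checking in \textsc{ModelBased} mode.} The concentration failure in Theorem~\ref{thm:safetyguarantees} does vanish as counts grow. The Monte Carlo failure probability does not: it is fixed by the sample size $m$. Take a pair whose risk lies in $(0,1)$. Each fresh check lands on the wrong side of the threshold with probability bounded away from $0$ (e.g.\ all $m$ rollouts unsafe, or all safe), conditionally independently of the past. By the conditional Borel--Cantelli lemma the mask is then wrong infinitely often a.s., so no finite $T_0$ exists and your Step 3 does not apply. This is why the paper adds a separate persistent-$\delta$-corruption lemma. The corrupted operator is still a $\gamma$-contraction, but its fixed point is only within $O(\delta R_{\max}/(1-\gamma)^2)$ of $Q^*_{\mathrm{safe}}$. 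Exact convergence requires $\delta_t\to0$ (e.g.\ $m_t\to\infty$; if $\sum_t\delta_t<\infty$, Borel--Cantelli restores your $T_0$). A related issue: the conservative test of Proposition~\ref{prop:statisticalmc}, $\tilde{\mathcal{E}}\le\epsilon_t-\epsilon'$, converges (up to ties) to the set of pairs with $H^*$-step risk at most $\epsilon_t-\epsilon'$. That is in general a strict subset of $S^*(a,\epsilon_t)$, unless $\epsilon'$ shrinks as counts grow.

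\textbf{Gap 2: neither remedy in Step 1 works as written.}
\begin{itemize}
\item Option (a) is unavailable. Assumption~\ref{ass:finite-horizon} only relates thresholds; it does not turn the infinite-horizon eventual-discounting recursion that updates $Q_{\text{safe}}$ into a finite-horizon one. Also, a policy greedy for $H^*$-step risk need not be maximally safe.
\item Option (b) fails because there is no discounting outside $\F^\otimes$. A policy staying in a closed class disjoint from $\F^\otimes$ (one exists under (iii)) is improper with zero cost, which violates the SSP hypothesis. In fact, with the cost \eqref{eq:costfn} and $\F^\otimes$ absorbing, every $Q_c:=c+(\frac{1}{1-\gamma}-c)\,p^*$ with $c\in[0,\frac{1}{1-\gamma}]$ is a fixed point of $\mathbf{B}_{\textbf{cost}}$. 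So the paper's own uniqueness claim is too strong.
\end{itemize}
What selects $c=0$ is the zero initialisation assumed in the paper's lemma, $Q_0\equiv0$:
\begin{itemize}
\item Targets are nonnegative, so $Q_t\ge0$.
\item Let $Z:=\{(s^\otimes,a):p^*(s^\otimes,a)=0\}$. For $(s^\otimes,a)\in Z$, every successor $s'$ lies outside $\F^\otimes$ and has an action $a'$ with $(s',a')\in Z$. Inductively the target there is $0+\min_{a'}Q_t(s',a')=0$, so $Q_t\equiv0$ on $Z$ for all $t$.
\item Take $\{s^\otimes:\min_a p^*(s^\otimes,a)=0\}\cup\F^\otimes$ as the terminal set. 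Every stationary policy is then proper: a recurrent class avoiding this set would be closed and disjoint from $\F^\otimes$, hence have zero risk and lie in the set.
\item The reduced operator is therefore a weighted max-norm contraction, and the asynchronous stochastic-approximation theorem gives $Q_{\text{safe}}\to p^*/(1-\gamma)$.
\end{itemize}
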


\subsection{\textsc{RecReg} with Deep RL}
\label{sec:deep-recreg}

While the tabular case provides a convenient framework for obtaining concrete guarantees, many interesting applications of RL consider continuous or high-dimensional environments. Results from the tabular setting still hold under the assumption of a suitably accurate dynamics (analogous to the conditions in Proposition \ref{prop:statisticalmc}) \citep{goodall2023approximate, goodall2024leveraging}. To utilize \textsc{RecReg} in these scenarios we adopt SOTA model-based RL, leveraging \textsc{DreamerV3}'s \emph{Recurrent State Space Model} (RSSM) for joint dynamics learning and policy optimization \citep{hafner2019learning}. The RSSM is a sequential variational auto-encoder type model that learns latent states and transition dynamics from observations. The RSSM includes a \emph{recurrent model} $h_t = f_{\theta}(h_{t-1}, z_{t-1}, a_{t-1})$, an \emph{encoder} $z_t \!\sim\! q_{\theta}(z_t \mid o_t, h_t)$, a \emph{prior/transition predictor} $\hat z_t \!\sim\! p_{\theta}(\hat z_t \mid h_t)$, an \emph{observation decoder} $\hat o_t \!\sim\! p_{\theta}(\hat o_t \mid h_t, z_t)$, a \emph{reward head} $\hat r_t \!\sim\! p_{\theta}(\hat r_t \mid h_t, z_t)$, and a \emph{termination head} $\hat \gamma_t \!\sim\! p_{\theta}(\hat \gamma_t \mid h_t, z_t)$. Similar to \emph{approximate model-based shielding} (AMBS) \citep{goodall2023approximate} we augment the RSSM with a \emph{cost head} $\hat c_t \!\sim\! p_{\theta}(\hat c_t \mid h_t, z_t)$ to model the cost function introduced in \eqref{eq:costfn}. Since the RSSM summarizes history in $(h_t, z_t)$ we can accurately learn the cost function in \eqref{eq:costfn} by back-propagating its gradients through the RSSM to capture the temporal dependencies of the DFA without needing to build the product MDP or synchronize states. Both the task policy $\widehat \pi$ and safe policy $\pi_{\text{safe}}$ are implemented as neural networks, with inputs derived from the RSSM latent state. To check if $s^\otimes \in S(\pi_{\text{safe}}, a, \epsilon_t)$ we rollout the latent dynamics $p_{\theta}$ with $a \sim \pi_{\text{task}}$ and $\pi_\text{safe}$ thereafter for a fixed horizon $H^*$, evaluating the cost head as a proxy for DFA satisfaction, and averaging over rollouts in parallel to obtain probability estimates. See Appendix \ref{sec:deep-rl} for further details.

\section{Experimental Evaluation}
\label{sec:experiments}

\begin{figure*}[t]
    \centering
    \includegraphics[width=\linewidth]{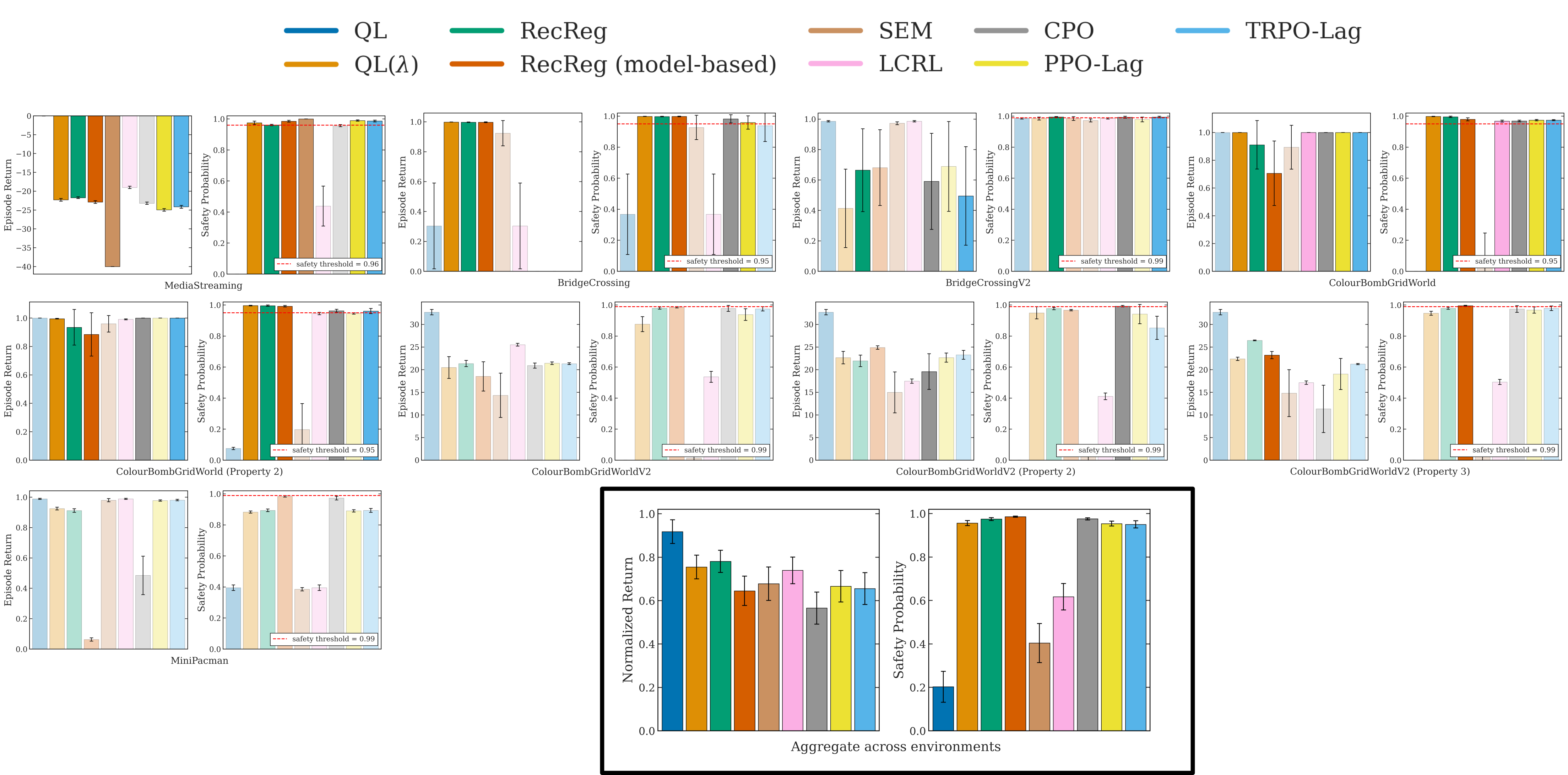}
    \caption{Experimental results for the MASA-Safe-RL \citep{Goodall2025MASASafeRL} benchmark. \emph{Normalized returns} and \emph{safety probabilities} are averaged over 10 random seeds for the final 100 episodes in training. The dashed horizontal line in the safety probability plots denotes the theoretical safety probability given by Theorem \ref{thm:constraintsatisfaction}. Scores aggregated across all environments are provided in the bottom right corner.}
    \label{fig:results}
\end{figure*}


\textbf{Tabular RL.} We consider a suite of tabular environments provided by MASA-Safe-RL \citep{Goodall2025MASASafeRL}, spanning navigation and control tasks, e.g., grid worlds and Pacman. For complete environment descriptions we refer the reader to Appendix \ref{sec:environment-details}. The environments chosen cover a range of safety properties, from simple invariants to more complex regular safety constraints specified via automata. The difficulty varies both in terms of state-space size and the complexity of the underlying safety specification. We compare \textsc{RecReg} against several representative safe RL approaches. Logically-Constrained RL (\textsc{LCRL}) \citep{hasanbeig2018logically} incorporates LTL constraints via reward shaping and maximal end components. \textsc{QL-$\lambda$} follows Lagrangian-based approaches to LTL-constrained RL, treating the penalty coefficient as a hyperparameter \citep{voloshin2022policy, voloshin2023eventual, shah2024deep}. Safe Exploration Mask (\textsc{SEM}) \citep{xuan2022sem} learns auxiliary value functions to estimate proximity to unsafe states and controllability, suppressing risky actions by renormalizing the Q function. The conceptually closest baseline is \textsc{Recovery RL} \citep{thananjeyan2021recovery}, which learns a recovery policy to guide the agent back to safe regions although these are specified via safety critics with respect to the task policy, not the safe policy, leading to sub-optimal and overly conservative action replacement strategies. We also include three representative CMDP-based methods: \textsc{PPO-Lag}, \textsc{TRPO-Lag} \citep{ray2019benchmarking} and \textsc{CPO} \citep{achiam2017constrained}, adapted to the tabular setting, which enforce constraints in expectation via Lagrangian and trust-region updates respectively. The main results are presented in Fig.~\ref{fig:results}, with corresponding tabular summaries and full learning curves provided in Appendix \ref{sec:additional-results}. Across environments, both modes of \textsc{RecReg} consistently achieve the desired safety probability, matching the theoretical constraint in almost all cases. Deviations primarily occur in larger state spaces, where estimation or exploration challenges arise. In contrast, baseline methods exhibit more variable safety performance, reflecting their heuristic nature or reliance on asymptotic guarantees. Even when satisfying the constraint, this often comes at the cost of reduced returns. Overall, \textsc{RecReg} provides a more reliable trade-off between safety and performance. Additional analyses, including runtime (across modes) and ablations (e.g., reward shaping \citep{ng1999policy} and counter factual experiences \citep{icarte2022reward}), are in Appendix \ref{sec:ablation-studies}.

\textbf{Distributing risk in practice.} For \textsc{RecReg}, distributing risk across timesteps is non-trivial. A simple baseline is to allocate risk uniformly, enforcing $\epsilon_t = \epsilon/T$ (where $T$ is the episode length), although this is typically overly conservative. A less naïve approach maintains a residual budget $\epsilon_0 = \epsilon$ and updates it over time; however, directly subtracting the full tail risk (i.e., $\epsilon_t = \epsilon_{t-1} - \mathbb{P}_\pi^a(s_{t-1}^\otimes \models \lozenge \mathrm{unsafe})$) leads to double-counting of future risk. Instead, we subtract only the immediate risk of violation at each step and propagate the remaining budget by conditioning on survival, yielding a more accurate and less conservative update. The immediate and tail risk can be computed exactly (\textsc{Exact}) or estimated (e.g., via \textsc{ModelBased} or \textsc{ModelFree}). More sophisticated approaches exist, such as \textsc{ProbShield}, which allocates risk across successor states and preserves optimality under the global constraint $\epsilon$, but induces a more complex action space that is harder to optimize \citep{court2025probabilisticshieldingsafereinforcement}. Treating $\epsilon_t$ as a free hyperparameter is also possible, although it sacrifices the formal guarantees of Theorem~\ref{thm:constraintsatisfaction}. 

\begin{table}[H]
    \caption{Experimental results for \textsc{ProbShield} and \textsc{RecReg-Exact}. Bold text denotes the best return; grey text denotes algorithms that fail the safety threshold.}
    \label{tab:probshield}
    \centering
    \tiny
    \begin{tabular}{llcccc}
    \toprule
\multirow{2}{*}{Algorithm} & \multirow{2}{*}{Metric} & MediaStreaming & BridgeCrossing & ColourBomb...V2 (Prop.~3) & MiniPacman \\
& & $1-\epsilon=0.96$ & $1-\epsilon=0.95$ & $1-\epsilon=0.99$ & $1-\epsilon=0.99$ \\
\midrule
\multirow{2}{*}{\textsc{RecReg-Exact}} & Ep. Return & \textcolor{gray}{$-21.355\pm 0.240$} & $\mathbf{1.000\pm 0.000}$ & $\mathbf{26.294\pm 0.359}$ & $\mathbf{0.995\pm 0.00}1$ \\
 & Safety Prob. & \textcolor{gray}{$0.951\pm 0.009$} & $1.000\pm 0.000$ & $0.997\pm 0.002$ & $0.997\pm 0.002$ \\
\midrule
\multirow{2}{*}{\textsc{ProbShield}} & Ep. Return & $\mathbf{-25.964\pm 1.040}$ & $0.933\pm 0.067$ & \textcolor{gray}{$19.046\pm 1.490$} & $0.966\pm 0.006$ \\
 & Safety Prob. & $0.990\pm 0.005$ & $0.978\pm 0.011$ & \textcolor{gray}{$0.985\pm 0.006$} & $0.993\pm 0.003$ \\
\bottomrule
\end{tabular}
\end{table}

We directly compare \textsc{Exact-RecReg} with \textsc{ProbShield}, both of which assume full knowledge of the transition dynamics and provide formal safety guarantees. Results are summarized in Table~\ref{tab:probshield}. Both approaches satisfy the safety constraint across all environments (up to empirical error), \textsc{ProbShield} struggles as its augmented action space scales poorly in the number of actions and successor states. In contrast, \textsc{RecReg}'s simpler decision structure achieves stronger returns across the board. 

\begin{wrapfigure}{R}{0.5\textwidth}
\vspace*{-1.0\intextsep}
\begin{minipage}{0.5\textwidth}
\begin{center}
    \includegraphics[width=\linewidth]{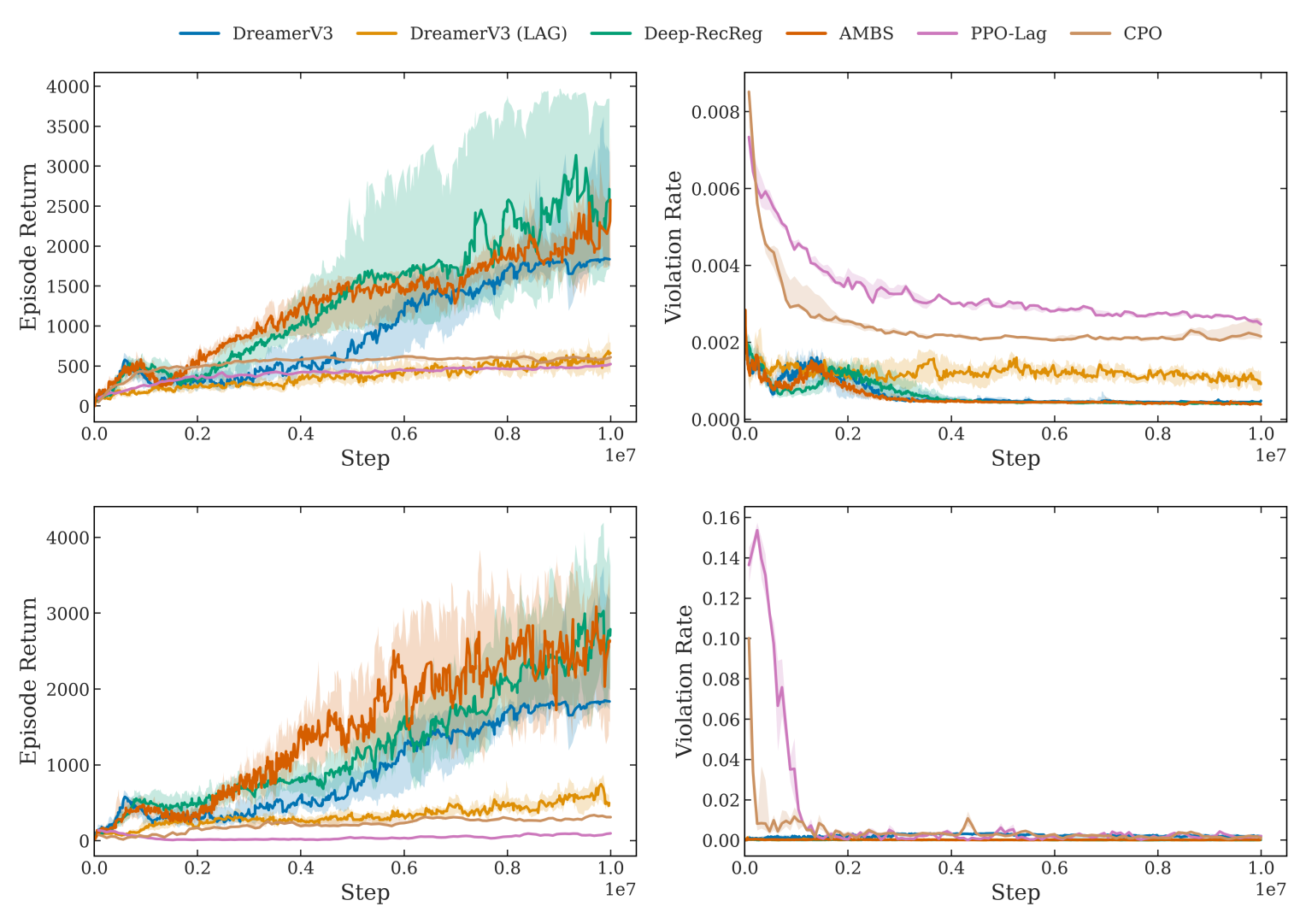}
    \caption{Learning curves for Atari Seaquest. \emph{Episodic reward} and \emph{violation rate} are averaged over 5 random seeds.}
    \label{fig:seaquest-results}
\end{center}
\end{minipage}
\vspace*{-1.0\intextsep}
\end{wrapfigure}

\textbf{Deep RL.} We investigate \textsc{RecReg} with DreamerV3's RSSM \citep{hafner2025mastering} on two high-dimensional benchmarks, Atari Seaquest \citep{machado18arcade} and \textit{SafetyGym} \citep{ray2019benchmarking} (with pixel observations). The goal in Seaquest is to rescue divers while managing a limited supply of oxygen that can only be replenished at the surface. We run two separate experiments with two different safety properties $\varphi_1$ and $\varphi_2$. The first (in LTL) $\varphi_1 := \square((\texttt{surface} \land X \neg \texttt{surface}) \to \square(\texttt{surface} \to \texttt{diver})) \land \square (\neg \texttt{out-of-oxygen}) \land \square(\neg \texttt{hit})$, requires the agent to surface only when carrying a diver, never run out of oxygen, and avoid being hit by enemies (5 DFA-states). The second, $\varphi_2 := \square(\texttt{diver} \to \lozenge^{\leq 30} \texttt{surface}) $, requires that once a diver has been picked up, the agent must return to the surface within 30 steps (31 DFA-states). We compare with two representative baseline algorithms: DreamerV3 with the Augmented Lagrangian \cite{wright2006numerical} and \emph{approximate model-based shielding} (AMBS) \cite{goodall2023approximate}. The learning curves are presented in Figure \ref{fig:seaquest-results}, along with tabular summaries in Appendix \ref{sec:deep-rl}. For both properties $\varphi_1$ and $\varphi_2$, \textsc{RecReg} with DreamerV3 outperforms all the baselines in terms of returns while also achieving a low violation rate. Note that property $\varphi_1$ is closely aligned with ``ideal'' gameplay, thus the unconstrained DreamerV3 also obtains a low violation rate. Qualitative analysis can be found in the supplementary material. Due to space constraints the learning curves and results for \textit{SafetyGym} are in Appendix \ref{sec:deep-rl}.

\section{Conclusion}

We introduced \textsc{RecReg}, a unified approach for enforcing high-probability satisfaction of regular safety properties in RL which, compared to the most closely related literature (e.g., Recovery RL \citep{thananjeyan2021recovery}), provides formal safety and convergence guarantees, with optimality under step-wise constraints. The experimental results presented in this paper validate our theoretical claims, and provide further evidence that the separation of reward and safety objectives is an effective strategy towards safety-aware decision making. Future works includes extending to our method to more RL algorithms and other scenarios (e.g., safe control, policy-gradient).

\bibliographystyle{ACM-Reference-Format} 
\bibliography{refs}

@book{baier2008principles,
  title={Principles of model checking},
  author={Baier, Christel and Katoen, Joost-Pieter},
  year={2008},
  publisher={MIT press}
}

@book{puterman2014markov,
  title={Markov decision processes: discrete stochastic dynamic programming},
  author={Puterman, Martin L},
  year={2014},
  publisher={John Wiley \& Sons}
}

@article{abate2008probabilistic,
  title={Probabilistic reachability and safety for controlled discrete time stochastic hybrid systems},
  author={Abate, Alessandro and Prandini, Maria and Lygeros, John and Sastry, Shankar},
  journal={Automatica},
  volume={44},
  number={11},
  pages={2724--2734},
  year={2008},
  publisher={Elsevier}
}

@inproceedings{alshiekh2018safe,
  title={Safe reinforcement learning via shielding},
  author={Alshiekh, M. and Bloem, R. and Ehlers, R. and K{\"o}nighofer, B. and Niekum, S. and Topcu, U.},
  booktitle={Proceedings of the AAAI Conference on Artificial Intelligence},
  volume={32},
  number={1},
  year={2018}
}

@inproceedings{voloshin2023eventual,
  title={Eventual discounting temporal logic counterfactual experience replay},
  author={Voloshin, Cameron and Verma, Abhinav and Yue, Yisong},
  booktitle={International Conference on Machine Learning},
  pages={35137--35150},
  year={2023},
  organization={PMLR}
}

@inproceedings{carr2023safe,
  title={Safe reinforcement learning via shielding under partial observability},
  author={Carr, Steven and Jansen, Nils and Junges, Sebastian and Topcu, Ufuk},
  booktitle={Proceedings of the AAAI Conference on Artificial Intelligence},
  volume={37},
  number={12},
  pages={14748--14756},
  year={2023}
}

@article{shah2024deep,
  title={Deep Policy Optimization with Temporal Logic Constraints},
  author={Shah, Ameesh and Voloshin, Cameron and Yang, Chenxi and Verma, Abhinav and Chaudhuri, Swarat and Seshia, Sanjit A},
  journal={arXiv preprint arXiv:2404.11578},
  year={2024}
}

@inproceedings{court2025probabilisticshieldingsafereinforcement,
  title={Probabilistic Shielding for Safe Reinforcement Learning},
  author={Hamel-De le Court, Edwin and Belardinelli, Francesco and Goodall, Alexander W},
  booktitle={Proceedings of the AAAI Conference on Artificial Intelligence},
  volume={39},
  number={15},
  pages={16091--16099},
  year={2025}
}

@article{jaakkola1993convergence,
  title={Convergence of stochastic iterative dynamic programming algorithms},
  author={Jaakkola, Tommi and Jordan, Michael and Singh, Satinder},
  journal={Advances in neural information processing systems},
  volume={6},
  year={1993}
}

@article{weissman2003inequalities,
  title={Inequalities for the L1 deviation of the empirical distribution},
  author={Weissman, Tsachy and Ordentlich, Erik and Seroussi, Gadiel and Verdu, Sergio and Weinberger, Marcelo J},
  journal={Hewlett-Packard Labs, Tech. Rep},
  pages={125},
  year={2003}
}

@article{thomas2021safe,
  title={Safe reinforcement learning by imagining the near future},
  author={Thomas, Garrett and Luo, Yuping and Ma, Tengyu},
  journal={Advances in Neural Information Processing Systems},
  volume={34},
  pages={13859--13869},
  year={2021}
}

@article{hasanbeig2020cautious,
  title={Cautious reinforcement learning with logical constraints},
  author={Hasanbeig, Mohammadhosein and Abate, Alessandro and Kroening, Daniel},
  journal={arXiv preprint arXiv:2002.12156},
  year={2020}
}

@article{voloshin2022policy,
  title={Policy optimization with linear temporal logic constraints},
  author={Voloshin, Cameron and Le, Hoang and Chaudhuri, Swarat and Yue, Yisong},
  journal={Advances in Neural Information Processing Systems},
  volume={35},
  pages={17690--17702},
  year={2022}
}

@article{ray2019benchmarking,
  title={Benchmarking safe exploration in deep reinforcement learning},
  author={Ray, Alex and Achiam, Joshua and Amodei, Dario},
  journal={arXiv preprint arXiv:1910.01708},
  volume={7},
  number={1},
  pages={2},
  year={2019}
}

@inproceedings{achiam2017constrained,
  title={Constrained policy optimization},
  author={Achiam, Joshua and Held, David and Tamar, Aviv and Abbeel, Pieter},
  booktitle={International conference on machine learning},
  pages={22--31},
  year={2017},
  organization={PMLR}
}

@article{amodei2016concrete,
  title={Concrete problems in AI safety},
  author={Amodei, Dario and Olah, Chris and Steinhardt, Jacob and Christiano, Paul and Schulman, John and Man{\'e}, Dan},
  journal={arXiv preprint arXiv:1606.06565},
  year={2016}
}

@article{chow2018risk,
  title={Risk-constrained reinforcement learning with percentile risk criteria},
  author={Chow, Yinlam and Ghavamzadeh, Mohammad and Janson, Lucas and Pavone, Marco},
  journal={Journal of Machine Learning Research},
  volume={18},
  number={167},
  pages={1--51},
  year={2018}
}

@article{liang2018accelerated,
  title={Accelerated primal-dual policy optimization for safe reinforcement learning},
  author={Liang, Qingkai and Que, Fanyu and Modiano, Eytan},
  journal={arXiv preprint arXiv:1802.06480},
  year={2018}
}

@article{tessler2018reward,
  title={Reward constrained policy optimization},
  author={Tessler, Chen and Mankowitz, Daniel J and Mannor, Shie},
  journal={arXiv preprint arXiv:1805.11074},
  year={2018}
}

@article{yang2020projection,
  title={Projection-based constrained policy optimization},
  author={Yang, Tsung-Yen and Rosca, Justinian and Narasimhan, Karthik and Ramadge, Peter J},
  journal={arXiv preprint arXiv:2010.03152},
  year={2020}
}

@article{hasanbeig2018logically,
  title={Logically-constrained reinforcement learning},
  author={Hasanbeig, Mohammadhosein and Abate, Alessandro and Kroening, Daniel},
  journal={arXiv preprint arXiv:1801.08099},
  year={2018}
}

@inproceedings{de2020restraining,
  title={Restraining bolts for reinforcement learning agents},
  author={De Giacomo, Giuseppe and Iocchi, Luca and Favorito, Marco and Patrizi, Fabio},
  booktitle={Proceedings of the AAAI Conference on Artificial Intelligence},
  volume={34},
  number={09},
  pages={13659--13662},
  year={2020}
}

@book{altman1999constrained,
  title={Constrained Markov decision processes: stochastic modeling},
  author={Altman, Eitan},
  year={1999},
  publisher={Routledge}
}

@article{icarte2022reward,
  title={Reward machines: Exploiting reward function structure in reinforcement learning},
  author={Icarte, Rodrigo Toro and Klassen, Toryn Q and Valenzano, Richard and McIlraith, Sheila A},
  journal={Journal of Artificial Intelligence Research},
  volume={73},
  pages={173--208},
  year={2022}
}

@inproceedings{icarte2018using,
  title={Using reward machines for high-level task specification and decomposition in reinforcement learning},
  author={Icarte, Rodrigo Toro and Klassen, Toryn and Valenzano, Richard and McIlraith, Sheila},
  booktitle={International Conference on Machine Learning},
  pages={2107--2116},
  year={2018},
  organization={PMLR}
}

@inproceedings{jansen2020safe,
  title={Safe reinforcement learning using probabilistic shields},
  author={Jansen, Nils and K{\"o}nighofer, Bettina and Junges, Sebastian and Serban, Alex and Bloem, Roderick},
  booktitle={31st International Conference on Concurrency Theory (CONCUR 2020)},
  year={2020},
  organization={Schloss-Dagstuhl-Leibniz Zentrum f{\"u}r Informatik}
}

@inproceedings{konighofer2021online,
  title={Online shielding for stochastic systems},
  author={K{\"o}nighofer, Bettina and Rudolf, Julian and Palmisano, Alexander and Tappler, Martin and Bloem, Roderick},
  booktitle={NASA Formal Methods Symposium},
  pages={231--248},
  year={2021},
  organization={Springer}
}

@article{konighofer2023online,
  title={Online shielding for reinforcement learning},
  author={K{\"o}nighofer, Bettina and Rudolf, Julian and Palmisano, Alexander and Tappler, Martin and Bloem, Roderick},
  journal={Innovations in Systems and Software Engineering},
  volume={19},
  number={4},
  pages={379--394},
  year={2023},
  publisher={Springer}
}

@inproceedings{bastanib2021safe,
  title={Safe Reinforcement Learning via Statistical Model Predictive Shielding.},
  author={Bastani, Osbert and Li, Shuo and Xu, Anton},
  booktitle={Robotics: Science and Systems},
  pages={1--13},
  year={2021}
}

@article{as2022constrained,
  title={Constrained policy optimization via bayesian world models},
  author={As, Yarden and Usmanova, Ilnura and Curi, Sebastian and Krause, Andreas},
  journal={arXiv preprint arXiv:2201.09802},
  year={2022}
}

@incollection{goodall2023approximate,
  title={Approximate Model-Based Shielding for Safe Reinforcement Learning},
  author={Goodall, Alexander W and Belardinelli, Francesco},
  booktitle={ECAI 2023},
  pages={883--890},
  year={2023},
  publisher={IOS Press}
}

@article{he2022androids,
  title={Do Androids Dream of Electric Fences? Safety-Aware Reinforcement Learning with Latent Shielding},
  author={He, Chloe and Le{\'o}n, Borja G and Belardinelli, Francesco},
  url = {https://ceur-ws.org/Vol-3087/paper_50.pdf},
  organization={CEUR Workshop Proceedings},
  year={2022}
}

@inproceedings{xiao2023model,
  title={Model-based Dynamic Shielding for Safe and Efficient Multi-agent Reinforcement Learning},
  author={Xiao, Wenli and Lyu, Yiwei and Dolan, John},
  booktitle={Proceedings of the 2023 International Conference on Autonomous Agents and Multiagent Systems},
  pages={1587--1596},
  year={2023}
}

@article{gronauer2024reinforcement,
  title={Reinforcement Learning with Ensemble Model Predictive Safety Certification},
  author={Gronauer, Sven and Haider, Tom and da Roza, Felippe Schmoeller and Diepold, Klaus},
  journal={arXiv preprint arXiv:2402.04182},
  year={2024}
}

@article{thananjeyan2021recovery,
  title={Recovery rl: Safe reinforcement learning with learned recovery zones},
  author={Thananjeyan, Brijen and Balakrishna, Ashwin and Nair, Suraj and Luo, Michael and Srinivasan, Krishnan and Hwang, Minho and Gonzalez, Joseph E and Ibarz, Julian and Finn, Chelsea and Goldberg, Ken},
  journal={IEEE Robotics and Automation Letters},
  volume={6},
  number={3},
  pages={4915--4922},
  year={2021},
  publisher={IEEE}
}

@Article{machado18arcade,
    author = {Marlos C. Machado and Marc G. Bellemare and Erik Talvitie and Joel Veness and Matthew J. Hausknecht and Michael Bowling},
    title = {Revisiting the Arcade Learning Environment: Evaluation Protocols and Open Problems for General Agents},
    journal = {Journal of Artificial Intelligence Research},
    volume = {61},
    pages = {523--562},
    year = {2018}
}

@inproceedings{ng1999policy,
  title={Policy invariance under reward transformations: Theory and application to reward shaping},
  author={Ng, Andrew Y and Harada, Daishi and Russell, Stuart},
  booktitle={Icml},
  volume={99},
  pages={278--287},
  year={1999}
}

@software{jax2018github,
  author = {James Bradbury and Roy Frostig and Peter Hawkins and Matthew James Johnson and Chris Leary and Dougal Maclaurin and George Necula and Adam Paszke and Jake Vander{P}las and Skye Wanderman-{M}ilne and Qiao Zhang},
  title = {{JAX}: composable transformations of {P}ython+{N}um{P}y programs},
  url = {http://github.com/google/jax},
  version = {0.3.13},
  year = {2018},
}

@article{omnisafe,
  title   = {OmniSafe: An Infrastructure for Accelerating Safe Reinforcement Learning Research},
  author  = {Ji, Jiaming and Zhou, Jiayi and Zhang, Borong and Dai, Juntao and Pan, Xuehai and Sun, Ruiyang and Huang, Weidong and Geng, Yiran and Liu, Mickel and Yang, Yaodong},
  journal = {arXiv preprint arXiv:2305.09304},
  year    = {2023}
}

@article{hafner2023mastering,
  title={Mastering diverse domains through world models},
  author={Hafner, Danijar and Pasukonis, Jurgis and Ba, Jimmy and Lillicrap, Timothy},
  journal={arXiv preprint arXiv:2301.04104},
  year={2023}
}

@misc{wright2006numerical,
  title={Numerical optimization},
  author={Wright, Jorge Nocedal Stephen J},
  year={2006},
  publisher={springer publication}
}

@inproceedings{hafner2019learning,
  title={Learning latent dynamics for planning from pixels},
  author={Hafner, Danijar and Lillicrap, Timothy and Fischer, Ian and Villegas, Ruben and Ha, David and Lee, Honglak and Davidson, James},
  booktitle={International conference on machine learning},
  pages={2555--2565},
  year={2019},
  organization={PMLR}
}

@article{hoeffding1963probability,
  title={Probability Inequalities for Sums of Bounded Random Variables},
  author={Hoeffding, Wassily},
  journal={Journal of the American Statistical Association},
  volume={58},
  number={301},
  pages={13--30},
  year={1963},
  publisher={Taylor \& Francis}
}

@inproceedings{rajeswaran2020game,
  title={A game theoretic framework for model based reinforcement learning},
  author={Rajeswaran, Aravind and Mordatch, Igor and Kumar, Vikash},
  booktitle={International conference on machine learning},
  pages={7953--7963},
  year={2020},
  organization={PMLR}
}

@article{kearns2002near,
  title={Near-optimal reinforcement learning in polynomial time},
  author={Kearns, Michael and Singh, Satinder},
  journal={Machine learning},
  volume={49},
  pages={209--232},
  year={2002},
  publisher={Springer}
}

@book{durrett1999essentials,
  title={Essentials of stochastic processes},
  author={Durrett, Richard and Durrett, R},
  volume={1},
  year={1999},
  publisher={Springer}
}

@article{xuan2022sem,
  title={SEM: Safe exploration mask for q-learning},
  author={Xuan, Chengbin and Zhang, Feng and Lam, Hak-Keung},
  journal={Engineering Applications of Artificial Intelligence},
  volume={111},
  pages={104765},
  year={2022},
  publisher={Elsevier}
}

@inproceedings{pnueli1989synthesis,
  title={On the synthesis of a reactive module},
  author={Pnueli, Amir and Rosner, Roni},
  booktitle={Proceedings of the 16th ACM SIGPLAN-SIGACT symposium on Principles of programming languages},
  pages={179--190},
  year={1989}
}

@inproceedings{bloem2015shield,
  title={Shield synthesis: Runtime enforcement for reactive systems},
  author={Bloem, Roderick and K{\"o}nighofer, Bettina and K{\"o}nighofer, Robert and Wang, Chao},
  booktitle={International conference on tools and algorithms for the construction and analysis of systems},
  pages={533--548},
  year={2015},
  organization={Springer}
}

@misc{Goodall2025MASASafeRL,
  title        = {{MASA-Safe-RL}: Multi and Single Agent Safe Reinforcement Learning},
  author       = {Goodall, Alexander W. and Adalat, Omar and Hamel De-le Court, Edwin and Belardinelli, Francesco},
  year         = {2025},
  howpublished = {\url{https://github.com/sacktock/MASA-Safe-RL/}},
  note         = {GitHub repository}
}

@inproceedings{todorov2012mujoco,
  title={Mujoco: A physics engine for model-based control},
  author={Todorov, Emanuel and Erez, Tom and Tassa, Yuval},
  booktitle={2012 IEEE/RSJ international conference on intelligent robots and systems},
  pages={5026--5033},
  year={2012},
  organization={IEEE}
}

@inproceedings{seto1998simplex,
  title={The simplex architecture for safe online control system upgrades},
  author={Seto, Danbing and Krogh, Bruce and Sha, Lui and Chutinan, Alongkrit},
  booktitle={Proceedings of the 1998 American Control Conference. ACC (IEEE Cat. No. 98CH36207)},
  volume={6},
  pages={3504--3508},
  year={1998},
  organization={IEEE}
}

@article{goodall2026safe,
  title={Safe Reinforcement Learning via Recovery-based Shielding with Gaussian Process Dynamics Models},
  author={Goodall, Alexander W and Belardinelli, Francesco},
  journal={arXiv preprint arXiv:2602.12444},
  year={2026}
}

@article{hafner2025mastering,
  title={Mastering diverse control tasks through world models},
  author={Hafner, Danijar and Pasukonis, Jurgis and Ba, Jimmy and Lillicrap, Timothy},
  journal={Nature},
  volume={640},
  number={8059},
  pages={647--653},
  year={2025},
  publisher={Nature Publishing Group UK London}
}

@article{manganaris2025automaton,
  title={Automaton Constrained Q-Learning},
  author={Manganaris, Anastasios and Giammarino, Vittorio and Qureshi, Ahmed H},
  journal={arXiv preprint arXiv:2510.05061},
  year={2025}
}

@article{yang2021tractability,
  title={On the (in) tractability of reinforcement learning for LTL objectives},
  author={Yang, Cambridge and Littman, Michael and Carbin, Michael},
  journal={arXiv preprint arXiv:2111.12679},
  year={2021}
}

@article{krasowski2023provably,
  title={Provably safe reinforcement learning: Conceptual analysis, survey, and benchmarking},
  author={Krasowski, Hanna and Thumm, Jakob and M{\"u}ller, Marlon and Sch{\"a}fer, Lukas and Wang, Xiao and Althoff, Matthias},
  journal={Transactions on Machine Learning Research},
  year={2023}
}

@inproceedings{goodall2024leveraging,
  title={Leveraging Approximate Model-based Shielding for Probabilistic Safety Guarantees in Continuous Environments},
  author={Goodall, Alexander W and Belardinelli, Francesco},
  booktitle={Proceedings of the 23rd International Conference on Autonomous Agents and Multiagent Systems},
  pages={2291--2293},
  year={2024}
}

@article{banach1922operations,
  title={Sur les op{\'e}rations dans les ensembles abstraits et leur application aux {\'e}quations int{\'e}grales},
  author={Banach, Stefan},
  journal={Fundamenta mathematicae},
  volume={3},
  number={1},
  pages={133--181},
  year={1922},
  publisher={Polska Akademia Nauk. Instytut Matematyczny PAN}
}

@article{towers2024gymnasium,
  title={Gymnasium: A standard interface for reinforcement learning environments},
  author={Towers, Mark and Kwiatkowski, Ariel and Terry, Jordan and Balis, John U and De Cola, Gianluca and Deleu, Tristan and Goul{\~a}o, Manuel and Kallinteris, Andreas and Krimmel, Markus and KG, Arjun and others},
  journal={arXiv preprint arXiv:2407.17032},
  year={2024}
}


\clearpage
\newpage
\appendix

\section{Proofs}
\label{sec:proofs}

\subsection{Theorem 10.52 in \citep{baier2008principles}}
\label{sec:reachability-proof}

\begin{theorem}[[Variant of Theorem 10.52 in \cite{baier2008principles}] For a fixed policy $\pi$ from state $s \in \S$, the safety probability under the path measure $\Pr^\mathcal{M}_\pi$,
\begin{equation*}
    \mathbb{P}_\pi(s \models P_{\mathrm{safe}}) = 1 - \mathbb{P}_\pi(s^\otimes \models \lozenge \mathrm{unsafe})
\end{equation*}
where $s^\otimes = (s, \Delta(\Q_0, L(s)))$ and,
\begin{equation*}
    \mathbb{P}_\pi(s^\otimes \models \lozenge \mathrm{unsafe}) := \textstyle{\Pr^{\mathcal{M}^\otimes}_\pi}\left(\{ \rho^\otimes \in (\S^\otimes)^\omega 
    \mid \rho^\otimes[0] = s^\otimes ,\ \exists\, t \geq 0,\ 
    \mathrm{unsafe} \in \L^\otimes(\rho^\otimes[t]) \}\right)
\end{equation*}
where $\Pr^{\mathcal{M}^\otimes}_\pi$ is the path measure in the product MDP (c.f., Definition \ref{def:product-mdp}) under policy $\pi$. 
\end{theorem}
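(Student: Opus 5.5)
I will prove the identity by building a measure-preserving bijection between paths of the Markov chain $\mathcal{M}_\pi$ started in $s$ and paths of the product chain $\mathcal{M}^\otimes_\pi$ started in $s^\otimes = (s, \Delta(\Q_0, \L(s)))$. Under this bijection the event ``the trace has a bad prefix'' corresponds exactly to the event ``some product state is labelled $\mathrm{unsafe}$''. Taking complements then gives the claim.

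\textbf{Lifting paths.} For $\rho = s_0 s_1 \ldots$ with $s_0 = s$, I define the DFA run $q_0 = \Delta(\Q_0, \L(s_0))$ and $q_{t+1} = \Delta(q_t, \L(s_{t+1}))$, and set $\rho^\otimes = (s_0,q_0)(s_1,q_1)\ldots$. Because $\Delta$ is deterministic, the map $\rho \mapsto \rho^\otimes$ is injective. By Definition~\ref{def:product-mdp}, every path with positive probability in the product from $s^\otimes$ has this form, since $\P^\otimes$ puts mass only on $q' = \Delta(q, \L(s'))$. Projecting to the first coordinate inverts the map on that support.

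\textbf{Matching measures.} The policy must be read as acting on product states. For $\pi \in \Pi^\otimes$, the induced chain on $\S$ is non-Markovian, but $\pi(\cdot \mid s_t, q_t)$ is a function of the history $s_0 \ldots s_t$ because $q_t$ is. I will show that the two measures agree on cylinder sets. By induction on $n$, the probability of the cylinder $s_0\ldots s_n$ under $\Pr^{\mathcal{M}}_\pi$ is
\[
\prod_{t<n} \sum_{a}\pi(a\mid s_t,q_t)\,\P(s_{t+1}\mid s_t,a),
\]
and this equals the probability of the cylinder $(s_0,q_0)\ldots(s_n,q_n)$ under $\Pr^{\mathcal{M}^\otimes}_\pi$, using $\P^\otimes((s,q),a,(s',q')) = \P(s,a,s')$ when $q' = \Delta(q,\L(s'))$ and $0$ otherwise. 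If $\pi$ is instead memoryless on $\S$, the argument is the same with $\pi(a \mid s_t)$ in both products. Agreement on cylinders extends to the generated $\sigma$-algebra by Carathéodory's extension theorem, or the $\pi$-$\lambda$ theorem.

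\textbf{Events correspond.} By construction of the run, $q_t$ is the state that $\mathcal{D}$ reaches after reading $\L(s_0)\ldots\L(s_t)$. So the prefix $\tau(\rho)[0..t]$ is accepted, i.e.\ lies in $\mathcal{L}(\mathcal{D})$, iff $q_t \in \F$, which holds iff $\mathrm{unsafe} \in \L^\otimes(\rho^\otimes[t])$. Hence $\tau(\rho) \notin P_{\text{safe}}$, meaning some finite prefix is a bad prefix, iff $\rho^\otimes$ eventually visits an $\mathrm{unsafe}$ state. The conventions must be kept consistent here. The empty prefix is never accepted, because the initial DFA state is taken after reading $\L(s_0)$. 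This is why $s^\otimes$ uses $\Delta(\Q_0, \L(s))$.

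\textbf{Main obstacle.} I expect the measure-matching step to be the delicate part. It needs care about which policy class $\pi$ ranges over, and about the fact that the product chain is supported only on consistent DFA runs. The rest is bookkeeping. Combining the three steps,
\[
\mathbb{P}_\pi(s\models P_{\text{safe}}) = 1 - \Pr^{\mathcal{M}}_\pi(\text{bad prefix}) = 1 - \mathbb{P}_\pi(s^\otimes \models \lozenge\mathrm{unsafe}).
\]
Measurability of both events holds because each is a countable union of cylinders.
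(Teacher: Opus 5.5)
Your proposal is correct and takes the same route as the paper: lift each path to its unique DFA run, show that the lifting preserves the path measure, and show that the trace having a bad prefix is equivalent to the lifted path eventually visiting an $\mathrm{unsafe}$ product state. You go further than the paper by proving measure preservation via cylinder sets (the paper defers this step to Baier and Katoen) and by giving a direct prefix-by-prefix event correspondence; your one loose remark concerns the empty prefix, which is settled by bad prefixes being closed under extension (if $\varepsilon$ were accepted then so would be $\L(s_0)$, forcing $q_0 \in \F$), not by the choice of $s^\otimes$.
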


\begin{proof}
For a given DFA $\mathcal{D} = (\Q \Sigma, \Delta, \Q_0, \F)$ we start by recursively defining the extended transition function $\Delta^{*} :\Q \times \Sigma^* \to \Q$ as $\Delta^*(q, w) = \Delta( \Delta^*(q, w \setminus w\!\!\downarrow), w\!\!\downarrow  )$. The language accepted by the DFA is then defined as,
\begin{equation*}
    \mathcal{L}(\mathcal{D}) :=  \{ w \in \Sigma^{*} \mid \Delta^*(\Q_0, w) \in \F \}
\end{equation*}

Now let $\mathcal{D}$ be the safety property whose language is precisely the set of bad prefixes for $P_{\mathrm{safe}}$, then formally,
\begin{equation*}
    P_{\text{safe}} := \{ w \in \Sigma^{\omega} \mid \forall w_{\text{pref}} \in \Sigma^{\omega}, w_{\text{pref}} \preceq w, w_{\text{pref}} \not \in \mathcal{L}(\mathcal{D}) \}
\end{equation*}

We continue with the following auxiliary results.
\begin{corollary} \label{cor:unqiuerun}
    For any path $\rho = s_0s_1\ldots \in \S^{\omega}$, there exists a unique run $q_0q_1\ldots \in \Q^{\omega}$ for the trace $\tau(\rho) = L(s_0)L(s_1)\ldots \in \Sigma^{\omega}$, denote,
\begin{equation*}
    \rho^{\otimes} = (s_0,q_0)(s_1,q_1)\ldots \in (\S \times \Q)^{\omega} \;(\text{or equiv.} \in (S^{\otimes})^{\omega})
\end{equation*}
where the $(s_0, q_0) = (s_0, \Delta(\Q_0, \L(s_0)))$ and $(s_t, q_t) = (s_t, \Delta(q_{t-1}, \L(s_t)))$ for all $t = 1, 2, \ldots$. This is easy to see because the transition function $\Delta$ is deterministic.
\end{corollary}

\begin{lemma}  Consider a fixed path $\rho \in \S^{\omega}$ and corresponding run $q_0q_1\ldots \in \Q^{\omega}$, trace $\tau(\rho) \in \Sigma^{\omega}$ and path $\rho^{\otimes} \in (S^{\otimes})^{\omega}$ (defined in Corollary \ref{cor:unqiuerun}), then,
\begin{equation*}
    \rho \not \models P_{\text{safe}} \quad \text{if and only if} \quad \rho^{\otimes} \models \lozenge \text{unsafe} 
\end{equation*}
\end{lemma}

\begin{proof}
    ($\Rightarrow$). By definition $\rho \not \models P_{\text{safe}}$ if and only if $\tau(\rho) \in \mathcal{L}(\mathcal{D})$. Recall the definition $\mathcal{L}(\mathcal{D}) = \{ w \in \Sigma^{*} \mid \Delta^*(\Q_0, w) \in \F \}$, thus $\tau(\rho) \in \mathcal{L}(\mathcal{D})$ implies that there exists some $t = 0,1,\ldots$ such that $q_t \in \F$, then by construction (c.f.~ Definition ~\ref{def:product-mdp}) this implies that $\rho^{\otimes} \models \lozenge \text{unsafe}$.

    ($\Leftarrow$). By LTL semantics and Definition \ref{def:product-mdp} (product MDP), $\rho^{\otimes} \models \lozenge \text{unsafe}$ implies that for the unique run $q_0q_1\ldots \subset \rho^{\otimes}$ there exists $t = 0,1,\ldots$ such that $q_t \in \F$. We notice that since $\mathcal{L}(\mathcal{D}) = \textit{BadPref}(P_{\textit{safe}})$ then once the DFA reaches an accepting state it will remain in an accepting state for the rest of the run, if not we can derive a contradiction (e.g., if there is an unsafe prefix for a trace not in $\mathcal{L}(\mathcal{D})$ then this violates the definition of a safety property and thus $\mathcal{L}(\mathcal{D}) \neq \textit{BadPref}(P_{\textit{safe}})$). Therefore, if there exists $t=0,1,\ldots$ such that $q_t \in \F$ then $\tau(\rho) \in \mathcal{L}(\mathcal{D})$, which implies that $\rho \not \models P_{\text{safe}}$.
\end{proof}

Noticing that the DFA $\mathcal{D}$ is deterministic and does not affect the path probabilities in the product Markov chain $\mathcal{M}^{\otimes}_{\pi}$ we supply the following Corollary. 

\begin{corollary} For every measurable set of paths $P$ in $\mathcal{M}$, 
    \begin{equation*}
    \textstyle{\Pr_{\pi}^{\mathcal{M}}}(P) = \textstyle{\Pr_{\pi}^{\mathcal{M}^{\otimes}}}(\{\rho^{\otimes} \mid \rho \subset \rho^{\otimes}, \rho \in P\})
\end{equation*}
where $\rho^\otimes$ is defined as before (see Corollary \ref{cor:unqiuerun}). The proof can be found in \cite{baier2008principles}.
\end{corollary}

It remains to build the appropriate $P$. Let $P$ be the set of paths starting in some $s \in \S$ that refute $P_{\text{safe}}$, i.e.,
\begin{equation*}
    P := \{\rho \in \S^{\omega} \mid \rho[0] = s, \{w_{\text{pref}} \in \Sigma^{\omega} \mid w_{\text{pref}} \preceq \tau(\rho) \} \cap \mathcal{L}(\mathcal{D}) \neq \varnothing\}
\end{equation*}
and $P^{\otimes}$ is the set of paths starting in $s^{\otimes} = (s, \Delta(\Q_0, L(s)))$ that eventually reach an accepting (`unsafe' labelled) state in the DFA $\mathcal{D}$, i.e.,
\begin{equation*}
    P^{\otimes} := \{ \rho^{\otimes} \in \S^{\otimes} \mid \rho^{\otimes}[0] =  s^{\otimes}, \rho^{\otimes} \models \lozenge \text{unsafe} \}
\end{equation*}
then by construction (of $P$ and $P^{\times}$) we have,
\begin{equation*}
    \textstyle{\Pr^{\mathcal{M}}_{\pi}}(P) = \textstyle{\Pr^{\mathcal{M}^{\otimes}}_{\pi}}(\{\rho^{\otimes} \mid \rho[0] = s, \rho \in P\}) = \textstyle{\Pr^{\mathcal{M}^{\otimes}}_{\pi}}(P^{\otimes})
\end{equation*}
Finally, the probability $\Pr^{\mathcal{M}}_{\pi}(P)$ and $\Pr^{\mathcal{M}}_{\pi}(s \models P_{\text{safe}})$ are related as follows,
\begin{align*}
    \textstyle{\Pr^{\mathcal{M}}_{\pi}}(s \models P_{\text{safe}}) &= 1 - \textstyle{\Pr_{\pi}^{\mathcal{M}}}(P)\\
    &= 1 - \textstyle{\Pr^{\mathcal{M}^{\otimes}}_{\pi}}(P^{\times})\\
    & = 1 -  \textstyle{\Pr^{\mathcal{M}^{\otimes}}_{\pi}}(s^{\otimes} \models \lozenge \text{unsafe})\\
\end{align*}
This completes the proof.
\end{proof}

\subsection{Proof of Proposition \ref{prop:maximally}}

\begin{restatedprop}{\ref{prop:maximally} (restated)} The optimal policy $\pi^*$ that minimizes the expected return under eventual discounting \cite{voloshin2023eventual} and with rewards given by \eqref{eq:costfn}, is such that $S(\pi^*, \epsilon) = S^*(\epsilon)$ for any $\epsilon \in (0, 1)$ (is maximally safe).     
\end{restatedprop}

\begin{proof} 
We start by proving the existence of a non-empty maximally safe set $S^*(\epsilon)$ for all $\epsilon\in [0, 1]$, under the assumption that there exists at least one policy with non-zero safety probability.

Consider the transition kernel $\P^{\otimes}_{\pi}({s^{\otimes}}' \mid s^{\otimes}) := \sum_{a \in \A} \P^{\otimes}({s^{\otimes}}'\mid s^{\otimes}, a)\pi(a \mid s^{\otimes})$ for a fixed policy $\pi \in \Pi^\otimes$ in the product MDP $\mathcal{M}^\otimes$ (c.f., Definition \ref{def:product-mdp}). Standard Markov chain results imply that the Markov chain induced by $\P^{\otimes}_{\pi}$ partitions the state space $S^\otimes$ into a disjoint union of transient states $\mathcal{T}_{\pi}$ (visited finitely many times) and $k$ irreducible recurrent classes $\mathcal{R}^1_{\pi},\mathcal{R}^2_{\pi}, \ldots \mathcal{R}^k_{\pi}$ \cite{durrett1999essentials}, i.e. $\S^{\otimes} = \mathcal{T}_{\pi} \cup \mathcal{R}^1_{\pi} \cup \mathcal{R}^2_{\pi} \ldots \cup \mathcal{R}^k_{\pi}$. We also denote $\F^{\otimes} \subseteq \S^{\otimes}$ as the set of states labelled as ``unsafe'' in the product MDP $\mathcal{M}^{\otimes}$, i.e., $\F^{\otimes} = \{ s^{\otimes} \in \S^{\otimes}\mid \text{unsafe} \in L^{\otimes}(s^{\otimes})\}$ (see Definition \ref{def:product-mdp}). Assuming there exists at least one policy with non-zero safety probability, i.e.,$\exists \epsilon> 0,\: \pi'\in \Pi^{\otimes} \;\text{s.t.}\; S(\pi', \epsilon) \neq \varnothing$, then we can conclude that, 
\begin{equation*}
    \exists \mathcal{R}^i_{\pi'}\; \text{s.t.}\; \F^{\otimes} \cap \mathcal{R}^i_{\pi'} = \varnothing
\end{equation*}
In words, there exists at least one policy $\pi'$ such that there is a recurrent set $\mathcal{R}^i_{\pi'}$ with no ``unsafe'' states. Thus $S(\pi_{\text{safe}}, 0) \neq \varnothing$ which implies $S(\pi_{\text{safe}}, \epsilon) \neq \varnothing$ for all $\epsilon \in [0, 1]$. Since $S(\pi, \epsilon) \subseteq S^*(\epsilon)$ for all $\pi \in \Pi^{\otimes}$ and $\epsilon\in [0, 1]$ then $S^*(\epsilon) \neq \varnothing$ for all $\epsilon \in [0, 1]$ by our earlier observation. This completes the first part of the proof.

We continue by showing the optimal policy under eventual discounting is also maximally safe. First we define the optimal policy $\pi^*$ under the eventual-discounted return \cite{voloshin2023eventual}; discounting only on accepting product states (i.e., product states in $\F^\otimes$). 
\begin{equation*}
    \pi^*(s^{\otimes}) := \arg\inf_{\pi \in \Pi^{\otimes}}\mathbb{E}^{\mathcal{M}^\otimes}_\pi\left[ \sum^{\infty}_{t=0} \Gamma_t \mathbf C(s^{\otimes}, a, {s^{\otimes}}') \mid s_0^{\otimes} = s^{\otimes} \right] \quad \text{(for all $s^{\otimes} \in \S^{\otimes}$)}
\end{equation*}
where,
\begin{equation*}
 \gamma_t =
\begin{cases}
    \gamma & \text{if $\text{unsafe} \in L^{\otimes}(s^{\otimes})$,}\\
    1 &\text{otherwise.}
\end{cases} \quad \text{and} \quad \Gamma_t = \prod^{t-1}_{i=0} \gamma^i, \, \Gamma_0 = 1
\end{equation*}
We can rewrite this in terms of the expected eventual-discounted return along ``safe'' and ``unsafe'' paths,
\begin{align*}
    \pi^*(s^{\otimes}) & = \arg\inf_{\pi\in \Pi^{\otimes}} \Bigg\{
     \mathbb{E}^{\mathcal{M}^\otimes}_{\pi} \left[ \sum^{\infty}_{t=0} \Gamma_t \mathbf C(s^{\otimes}_t, a_t, s^{\otimes}_{t+1}) \mid \rho^{\otimes}[0] = s^{\otimes}, \rho^{\otimes} \not \models \lozenge\text{unsafe}\right] \times \mathbb{P}_\pi(s^{\otimes} \not \models \lozenge \text{unsafe})\\
    &\quad + \mathbb{E}^{\mathcal{M}^\otimes}_{\pi} \left[ \sum^{\infty}_{t=0} \Gamma_t \mathbf C(s^{\otimes}_t, a_t, s^{\otimes}_{t+1}) \mid \rho^{\otimes}[0] = s^{\otimes}, \rho^{\otimes}\models \lozenge\text{unsafe}\right] \times \mathbb{P}_\pi(s^{\otimes} \models \lozenge \text{unsafe}) \Bigg\}
\end{align*}
Now consider the following two facts:
\begin{enumerate}[label=(\arabic*)]
    \item All ``safe'' paths have a discounted return of zero.
    \item All ``unsafe'' paths have a discounted return of precisely $1 / (1 -\gamma)$ (as the accepting product state $\F^\otimes$ constitutes a strongly connected end component, i.e., once you enter you cannot leave).
\end{enumerate}
Thus,
\begin{align*}
    \pi^*(s^{\otimes}) &= \arg\inf_{\pi\in \Pi^{\otimes}} \left \{ 0/(1 - \gamma) \times \mathbb{P}_\pi(s^{\otimes} \not \models \lozenge \text{unsafe}) + 1/(1-\gamma) \times \mathbb{P}_\pi(s^{\otimes} \models \lozenge \text{unsafe}) \right\}\\
    & = \arg\inf_{\pi\in \Pi^{\otimes}} \left \{ 1/(1-\gamma) \times \mathbb{P}_\pi(s^{\otimes} \models \lozenge \text{unsafe})\right  \}
\end{align*}
Given $1 / (1 - \gamma)$ is a constant, then, 
\begin{equation*}
    \pi^*(s^{\otimes}) = \arg\inf_{\pi\in \Pi^{\otimes}} \mathbb{P}_\pi(s^{\otimes} \models \lozenge \text{unsafe}) \quad \text{(for all $s^{\otimes} \in \S^{\otimes}$)}
\end{equation*}

To complete the proof we derive a contradiction.

Suppose that the optimal policy $\pi^*$ (minimizing the eventual-discounted return) is not maximally safe. Then that means there exists some $\epsilon \in [0, 1]$ and $s^{\otimes} \in \S^{\otimes}$ such that, $s^{\otimes} \in S^*(\epsilon)$ but $s^{\otimes} \not \in S(\pi^*, \epsilon)$. However, by definition we have,
\begin{equation*}
    S(\pi^*, \epsilon) = \{s^{\otimes} \in \S^{\otimes} \mid \mathbb{P}_{\pi^*}(s^{\otimes} \models \lozenge \text{unsafe}) < \epsilon \}
\end{equation*}
Substituting in our previous result, we have,
\begin{equation*}
    S(\pi^*, \epsilon) = \{s^{\otimes} \in \S^{\otimes} \mid \inf_{\pi\in \Pi^{\otimes}} \{\mathbb{P}_\pi(s^{\otimes} \models \lozenge \text{unsafe})\} < \epsilon \}
\end{equation*}
Thus $s^{\otimes} \not \in S(\pi^*, \epsilon)$ implies that $\inf_{\pi\in \Pi^{\otimes}} \mathbb{P}_\pi(s^{\otimes} \models \lozenge \text{unsafe}) \geq \epsilon$, which directly contradicts our assumption that $s^{\otimes} \in S^*(\epsilon)$. 
Therefore, the optimal policy $\pi^*$ that that minimizes the eventual-discounted return must also be a maximally safe policy.
\end{proof}

\subsection{Proof of Theorem \ref{thm:constraintsatisfaction}}

\begin{restatedthm}{\ref{thm:constraintsatisfaction} (restated)} Assume that all initial states satisfy the global safety constraint under the safe policy, i.e., $\S^\otimes_0 \subseteq S(\pi_{\mathrm{safe}}, \epsilon)$. Then if the step-wise risk budgets satisfy $\sum_{t=0}^\infty \epsilon_t \leq \epsilon$, then the shielded policy $\pi^\lozenge$ satisfies $\mathbb{P}_{\pi^\lozenge}(s_0 \models P_{\mathrm{safe}}) \geq 1-\epsilon, \;\; \forall s_0 \in \S_0$. 
\end{restatedthm}

\begin{proof}
Without loss of generality we fix an initial state $s_0 \in \S_0$ and $s_0^\otimes = (s_0, \Delta(\Q_0, L(s_0)))$ and prove by induction.

Let $t$, be the last time that $\pi^\lozenge$ used an action sampled from the task policy $\widehat{\pi}$. Consider the following two cases:

\begin{enumerate}[label=(\arabic*)]
    \item No such $t$ exists. Then $\pi^\lozenge$ always picks actions $a_t \sim \pi_{\text{safe}}(s_t^{\otimes})$, in that case we have $\mathbb{P}_{\pi^\lozenge} (s_0 \models P_{\mathrm{safe}}) \geq 1 - \epsilon$, given that, $s_0^{\otimes} \in S(\pi_{\text{safe}}, \epsilon)$ from the theorem statement.
    \item If $t = 0, 1, 2, \ldots$ then the last time that $\pi^\lozenge$ allowed $a_t \sim \widehat{\pi}(s_t^{\otimes})$ (at timestep $t$), we verified that $s^\otimes_t \in S(\pi_{\text{safe}}, a_t, \epsilon_t)$, thus for the remainder of the episode we have $\mathbb{P}_{\pi^\lozenge} (s^\otimes_t \models \lozenge \mathrm{unsafe}) < \epsilon_t$. By a simple union bound the probability of reaching an unsafe state up until timestep $t$ (i.e., at timestep $t-1$) is less than $\sum_{i=0}^{t-1} \epsilon_i$. Thus, by union bounding both events we establish the following lower bound on the safety probability,
    \begin{align*}
        \mathbb{P}_{\pi^\lozenge} (s_0 \models P_{\mathrm{safe}}) & > 1 - \epsilon_t - \sum_{i=0}^{t-1} \epsilon_i \\
        & =  1 - \sum_{i=0}^t\epsilon_i \\
        & \geq  1 - \sum_{i=0}^{\infty}\epsilon_i \\
        & \geq 1 - \epsilon
    \end{align*}
\end{enumerate}
\end{proof}

\subsection{Proof of Proposition \ref{prop:bellmanoperator}}

\begin{restatedprop}{\ref{prop:bellmanoperator} (restated)}
If (i) $\gamma\in[0,1)$ and (ii) $0\le \R(s,a,s')\le R_{\max}<\infty$ for all $(s, a, s') \in \S \times \A \times \S$, then $\mathbf{B}_{\textbf{safe}}$ is a $\gamma$-contraction under the sup-norm, $\|\cdot\|_{\infty}$, and converges to the unique fixed point $Q^*_{\text{safe}}$, the optimal Q-function subject to the action restrictions imposed by $S^*(a, \epsilon_t)$.
\end{restatedprop}

\begin{proof}
In this proof we aim to show the following:
\begin{enumerate}[label=(\arabic*)]
        \item $\mathbf{B}_{\textbf{safe}}$ is a $\gamma$-contraction under the sup-norm (i.e., $\|\cdot\|_{\infty}$).
        \item Iterating on $\mathbf{B}_{\textbf{safe}}$ converges to the unique fixed point $Q^*_{\text{safe}}$.
        \item $Q^*_{\text{safe}}$ is the optimal state-action value function subject to the action restrictions imposed by $\S^*(a, \epsilon_t)$.
    \end{enumerate}

\textit{Contraction property.} We define the  sup-norm (or max norm) for Q-functions in the metric space on $\S^\otimes \times \A$ as follows,
\begin{equation*}
    \Vert Q_1 - Q_2 \Vert_{\infty} = \max_{(s^{\otimes}, a)} \vert Q_1(s^{\otimes}, a) - Q_2(s^{\otimes}, a) \vert
\end{equation*}
The goal is to show that,
\begin{equation}
    \Vert \mathbf{B}_{\textbf{safe}} Q_1 - \mathbf{B}_{\textbf{safe}} Q_2  \Vert_{\infty} \leq \gamma \Vert Q_1 - Q_2 \Vert_{\infty} \label{eq:contraction}
\end{equation}
We identify the following two cases,
\begin{enumerate}[label=(\arabic*)]
    \item $s^{\otimes} \not \in \S^*(a, \epsilon_t) \Rightarrow$ unsafe action $a$ under step-wise constraint $\epsilon_t$. Then,

\begin{equation*}
        \mathbf{B}_{\textbf{safe}} Q_1(s^{\otimes}, a) =0,\;
        \mathbf{B}_{\textbf{safe}} Q_2(s^{\otimes}, a) =0,\;
         \Rightarrow \vert \mathbf{B}_{\textbf{safe}} Q_1(s^{\otimes}, a) - \mathbf{B}_{\textbf{safe}} Q_2(s^{\otimes}, a) \vert = 0
    \end{equation*}
\item $s^{\otimes} \in \S^*(a, \epsilon_t) \Rightarrow$ safe action $a$ under step-wise constraint $\epsilon_t$. Then,
\begin{align*}
        \mathbf{B}_{\textbf{safe}}Q_1(s^{\otimes}, a) &= \mathbb{E}^{\mathcal{M}^\otimes}\left[ \R(s, a, s') + \gamma \max_{a'} Q_1({s^{\otimes}}', a')\right]\\
        \mathbf{B}_{\textbf{safe}}Q_2(s^{\otimes}, a) &= \mathbb{E}^{\mathcal{M}^\otimes} \left[ \R(s, a, s') + \gamma \max_{a'} Q_2({s^{\otimes}}', a')\right]
    \end{align*}
    subtracting,
    \begin{align*}
        &\vert  \mathbf{B}_{\textbf{safe}}Q_1(s^{\otimes}, a) - \mathbf{B}_{\textbf{safe}}Q_2(s^{\otimes}, a)\vert \\
        & \qquad = \gamma \left\vert \mathbb{E}^{\mathcal{M}^\otimes}\left[ \max_{a'} Q_1({s^{\otimes}}', a') - \max_{a'} Q_2({s^{\otimes}}', a') \right] \right\vert \\
        &\qquad\leq \gamma  \mathbb{E}^{\mathcal{M}^\otimes}\left\vert \max_{a'} Q_1({s^{\otimes}}', a') - \max_{a'} Q_2({s^{\otimes}}', a') \right\vert\\
        &\qquad = \gamma \Vert Q_1({s^{\otimes}}', a') - \max_{a'} Q_2({s^{\otimes}}', a') \Vert_{\infty} 
    \end{align*}
\end{enumerate}
combining both cases and taking the maximum over $(s^{\otimes}, a) \in \S^{\otimes} \times \A$ establishes \eqref{eq:contraction}.

\textit{Unique fixed point.} (i) $\gamma\in[0,1)$ and (ii) $0\le \R(s,a,s')\le R_{\max}<\infty$ in the proposition statement are sufficient conditions to apply the \emph{Banach Fixed Point Theorem} \citep{banach1922operations}, which states (informally), any $\gamma$-contraction in a complete metric space (bounded real-valued functions on $\S^{\otimes} \times \A$) has: a unique fixed point, which is obtained by iterative application of the operator.

Therefore, starting from any $Q_0$, we let $Q^*_{\text{safe}} := \lim_{n \to \infty}(\mathbf{B}_{\textbf{safe}})^n Q_0$ denote the fixed point of the operator $\mathbf{B}_{\textbf{safe}}$. Furthermore, $Q^*_{\text{safe}}$ satisfies,
\begin{equation*}
    Q^*_{\text{safe}} = \mathbf{B}_{\textbf{safe}}(Q^*_{\text{safe}})
\end{equation*}

\textit{Optimal state-action value function.} We argue that the fixed point of the operator $\mathbf{B}_{\textbf{safe}}$, namely $Q^*_{\text{safe}}$ is the optimal state-action value function subject to the action restrictions imposed by the maximally safe set $\S^*(a, \epsilon_t)$. Furthermore, the corresponding greedy policy $\pi^*(s^{\otimes}) = \arg\max_a Q^*_{\text{safe}}(s^{\otimes}, a)$ upper bounds every safe policy in terms of discounted reward. First we establish the following Lemmas.
\begin{lemma} \label{lem:valuezero}
    For every $(s^{\otimes},a) \in \S^{\otimes}\times \A$ such that $s^{\otimes} \not \in \S^*(a, \epsilon_t)$, we have,
\begin{equation*}
    Q^*_{\text{safe}}(s^{\otimes}, a) = 0
\end{equation*}
\end{lemma}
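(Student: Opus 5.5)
This follows directly from the fixed-point identity established just above. By the Banach fixed point argument, $Q^*_{\text{safe}} = \mathbf{B}_{\textbf{safe}}(Q^*_{\text{safe}})$ holds pointwise on $\S^{\otimes}\times\A$. Evaluating both sides at a pair $(s^{\otimes},a)$ with $s^{\otimes}\notin \S^*(a,\epsilon_t)$ therefore reduces the claim to evaluating the operator on the right-hand side.

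By the definition of $\mathbf{B}_{\textbf{safe}}$ in \eqref{eq:modifiedbellman}, the second branch applies whenever $s^{\otimes}\notin S^*(a,\epsilon_t)$. That branch assigns the value $0$ regardless of the argument $Q$. Hence $(\mathbf{B}_{\textbf{safe}}Q)(s^{\otimes},a)=0$ for every bounded $Q$, and in particular for $Q=Q^*_{\text{safe}}$. Combined with the fixed-point identity, this gives $Q^*_{\text{safe}}(s^{\otimes},a)=0$.

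There is no real obstacle here. The only point to check is that the masking condition is well defined and does not depend on $Q$. The set $S^*(a,\epsilon_t)$ in \eqref{eq:maximal-state-action} is fixed by the product MDP $\mathcal{M}^\otimes$ and the threshold $\epsilon_t$, so the case split is identical at every iterate. The zero branch is therefore a constant that is preserved under iteration, and the same argument gives $((\mathbf{B}_{\textbf{safe}})^n Q_0)(s^{\otimes},a)=0$ for all $n\ge 1$. This agrees with the limit characterisation $Q^*_{\text{safe}}=\lim_{n\to\infty}(\mathbf{B}_{\textbf{safe}})^nQ_0$.
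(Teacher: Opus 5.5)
Your proposal is correct and matches the paper's proof: both evaluate the fixed-point identity $Q^*_{\text{safe}} = \mathbf{B}_{\textbf{safe}} Q^*_{\text{safe}}$ at a pair with $s^{\otimes}\notin S^*(a,\epsilon_t)$ and use that the second branch of the operator is identically zero there. Your extra remark that $((\mathbf{B}_{\textbf{safe}})^n Q_0)(s^{\otimes},a)=0$ for all $n\ge 1$ is a harmless consistency check and is not needed for the argument.
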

\begin{proof}
    This is a direct consequence of the Bellman operator $\mathbf{B}_{\textbf{safe}}$; since $Q^*_{\text{safe}}(s^{\otimes}, a)$ is a fixed point of $\mathbf{B}_{\textbf{safe}}$,
\begin{equation*}
    Q^*_{\text{safe}}(s^{\otimes}, a) = (\mathbf{B}_{\textbf{safe}}Q^*_{\text{safe}})(s^{\otimes}, a)
\end{equation*}
and by construction $\mathbf{B}_{\textbf{safe}}(s^{\otimes}, a)=0$ whenever $s^{\otimes} \not \in \S^*(a, \epsilon_t)$.
\end{proof}

Let the set of safe policies be,
\begin{equation*}
    \Pi_{\text{safe}}^{\otimes} = \{ \pi \in \Pi^{\otimes} \mid \pi(a \mid s^{\otimes})=0 \;\text{if $s^{\otimes} \not \in \S^*(a, \epsilon_t)$}\}
\end{equation*}
And for any safe policy $\pi \in \Pi^{\otimes}_{\text{safe}}$ define the \emph{policy evaluation operator},
\begin{equation*}
    (\mathbf{B}_{\pi}Q)(s^{\otimes}, a) = \mathbb{E}^{\mathcal{M}^\otimes} \left[ \R(s, a, s') + \gamma \mathbb{E}_{\pi}Q({s^{\otimes}}', a)\right]
\end{equation*}
\begin{lemma}{Monotonicity of the operator} \label{lem:monotonicity}
    For every $\pi \in \Pi_{\text{safe}}^{\otimes}$ and for every $Q$ in the metric space on $\S^\otimes \times \A$,
\begin{equation*}
    \mathbf{B}^*_{\text{safe}}Q \geq \mathbf{B}_{\pi}Q \qquad \text{(element wise)}
\end{equation*}
\end{lemma}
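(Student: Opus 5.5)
The inequality is checked pointwise, with one case for each branch in the definition of $\mathbf{B}_{\textbf{safe}}$ (written $\mathbf{B}^*_{\text{safe}}$ in the lemma). First fix $\pi \in \Pi^{\otimes}_{\text{safe}}$, a bounded $Q$, and a pair $(s^{\otimes},a)$.

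\emph{Case $s^{\otimes}\in\S^*(a,\epsilon_t)$.} Both operators share the reward term $\mathbb{E}^{\mathcal{M}^\otimes}[\R(s,a,s')]$, so it cancels. It therefore suffices to compare, at each successor ${s^{\otimes}}'$, the quantities $\max_{a'}Q({s^{\otimes}}',a')$ and $\sum_{a'}\pi(a'\mid {s^{\otimes}}')Q({s^{\otimes}}',a')$. The second is a convex combination of the values $Q({s^{\otimes}}',\cdot)$, so it is at most their maximum. Multiplying by $\gamma\ge 0$ and integrating against $\P^\otimes(\cdot\mid s^{\otimes},a)$ keeps the inequality, since expectation is monotone. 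This gives $(\mathbf{B}_{\textbf{safe}}Q)(s^{\otimes},a)\ge(\mathbf{B}_\pi Q)(s^{\otimes},a)$.

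\emph{Case $s^{\otimes}\notin\S^*(a,\epsilon_t)$.} This is the main obstacle. Here $\mathbf{B}_{\textbf{safe}}$ outputs $0$, while $\mathbf{B}_\pi$ as literally written still outputs an expected reward plus a continuation value. I would resolve this in one of two ways:
\begin{itemize}
\item Define $\mathbf{B}_\pi$ consistently with the restriction, setting it to $0$ on disallowed pairs. The safe policy never plays such actions, so these entries never affect its value $Q_\pi$. The inequality then holds with equality.
\item Otherwise, restrict the comparison to the pairs actually reachable under $\pi$, i.e.\ those with $\pi(a\mid s^{\otimes})>0$. These all lie in the first case because $\pi\in\Pi^{\otimes}_{\text{safe}}$.
\end{itemize}
I would adopt the first convention and state it explicitly, matching Lemma~\ref{lem:valuezero}.

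A caveat remains even under the first convention: the continuation term uses $\max_{a'}$ over all actions, including masked ones. For the downstream optimality argument this is harmless. Rewards are non-negative by Proposition~\ref{prop:bellmanoperator}(ii), so $Q^*_{\text{safe}}\ge 0$. Masked actions have value $0$ by Lemma~\ref{lem:valuezero}, so they never strictly exceed an allowed action's value. I would add this remark so that the lemma can be chained. Iterating $\mathbf{B}_\pi^n Q\le\mathbf{B}_{\textbf{safe}}^n Q$ uses monotonicity of $\mathbf{B}_\pi$, which holds since it is an expectation with non-negative weights, and passing to the limit gives $Q_\pi\le Q^*_{\text{safe}}$.
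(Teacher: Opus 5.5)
Your proof is correct and follows the paper's own argument: the same case split on whether $s^{\otimes}\in\S^*(a,\epsilon_t)$, with the step that a convex combination of $Q$-values lies below their maximum in the allowed case. Your explicit convention that $\mathbf{B}_\pi$ is zero on disallowed pairs is exactly what the paper tacitly assumes when it says ``both sides are zero'' there; the displayed $\mathbf{B}_\pi$ is unmasked, so without that convention the inequality can fail. Comparing against the maximum over all actions, rather than only over safe ones as the paper phrases it, is if anything cleaner.
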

\begin{proof}
The proof is a straightforward argument by cases:
\begin{enumerate}[label=(\arabic*)]
    \item If $s^{\otimes} \not \in \S^*(a, \epsilon_t)$ then both sides are zero, so the inequality holds.
    \item If $s^{\otimes} \in \S^*(a, \epsilon_t)$ then $\mathbf{B}^*_{\text{safe}}$ is identical to the usual optimality update but maximizing only over safe actions (all unsafe actions have value $0$); the expectation under $\pi \in \Pi^{\otimes}_{\text{safe}}$ is a convex combination of those same safe actions, thus its value cannot exceed the maximum.
\end{enumerate}
\end{proof}

\begin{corollary} \label{cor:upperbound} For any starting function $Q_0$ and any safe policy $\pi \in \Pi^{\otimes}_{\text{safe}}$
    \begin{equation*}
        (\mathbf{B}^*_{\text{safe}})^nQ_0 \geq (\mathbf{B}_{\pi})^nQ_0 \quad \forall n \geq 0
    \end{equation*}
Taking $n \to \infty$ gives,
\begin{equation*}
    Q^{*}_{\text{safe}} \geq Q_{\pi} \quad \forall \pi \in \Pi^{\otimes}_{\text{safe}}
\end{equation*}
which follows from Lemma \ref{lem:monotonicity} and because $(\mathbf{B}^*_{\text{safe}})^nQ_0 \to Q^{*}_{\text{safe}}$ (by the contraction property) and $(\mathbf{B}_{\pi})^nQ_0 \to Q_{\pi}$. Thus no safe policy can achieve a higher value than $Q^{*}_{\text{safe}}$. 
\end{corollary}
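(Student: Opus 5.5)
The plan is a straightforward induction on $n$, followed by passing to the limit. Write $X_n := (\mathbf{B}^*_{\text{safe}})^n Q_0$ and $Y_n := (\mathbf{B}_{\pi})^n Q_0$ for a fixed safe policy $\pi \in \Pi^{\otimes}_{\text{safe}}$. The base case $n=0$ is trivial, since $X_0 = Y_0 = Q_0$.

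For the inductive step I need two ingredients. The first is Lemma \ref{lem:monotonicity}, which gives $\mathbf{B}^*_{\text{safe}} Q \geq \mathbf{B}_\pi Q$ for every fixed $Q$. The second is \emph{monotonicity of $\mathbf{B}^*_{\text{safe}}$ itself}: if $Q \geq Q'$ elementwise, then $\mathbf{B}^*_{\text{safe}} Q \geq \mathbf{B}^*_{\text{safe}} Q'$. I would verify this second fact by the same case split used for the contraction property.
\begin{itemize}
\item On pairs with $s^\otimes \notin S^*(a,\epsilon_t)$, both sides are $0$.
\item On the remaining pairs, $\max_{a'} Q({s^\otimes}',a') \geq \max_{a'} Q'({s^\otimes}',a')$ holds pointwise. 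Multiplying by $\gamma \geq 0$, adding the common reward term and taking expectations under $\P^\otimes$ preserves the inequality.
\end{itemize}
Assuming $X_n \geq Y_n$, these combine as
\begin{equation*}
X_{n+1} = \mathbf{B}^*_{\text{safe}} X_n \;\geq\; \mathbf{B}^*_{\text{safe}} Y_n \;\geq\; \mathbf{B}_\pi Y_n = Y_{n+1},
\end{equation*}
which closes the induction.

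For the limit statement, I would invoke Proposition \ref{prop:bellmanoperator}, via the Banach fixed point argument already given, to obtain $X_n \to Q^*_{\text{safe}}$ in $\|\cdot\|_\infty$. For $Y_n$, I would show separately that $\mathbf{B}_\pi$ is a $\gamma$-contraction. Its difference on two inputs is $\gamma\,\mathbb{E}\,\mathbb{E}_\pi[Q_1 - Q_2]$, which is bounded by $\gamma\|Q_1-Q_2\|_\infty$ because an average is bounded by a supremum. Hence $Y_n \to Q_\pi$, the unique fixed point of $\mathbf{B}_\pi$, i.e.\ the action-value function of $\pi$. Since the state-action space is finite, sup-norm convergence gives pointwise convergence, and a weak inequality $X_n(s^\otimes,a) \geq Y_n(s^\otimes,a)$ for all $n$ is preserved under pointwise limits. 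This yields $Q^*_{\text{safe}} \geq Q_\pi$.

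The step I expect to be the main obstacle is consistency between the definition of $\mathbf{B}_\pi$ and Lemma \ref{lem:monotonicity} on the masked pairs $s^\otimes \notin S^*(a,\epsilon_t)$. As written, $\mathbf{B}_\pi$ is not zeroed there, so case (1) of that lemma implicitly treats $\mathbf{B}_\pi$ as masked in the same way as $\mathbf{B}^*_{\text{safe}}$. I would handle this by defining $\mathbf{B}_\pi$ with the same mask, so that $Q_\pi$ is the value of $\pi$ under the action-restricted model. This changes nothing on pairs that $\pi$ actually plays, since $\pi \in \Pi^\otimes_{\text{safe}}$ puts zero mass on masked actions.

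An alternative avoids the mask issue altogether by comparing state values instead. For a safe $\pi$, $V_\pi(s^\otimes) = \mathbb{E}_\pi Q_\pi(s^\otimes,\cdot)$ only involves unmasked actions, on which the two operators agree, so the argument above gives $\max_a Q^*_{\text{safe}}(s^\otimes,a) \geq V_\pi(s^\otimes)$. This state-value statement is the one actually needed to conclude that no safe policy beats the greedy policy.
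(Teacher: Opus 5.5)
Your proof is correct and follows the paper's route: iterate the dominance in Lemma~\ref{lem:monotonicity}, then pass to the limit using the contraction property. Your additions are the parts the paper's one-line justification leaves implicit or gets wrong, and they are genuinely needed: order-preservation of $\mathbf{B}^*_{\text{safe}}$ to close the induction, contraction of $\mathbf{B}_\pi$ to justify $(\mathbf{B}_\pi)^n Q_0 \to Q_\pi$, and masking $\mathbf{B}_\pi$ (or comparing state values) so that case (1) of the lemma holds --- with the unmasked $\mathbf{B}_\pi$ one can have $Q_\pi(s^\otimes,a) > 0 = Q^*_{\text{safe}}(s^\otimes,a)$ on masked pairs.
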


Now define the greedy safe policy as,
\begin{equation*}
    \pi^*(s^{\otimes}) = \arg\max_a Q^*_{\text{safe}}(s^{\otimes}, a) \quad \text{(for all $s^{\otimes} \in \S^{\otimes}$)}
\end{equation*}
Since rewards are non-negative, then $Q^{*}(s^{\otimes}, a) \geq 0$ for all $(s^{\otimes}, a)  \in \S^{\otimes} \times \A$, thus we always implicitly prioritize safe actions first when taking the $\arg\max$ (recall all unsafe actions have value $0$, c.f., Lemma \ref{lem:valuezero}). Ties can be broken in the $\arg\max$ by picking any available safe action, when for example, all safe and unsafe actions have value 0. We require one final lemma before putting it all together.
\begin{lemma}[Bellman consistency for $\pi^*$] \label{lem:consistency}
\begin{equation*}
    \mathbf{B}_{\pi^*} Q^*_{\text{safe}} = Q^*_{\text{safe}}
\end{equation*}
\end{lemma}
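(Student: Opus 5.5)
I plan to prove Lemma~\ref{lem:consistency} by a pointwise comparison of $(\mathbf{B}_{\pi^*} Q^*_{\text{safe}})(s^\otimes,a)$ with $(\mathbf{B}_{\textbf{safe}} Q^*_{\text{safe}})(s^\otimes,a)$. Since $Q^*_{\text{safe}}$ is the fixed point of $\mathbf{B}_{\textbf{safe}}$ (Proposition~\ref{prop:bellmanoperator}), it suffices to show that the two operators agree when applied to $Q^*_{\text{safe}}$. I would split into the same two cases used in the contraction argument, according to whether or not $s^\otimes \in S^*(a,\epsilon_t)$.

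\emph{Case $s^\otimes \in S^*(a,\epsilon_t)$.} Both operators share the immediate-reward term $\mathbb{E}^{\mathcal{M}^\otimes}[\R(s,a,s')]$, so they differ only in the continuation. For $\mathbf{B}_{\pi^*}$ the continuation is $\gamma\,\mathbb{E}^{\mathcal{M}^\otimes}\big[\sum_{a'} \pi^*(a'\mid {s^\otimes}')\,Q^*_{\text{safe}}({s^\otimes}',a')\big]$. For $\mathbf{B}_{\textbf{safe}}$ it is $\gamma\,\mathbb{E}^{\mathcal{M}^\otimes}\big[\max_{a'} Q^*_{\text{safe}}({s^\otimes}',a')\big]$. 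Because $\pi^*$ is greedy with respect to $Q^*_{\text{safe}}$, it places all its mass (or, under tie-breaking, mass spread over the maximizers) on $\arg\max_{a'} Q^*_{\text{safe}}({s^\otimes}',a')$. Hence, for every successor ${s^\otimes}'$, we have $\sum_{a'}\pi^*(a'\mid{s^\otimes}')Q^*_{\text{safe}}({s^\otimes}',a') = \max_{a'} Q^*_{\text{safe}}({s^\otimes}',a')$, and the two expressions coincide.

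\emph{Case $s^\otimes \notin S^*(a,\epsilon_t)$.} Here $\mathbf{B}_{\textbf{safe}}$ returns $0$, and Lemma~\ref{lem:valuezero} gives $Q^*_{\text{safe}}(s^\otimes,a)=0$. For consistency, $\mathbf{B}_{\pi^*}$ must also return $0$ on such pairs. I would interpret the evaluation operator restricted to $\Pi^\otimes_{\text{safe}}$ with the same masking convention used in Lemma~\ref{lem:monotonicity}: there, both operators were taken to be zero on unsafe pairs. So I would make that convention explicit, so that $\mathbf{B}_{\pi^*}$ is zero on these pairs as well.

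\textbf{Main obstacle.} The hardest step is justifying that $\pi^*$ actually lies in $\Pi^\otimes_{\text{safe}}$, i.e.\ that the greedy argmax never selects an unsafe action. Unsafe actions have value exactly $0$ and rewards are non-negative, so every safe action has value at least $0$; the risk is a tie at $0$. I would resolve it by fixing the tie-breaking rule stated in the text, which always picks a safe maximizer. This requires that every state admits at least one action $a$ with $s^\otimes \in S^*(a,\epsilon_t)$; at states with no such action, the state-wise value is $0$ under either choice and the identity still holds numerically. With that, the case analysis above yields $\mathbf{B}_{\pi^*}Q^*_{\text{safe}} = \mathbf{B}_{\textbf{safe}}Q^*_{\text{safe}} = Q^*_{\text{safe}}$.
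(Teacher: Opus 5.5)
Your proof is correct and follows the paper's argument: the same two-case split, with unsafe pairs giving zero on both sides and safe pairs reducing to the fact that the greedy $\pi^*$ concentrates its mass on $\arg\max_{a'} Q^*_{\text{safe}}({s^\otimes}',a')$, combined with the fixed-point property of $Q^*_{\text{safe}}$. Your explicit masking convention for $\mathbf{B}_{\pi^*}$ on unsafe pairs spells out what the paper assumes implicitly when it says ``both sides are zero.'' The tie-breaking concern is not needed for this identity, since any maximizer attains the max; it matters only for the later claim $\pi^* \in \Pi^\otimes_{\text{safe}}$.
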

\begin{proof}
\begin{enumerate}[label=(\arabic*)]
\item If $s^{\otimes} \not \in \S^*(a, \epsilon_t)$ then both sides are zero, so the equality holds.
\item If $s^{\otimes} \in \S^*(a, \epsilon_t)$ then,
\begin{equation*}
    (\mathbf{B}^*_{\text{safe}} Q^*_{\text{safe}})(s^{\otimes}, a) =\mathbb{E}^{\mathcal{M}^\otimes}\left[ \R(s, a, s') + \gamma \max_{a'} Q^*_\text{safe}({s^{\otimes}}', a')\right]
\end{equation*}
by construction $\pi^*$ puts all its probability mass on the maximizing action $a'$, thus the RHS equals $(\mathbf{B}_{\pi^*} Q^*_{\text{safe}})(s^{\otimes}, a)$.
\end{enumerate}
\end{proof}

Since the value for all unsafe actions is precisely 0 by Lemma  \ref{lem:valuezero} and any safe policy (those restricted to only permissible actions) has value upper bounded by $Q^{*}_{\text{safe}}$ (c.f., Corollary \ref{cor:upperbound}), with the greedy policy achieving equality (c.f., Lemma \ref{lem:consistency}). Therefore $Q^{*}_{\text{safe}}(s^{\otimes}, a)$ equals:
\begin{enumerate}[label=(\arabic*)]
    \item $0$ if $s^{\otimes} \not\in \S^*(a, \epsilon_t)$,
    \item $\max_{\pi \in \Pi^{\otimes}_{\text{safe}}} Q_{\pi} (s^{\otimes}, a)$ if $s^{\otimes} \in \S^*(a, \epsilon_t)$.
\end{enumerate}
Hence $Q^{*}_{\text{safe}}$ is precisely ``Q-function subject to the action restrictions imposed by $\S^*(a, \epsilon_t)$''.
\end{proof}

\subsection{Proof of Proposition \ref{prop:statisticalmc}}

Before we begin the proof we start with some auxiliary lemmas. First we establish a lower bound on the number of samples required for statistical model checking $m$. With the transition kernel $\P$.

\begin{lemma}\label{lem:bounded} Let $\epsilon' >0$, $\delta'>0$, $H\geq1$, $s^{\otimes} \in \S^{\otimes}$, $a \in \A$ and $\pi$ be given. By sampling $m \geq 1/(2{\epsilon'}^2)\log(2/\delta')$ paths, $M= \langle \rho^{\otimes}_i \in (\S^{\otimes})^H \rangle^{m}_{i=1}$, with the transition kernel $\P^{\otimes}_{\pi} ({s^{\otimes}}' \mid s^{\otimes})=\sum_{a \in A}\P^{\otimes}({s^{\otimes}}' \mid s^{\otimes}, a)\pi(a \mid s^{\otimes})$ (c.f., Definition \ref{def:product-mdp}), we can estimate $\mathbb{P}_\pi^{a}(s^{\otimes} \models \lozenge^{\leq H} \mathrm{unsafe})$ within an $\epsilon'$ error bound and w.p.~at least $1-\delta$. 
\end{lemma}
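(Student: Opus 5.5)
The plan is to view the estimate as an empirical mean of i.i.d.\ Bernoulli variables and then apply Hoeffding's inequality \citep{hoeffding1963probability}.

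\emph{Sampling scheme.} Each of the $m$ paths $\rho^\otimes_i$ is generated as follows. Start at $\rho^\otimes_i[0]=s^\otimes$ and resolve the first step with the fixed action $a$, drawing $\rho^\otimes_i[1]\sim\P^\otimes(\cdot\mid s^\otimes,a)$. Every later step is drawn from $\P^\otimes_\pi$, up to length $H$. Let $X_i := \ind{\exists\, j\le H,\ \mathrm{unsafe}\in\L^\otimes(\rho^\otimes_i[j])}$ indicate whether path $i$ reaches an unsafe state within the horizon.

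\emph{Unbiasedness and independence.} By construction of the path measure in $\mathcal{M}^\otimes$, the law of each $\rho^\otimes_i$ is exactly the measure under which $\mathbb{P}^a_\pi(s^\otimes\models\lozenge^{\le H}\mathrm{unsafe})$ is defined. The event $\lozenge^{\le H}\mathrm{unsafe}$ depends only on the first $H$ steps, so truncating paths at $H$ loses nothing. Hence $\mathbb{E}[X_i]=p:=\mathbb{P}^a_\pi(s^\otimes\models\lozenge^{\le H}\mathrm{unsafe})$, the $X_i$ are i.i.d.\ because the paths are sampled independently, and $X_i\in[0,1]$.

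\emph{Concentration.} Set $\hat p=\frac1m\sum_i X_i$. The two-sided Hoeffding inequality gives
\[
\Pr\left(|\hat p-p|>\epsilon'\right)\le 2\exp(-2m{\epsilon'}^2).
\]
Requiring the right-hand side to be at most $\delta'$ is equivalent to $m\ge \frac{1}{2{\epsilon'}^2}\log(2/\delta')$, which is the stated bound. Therefore $|\hat p-p|\le\epsilon'$ holds w.p.\ at least $1-\delta'$. The statement writes $\delta$ where $\delta'$ is meant; this is a typo.

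\emph{Main obstacle.} Hoeffding itself is routine; the only real care needed is in the sampling scheme. The first step must use the fixed action $a$, and the remaining steps must use $\pi$ on product states, for the indicator to be an unbiased estimator of $\mathbb{P}^a_\pi$. Once this is checked, the concentration step follows immediately.
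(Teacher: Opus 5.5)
Your proposal is correct and follows the paper's proof essentially step for step: it defines indicator variables $X_i$ for $\lozenge^{\le H}\mathrm{unsafe}$ on each sampled path, notes that their mean is unbiased, applies the two-sided Hoeffding bound $2\exp(-2m{\epsilon'}^2)\le\delta'$, and rearranges for $m$. Resolving the first step explicitly with the fixed action $a$ (which the statement's use of $\P^\otimes_\pi$ alone glosses over) and reading $\delta$ as $\delta'$ are both sensible clarifications that the paper leaves implicit.
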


\begin{proof} Formally, let $X_1, X_2, \ldots X_m$ be indicator random variables such that,
\begin{equation*}
        X_i = \begin{cases}
            1 & \text{if $\rho^{\otimes} \models \lozenge^{\leq H} \text{unsafe}$}\\
            0 & \text{otherwise}
        \end{cases}
    \end{equation*}
Note that $\rho^{\otimes} \models \lozenge^{\leq H} \text{unsafe}$ can be checking in time $O(H)$ and even parallelized across the batch. Now let,
    \begin{equation*}
        \bar X = \frac{1}{m}\sum^m_{i=1} X_i
    \end{equation*}
and notice that,
\begin{equation*}
        \mathbb{E}\left[ \bar X\right] = \mathbb{P}_\pi^a(s^{\otimes} \models \lozenge^{\leq H} \text{unsafe})
    \end{equation*}
then by Hoeffding's inequality \citep{hoeffding1963probability},
    \begin{equation*}
        \mathbb{P}\left[ \left\vert \bar{X} -  \mathbb{E}\left[ \bar X\right] \geq \epsilon' \right\vert\right] \leq 2 \exp(-2m{\epsilon'}^2)
    \end{equation*}
    Bounding the RHS by $\delta'$ and rearranging gives the desired result.
\end{proof}

We continue with a variant of the \emph{Error Amplification Lemma}, which establishes a linear (in $H$) worst-case error amplification when sampling paths from the empirical transition kernel $\widehat \P$. 

\begin{lemma}[Error amplification for path distributions] \label{lem:paths}  Let $\alpha > 0$, the start state $s^{\otimes}_0 \in \S^{\otimes}$ and $\pi$ be given. Suppose true transition kernel $\P$ and the empirical transition kernel $\widehat \P$ are such that, 
\begin{equation*}
    \Vert \widehat\P(\cdot \mid s, a) - \P(\cdot \mid s, a)\Vert_1 \leq \alpha \quad \forall (s, a) \in \S \times \A 
\end{equation*}
Let $\P^t_{\pi}(\cdot)$ and $\widehat \P^t_{\pi}(\cdot)$ denote the path distributions (at timestep $t$) for the two Markov chains with transition kernels $\P^{\otimes}_{\pi}$ and $\widehat \P^{\otimes}_{\pi}$ respectively. Then the L1-distance between the two path distributions is bounded as follows,
\begin{equation*}
    \Vert \P_{\pi}^t(\cdot) - \widehat\P_{\pi}^{t}(\cdot) \Vert_1 \leq \alpha t \quad \forall t=0,1,\ldots
\end{equation*}
\end{lemma}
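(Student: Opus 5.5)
We argue by induction on $t$ with a coupling/telescoping argument on the distributions of length-$t$ path prefixes. Throughout, $\Vert\cdot\Vert_1$ is the halved $\ell_1$ (total variation) distance of the footnote. Here $\P^t_\pi$ denotes the law of the prefix $s^\otimes_0 s^\otimes_1\ldots s^\otimes_t$ in the Markov chain with kernel $\P^\otimes_\pi$, and $\widehat\P^t_\pi$ its analogue under $\widehat\P^\otimes_\pi$.

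The first step is to lift the per-$(s,a)$ bound to the product chain under $\pi$. The product kernel copies $\P$ and appends the deterministic DFA update $q'=\Delta(q,\L(s'))$. Hence $\Vert \widehat\P^\otimes(\cdot\mid s^\otimes,a)-\P^\otimes(\cdot\mid s^\otimes,a)\Vert_1 = \Vert\widehat\P(\cdot\mid s,a)-\P(\cdot\mid s,a)\Vert_1\le\alpha$, since a deterministic map cannot increase total variation. Mixing over $a\sim\pi(\cdot\mid s^\otimes)$ and applying convexity of the norm gives $\Vert\widehat\P^\otimes_\pi(\cdot\mid s^\otimes)-\P^\otimes_\pi(\cdot\mid s^\otimes)\Vert_1\le\alpha$ for every $s^\otimes$.

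The base case $t=0$ is immediate, since both distributions are the point mass at $s^\otimes_0$. For the inductive step we write $\P^{t+1}_\pi(\rho s')=\P^t_\pi(\rho)\,\P^\otimes_\pi(s'\mid\rho\!\!\downarrow)$, and similarly for the hatted version. Adding and subtracting the mixed term $\widehat\P^t_\pi(\rho)\P^\otimes_\pi(s'\mid\rho\!\!\downarrow)$ and applying the triangle inequality bounds $\Vert\P^{t+1}_\pi-\widehat\P^{t+1}_\pi\Vert_1$ by the sum of two terms. The first is $\frac12\sum_\rho|\P^t_\pi(\rho)-\widehat\P^t_\pi(\rho)|\sum_{s'}\P^\otimes_\pi(s'\mid\rho\!\!\downarrow)$, which equals $\Vert\P^t_\pi-\widehat\P^t_\pi\Vert_1\le\alpha t$ because the inner sum is $1$. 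The second is $\sum_\rho\widehat\P^t_\pi(\rho)\,\Vert\P^\otimes_\pi(\cdot\mid\rho\!\!\downarrow)-\widehat\P^\otimes_\pi(\cdot\mid\rho\!\!\downarrow)\Vert_1\le\alpha$, by the first step and because $\widehat\P^t_\pi$ is a probability distribution. Adding the two gives $\alpha(t+1)$, which closes the induction.

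I expect the only delicate point to be the lifting step: checking that the DFA synchronisation and the policy mixing do not inflate the distance. Both are settled by the data-processing and convexity properties of total variation. The same bound then holds a fortiori for the marginal state distribution at time $t$, since marginalisation is again a deterministic map. This is the form needed later to show that $H^*$-bounded reachability probabilities under $\widehat\P$ and $\P$ differ by at most $\alpha H^*$.
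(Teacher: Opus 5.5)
Your proof is correct and is essentially the paper's argument: induction on $t$, with an add-and-subtract triangle inequality that splits the error into a one-step kernel term (at most $\alpha$) and a propagated prefix term (at most $\alpha t$). The differences are minor. You insert the mixed term $\widehat\P^t_\pi(\rho)\,\P^\otimes_\pi(s'\mid\rho\!\!\downarrow)$ where the paper uses $\P^{t-1}_\pi(\rho)\,\widehat\P^\otimes_\pi(s_t\mid s_{t-1})$, which is a symmetric choice. You also make explicit the lift of the per-$(s,a)$ bound to the product chain under $\pi$, which the paper uses without comment; there, injectivity of $s'\mapsto(s',\Delta(q,\L(s')))$ is what gives the equality you write, though data processing alone already yields the $\le\alpha$ you need.
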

\begin{proof}
First we provide formal definitions for the path distributions $\P^t_{\pi}(\cdot)$ and $\widehat \P^t_{\pi}(\cdot)$. Let,
\begin{align*}
    \P^t_{\pi}(\rho^\otimes) &:= \Pr(s_0^{\otimes}s_1^{\otimes}\ldots s_t^{\otimes} \preceq \rho^\otimes \mid \rho^\otimes[0] =s_{0}^{\otimes}, \P^{\otimes}_{\pi}) \\
    \widehat\P^t_{\pi}(\rho^\otimes) &:= \Pr(s_0^{\otimes}s_1^{\otimes}\ldots s_t^{\otimes} \preceq \rho^\otimes \mid \rho^\otimes[0] =s_{0}^{\otimes}, \widehat\P^{\otimes}_{\pi}) 
\end{align*}
where,
\begin{align*}
    \P^{\otimes}_{\pi}({s^\otimes}'\mid s^{\otimes}) &:=\sum_{a \in A}\P^{\otimes}({s^{\otimes}}' \mid s^{\otimes}, a)\pi(a \mid s^{\otimes})\\
    \widehat\P^{\otimes}_{\pi}({s^\otimes}'\mid s^{\otimes}) &:=\sum_{a \in A}\widehat\P^{\otimes}({s^{\otimes}}' \mid s^{\otimes}, a)\pi(a \mid s^{\otimes})
\end{align*}
are the corresponding transition kernels. The probability $\P^t_{\pi}(\rho^\otimes)$ reads: ``the probability of the prefix $s_0^{\otimes}s_1^{\otimes}\ldots s_t^{\otimes} \preceq \rho^\otimes$ at time $t$''. The proof now follows by induction on $t$, for the base case we have,
\begin{equation*}
    \P^0_{\pi}(\cdot) = \widehat\P^0_{\pi}(\cdot)
\end{equation*}
since the start state $s^{\otimes}_0 \in \S^{\otimes}$ is fixed for both. Before we continue with the induction, we make the following observation for any path $\rho^\otimes \in (\S^{\otimes})^{\omega}$,
\begin{multline}
    \vert \P_{\pi}^{t}(\rho^\otimes) - \widehat\P^t_{\pi}(\rho^\otimes) \vert = \vert \P^{\otimes}_{\pi}(s^{\otimes}_t \mid s^{\otimes}_{t-1})\P_{\pi}^{t-1}(\rho^\otimes) - \widehat\P^{\otimes}_{\pi}(s^{\otimes}_t \mid s^{\otimes}_{t-1})\widehat\P_{\pi}^{t-1}(\rho^\otimes) \vert \\
    \leq \P_{\pi}^{t-1}(\rho^\otimes) \vert \P_{\pi}^{\otimes}(s^{\otimes}_t \mid s^{\otimes}_{t-1}) - \widehat\P_{\pi}^{\otimes}(s^{\otimes}_t \mid s^{\otimes}_{t-1}) \vert + \widehat \P_{\pi}^{\otimes}(s^{\otimes}_t \mid s^{\otimes}_{t-1}) \vert  \P_{\pi}^{t-1}(\rho^\otimes) - \widehat\P^{t-1}_{\pi}(\rho^\otimes) \vert \label{eq:triangle}
\end{multline}
holds by the triangle inequality. Now we continue with the induction on $t$,
\begin{align}
    2\Vert \P_{\pi}^t(\cdot) - \widehat\P_{\pi}^{t}(\cdot) \Vert_1 & = \sum_{\rho^\otimes \in (\S^{\otimes})^{\omega}} \vert  \P_{\pi}^t(\rho^\otimes) - \widehat\P_{\pi}^{t}(\rho^\otimes) \vert \label{eq:l1definition}\\
    \begin{split}
        & \leq \sum_{\rho^\otimes \in (\S^{\otimes})^{\omega}} \P_{\pi}^{t-1}(\rho^\otimes) \vert \P_{\pi}^{\otimes}(s^{\otimes}_t \mid s^{\otimes}_{t-1}) - \widehat\P_{\pi}^{\otimes}(s^{\otimes}_t \mid s^{\otimes}_{t-1}) \vert \\
        & \qquad + \sum_{\rho^\otimes \in (\S^{\otimes})^{\omega}} \widehat \P_{\pi}^{\otimes}(s^{\otimes}_t \mid s^{\otimes}_{t-1}) \vert  \P_{\pi}^{t-1}(\rho^\otimes) - \widehat\P^{t-1}_{\pi}(\rho^\otimes) \vert
    \end{split} \label{eq:trianglebefore} \\    
    &\leq \sum_{\rho^\otimes \in (\S^{\otimes})^{\omega}} \P_{\pi}^{t-1}(\rho^\otimes) \cdot (2 \alpha) + \sum_{\rho^\otimes \in (\S^{\otimes})^{\omega}} \vert  \P_{\pi}^{t-1}(\rho^\otimes) - \widehat\P^{t-1}_{\pi}(\rho^\otimes) \vert \label{eq:fromassumption}\\
    & = 2 \alpha + 2 \Vert \P_{\pi}^{t-1}(\cdot) - \widehat\P_{\pi}^{t-1}(\cdot) \Vert_1\\
    & \leq 2 \alpha t 
\end{align}
Here \eqref{eq:l1definition} comes from the definition of the L1-distance: $\Vert \mu_1(\cdot) - \mu_2(\cdot) \Vert_1 = \frac{1}{2}\sum_{x \in \X} \vert \mu_1(x) - \mu_2(x) \vert$. The rest follows from our earlier observations (triangle inequality) and assumptions in the lemma statement.
\end{proof}

We continue now with the main proof for Proposition \ref{prop:statisticalmc}. Restating the proposition here.

\begin{restatedprop}{\ref{prop:statisticalmc} (restated)} Let $\epsilon' \leq \epsilon_t$ and suppose the total variation distance between $\P$ and $\widehat{\P}$ is bounded by $\epsilon' / H$, i.e., $\|\widehat \P(\cdot\mid s,a)-\P(\cdot\mid s,a)\|_1\le \epsilon'/H^*$ for all $(s,a)$. Then exact or statistical model checking with $\widehat{\P}$ allows $(s_t^\otimes, a_t)$ only if $s^\otimes_t \in S(\pi_{\mathrm{safe}},a_t, \epsilon_t)$ with probability (w.p.)~1 or w.p.~at least $(1-\delta)$ respectively.
\end{restatedprop}

\begin{proof} For convenience let $\mathcal{E} :=\mathbb{P}_\pi^a(s^{\otimes} \models \lozenge^{\leq H^*} \text{unsafe})$ with probability taken under $\P^{\otimes}$ and let $\widehat{\mathcal{E}}$ be defined analogously but with probability taken under $\widehat \P^\otimes$. Furthermore, let  $g(\cdot)$ and $\widehat g(\cdot)$ denote the average path distribution (over the next $H^*$ timesteps) for the two Markov chains under $\P^{\otimes}_{\pi}$ and $\widehat \P^{\otimes}_{\pi}$ respectively. Formally,
\begin{align*}
    g(\rho^\otimes) &:= \frac{1}{H^*}\sum^{H^*}_{t=0}\P^t_{\pi}(\rho^\otimes)\\
    g(\rho^\otimes) &:= \frac{1}{H^*}\sum^{H^*}_{t=0}\widehat\P^t_{\pi}(\rho^\otimes)
\end{align*}
abusing notation here (by dropping the action $a$ which is now fixed and assumed to be implicit). Before continuing we make the following observation,
\begin{enumerate}[label=(\arabic*)]
\item Let $f(x): x \in \X \to [0, 1]$ be a real valued function. Let $\mu_1(\cdot)$ and $\mu_2(\cdot)$ be probability distributions over the space $\X$, then,
    \begin{equation*}
        \left\vert \mathbb{E}_{x \sim \mu_1(\cdot)}\left[f(x)\right] - \mathbb{E}_{x \sim \mu_2(\cdot)}\left[f(x)\right] \right\vert \leq \Vert \mu_1(\cdot) - \mu_2(\cdot)\Vert_1
    \end{equation*}
\end{enumerate}
Thus,
\begin{align}
    \vert \widehat{\mathcal{E}} - \mathcal{E} \vert &= \left\vert \mathbb{E}_{\rho^\otimes \sim g}\left[ \ind{ \rho^\otimes \models \lozenge^{\leq H^*} \text{unsafe} } \right] - \mathbb{E}_{\rho^\otimes \sim g}\left[\ind{ \rho^\otimes \models \lozenge^{\leq H^*} \text{unsafe} }\right] \right\vert \\
    & \leq \Vert g(\cdot) - \widehat g (\cdot) \Vert_1 \label{eq:firstineq}
    \\
    &  = \frac{1}{2}\sum_{\rho^\otimes \in (\S^{\otimes})^{H^*}} \vert g(\rho^\otimes) - g(\rho^\otimes) \vert \label{eq:bydef} \\
    &  = \frac{1}{2}\sum_{\rho^\otimes \in (\S^{\otimes})^{H^*}} \left\vert \frac{1}{H^*} \sum^{H^*}_{t=0} \P^t_{\pi}(\rho^\otimes) - \widehat \P^t_{\pi}(\rho^\otimes)  \right\vert \label{eq:bydef2}\\
    & \leq \frac{1}{2H^*} \sum^{H^*}_{t=0} \left\vert \sum_{\rho^\otimes \in (\S^{\otimes})^{H^*}}  \P^t_{\pi}(\rho^\otimes) - \widehat \P^t_{\pi}(\rho^\otimes)  \right\vert \\
    &  \leq \frac{1}{2H^*}\sum^{H^*}_{t=0} H^* (\epsilon'/H^*) \label{eq:fromlem}\\
    & = \epsilon'/2
\end{align}
Here \eqref{eq:firstineq} comes from our preceding observation, \eqref{eq:bydef} and \eqref{eq:bydef2} are by construction and \eqref{eq:fromlem} comes from the result of Lemma \ref{lem:paths} (by substituting in $\alpha = \epsilon'/H^*$). This intermediate result is closely related to the \emph{simulation lemma} \cite{kearns2002near}, which has been proved for the model-based setting (with bounded L1-distance) \cite{rajeswaran2020game}.

Now to put it all together. It has been established that $\vert \widehat{\mathcal{E}} - \mathcal{E} \vert \leq \epsilon'/2$. Thus, if we calculate the quantity $\widehat{\mathcal{E}}$ via model checking (or similar), then we can obtain an estimate for $\mathcal{E}$ (i.e., the quantity we are interested in, namely, $\mathbb{P}_\pi^a (s^{\otimes} \models \lozenge^{\leq H^*})$. Consider the two cases:
\begin{enumerate}[label=(\arabic*)]
    \item \emph{Exact model checking}: with the empirical (product) transition kernel $\widehat \P^{\otimes}$, we can precisely calculate $\widehat{\mathcal{E}}$ via exact probabilistic model checking with $O(H^*)$ matrix multiplications \cite{baier2008principles}.
    \item \emph{Statistical model checking}: by sampling $m \geq 2/({\epsilon'}^2)\log(2/\delta)$ paths $M= \langle \rho^\otimes_i \in (\S^{\otimes})^H \rangle^{m}_{i=1}$ with the empirical (product) transition kernel $\widehat \P^{\otimes}$, we can estimate $\widehat{\mathcal{E}}$ with an error of up to $\epsilon'/2$ and with confidence at least $(1 - \delta)$. This follows from Lemma \ref{lem:bounded}.
\end{enumerate}
Thus via exact (or statistical model) checking we have established that we can estimate $\mathcal{E}$ with an error up to $\epsilon'$ w.p.~1 (or at least $1-\delta$ resp.).

Let $\tilde{\mathcal{E}}$ denote our estimate for $\mathcal{E}$, the final part of the proof concerns how we use our estimate checking if $s^\otimes_t \in S(\pi_{\mathrm{safe}},a_t, \epsilon_t)$. 

To establish the \emph{only if} direction we take a conservative view of our uncertainty that does not permit ``false positives'', i.e., Assuming $\epsilon' < \epsilon_t$ (by the proposition statement), $s^\otimes_t \in S(\pi_{\mathrm{safe}},a_t, \epsilon_t)$ only if $\tilde{\mathcal{E}} \leq \epsilon_t - \epsilon'$, this holds as follows.

Since we have already established that $\vert \tilde{\mathcal{E}} - \mathcal{E} \vert \leq \epsilon'$ w.p.~1 or at least $1-\delta$,  then $\tilde{\mathcal{E}} \leq \epsilon_t -\epsilon'$ implies that $\mathcal{E} \leq \epsilon_t$ (w.p.~1 or $\geq1-\delta$). Furthermore, under Assumption \ref{ass:finite-horizon}, $\mathbb{P}_{\pi_{\mathrm{safe}}}^a(s^{\otimes} \models \lozenge^{\leq H^*} \text{unsafe}) \leq \epsilon_t$ implies that $\mathbb{P}_{\pi_{\mathrm{safe}}}^a(s^{\otimes} \models \lozenge \text{unsafe}) \leq \epsilon_t$ and thus $(s_t^\otimes, a)$ is allowed only if $s^\otimes_t \in S(\pi_{\mathrm{safe}},a_t, \epsilon_t)$ w.p.~1 or at least $1-\delta$.
\end{proof}

\subsection{Proof of Theorem \ref{thm:safetyguarantees}}

We start with an auxiliary result for transition kernel estimation. The proof of Theorem \ref{thm:safetyguarantees} follows closely thereafter. 

\begin{lemma} \label{lem:tvdistance} Let $p> 0$. Then if each $(s, a) \in \S \times \A$ is visited at least $O(\vert \S \vert/p^2\log(\vert\S\vert \vert A \vert/\delta))$ times, then w.p.~at least ($1-\delta$) the empirical transition kernel $\widehat \P$ satisfies $\Vert\widehat \P(\cdot \mid s, a) - \P(\cdot \mid s, a) \Vert_1 \leq p$ for all $(s, a) \in \S \times \A$.
\end{lemma}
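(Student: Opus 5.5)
Fix a single pair $(s,a)$ and write $n$ for its number of visits. The plan is to bound the $L_1$ deviation of $\widehat \P(\cdot\mid s,a)$ from $\P(\cdot\mid s,a)$ and then take a union bound over all $|\S||\A|$ pairs.

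Conditional on the visit count, the $n$ observed successor states are i.i.d.\ draws from $\P(\cdot\mid s,a)$. This holds by the Markov property: a sample is revealed only at a stopping time, so we can use the standard ``stack of samples'' argument in which each pair has a pre-drawn i.i.d.\ sequence of successors. This justifies treating $\widehat \P(\cdot\mid s,a)$ as an empirical distribution of $n$ i.i.d.\ samples on $\S$.

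For one pair, I would use the subset characterisation of total variation. With the paper's normalisation, $\Vert\mu_1-\mu_2\Vert_1 = \max_{B\subseteq \S}(\mu_1(B)-\mu_2(B))$. For each of the $2^{|\S|}$ subsets $B$, $\widehat\P(B\mid s,a)$ is an average of $n$ Bernoulli variables with mean $\P(B\mid s,a)$. Hoeffding's inequality (as in Lemma \ref{lem:bounded}) then gives $\Pr[\widehat\P(B)-\P(B)\ge p]\le \exp(-2np^2)$. A union bound over subsets yields
\begin{equation*}
\Pr\left[\Vert\widehat\P(\cdot\mid s,a)-\P(\cdot\mid s,a)\Vert_1 > p\right]\le 2^{|\S|}\exp(-2np^2),
\end{equation*}
which is exactly the Weissman et al.\ inequality \citep{weissman2003inequalities}. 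I could cite it directly instead.

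Next I take a union bound over the $|\S||\A|$ pairs and require $|\S||\A|\,2^{|\S|}\exp(-2np^2)\le\delta$. Solving for $n$ gives
\begin{equation*}
n\ge \frac{1}{2p^2}\bigl(|\S|\log 2+\log(|\S||\A|/\delta)\bigr).
\end{equation*}
This is bounded by $O\bigl(|\S|/p^2\cdot\log(|\S||\A|/\delta)\bigr)$, because $\log(|\S||\A|/\delta)\ge \log 2$ whenever $\delta<1/2$ (and in general up to constants). The stated order is therefore slightly loose but valid.

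The main obstacle is the i.i.d.\ justification. Visit counts are random and adaptively determined by the (shielded) policy, so naive Hoeffding does not apply directly. I would handle this with the pre-drawn sample-stack coupling: the first $n$ samples for $(s,a)$ are i.i.d.\ regardless of when they are consumed. The event ``at least $n$ visits'' only selects the prefix length, and we apply the bound at that fixed $n$. Everything else is routine. If the product state space is intended, $|\S|$ should be read as the support of $\P(\cdot\mid s,a)$ on $\S$, which suffices because the DFA component is deterministic.
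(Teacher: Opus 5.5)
Your proposal follows the paper's proof. It uses a per-pair Weissman-type deviation bound, which you re-derive from the subset characterisation of total variation plus Hoeffding. Your exponent $\exp(-2np^2)$ is the correct one under the paper's halved-$L_1$ normalisation; the paper's displayed exponent $-n(s,a)/(2p^2)$ is inconsistent with its own rearrangement and is evidently a typo. It then takes a union bound over the $|\S||\A|$ pairs and solves for $n$.

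Your adaptivity discussion goes beyond the paper, which ignores the issue. However, the sample-stack coupling as you use it only controls the estimate built from the first $n$ successors. The statement's $\widehat{\P}$ uses all $n(s,a)\ge n$ visits, and that count is data-dependent. To close this, either truncate the estimate to the first $n$ samples, or use the maximal form of Hoeffding for running averages (a reverse martingale), $\Pr[\sup_{m\ge n}(\bar X_m-\mu)\ge p]\le e^{-2np^2}$. The latter gives the same bound uniformly over $m\ge n$.
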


\begin{proof} First for a given dataset $D = \langle (s_i, a_i, s_i')\rangle^{n}_{i=1}$  let,
\begin{equation*}
    n (s, a) := \vert \{ (s, a, \cdot) \in D \} \vert
\end{equation*}
and
\begin{equation*}
    n(s, a, s') := \vert \{ (s, a, s') \in D \} \vert
\end{equation*}
The by standard results for L1-distances \citep{weissman2003inequalities}, the TV-distance between $\P$ and $\widehat \P$ is bounded as follows,
\begin{equation*}
        \mathbb{P}_{s' \sim \P(\cdot \mid s,a)} \left[ \Vert \widehat\P(\cdot\mid s, a) -  \P(\cdot\mid s, a) \Vert_1 \geq p\right] \leq (2^{\vert \S \vert}-2)\exp\left(\frac{-n(s, a)}{2p^2}\right)
    \end{equation*}
Bounding the RHS from above by $\delta/(\vert\S\vert\vert\A\vert)$ and rearranging gives the following lower bound for $n(s, a)$,
    \begin{equation*}
        n(s, a) \geq \frac{2}{p^2}\log\left(\frac{\vert\S\vert\vert\A\vert(2^{\vert\S\vert}-2)}{\delta} \right) = O\left(\frac{\vert\S\vert}{p^2}\log\left(\frac{\vert\S\vert\vert\A\vert}{\delta} \right)\right)
    \end{equation*}
    Finally, taking a union bound over all state action pairs $(s, a) \in \S \times \A$ (of which there are $\vert\S\vert\vert\A\vert$ many) completes the proof.
\end{proof}

\begin{restatedthm}{\ref{thm:safetyguarantees} (restated)} Let $\epsilon'<\epsilon_t$ and $\delta_t=2\delta$. Under the conditions in Theorem \ref{thm:constraintsatisfaction} and Assumption \ref{ass:finite-horizon}, and if each $(s,a) \in \S \times \A$ is visited at least $O((H^*/\epsilon')^2 |\S| \log(|\A||\S|/\delta))$ times, then by using exact (or statistical) model checking with $\widehat{\P}$, with shielded policy $\pi^\lozenge$ satisfies the step-wise risk level $\epsilon_t$ w.p.~at least $(1-\delta_t)$, and thus satisfies the global safety constraint in \eqref{eq:problem} w.p.~at least $(1 - \sum_{t=0}^{\infty} \delta_t)$.
\end{restatedthm}

\begin{proof}
    The proof is a relatively straightforward application of Lemma \ref{lem:tvdistance} and Proposition \ref{prop:statisticalmc}. Firstly, by Lemma \ref{lem:tvdistance}, if each $(s,a) \in \S \times \A$ is visited at least $O((H^*/\epsilon')^2 |\S| \log(|\A||\S|/\delta))$ times, then we have,
\begin{equation}
    \Vert \widehat \P(\cdot \mid s, a) - \P(\cdot \mid s, a) \Vert_1 \leq \epsilon'/H^* \quad \forall (s, a) \in \S \times \A \label{eq:conditions}
\end{equation}
w.p.~at least $1 - \delta$. Now that \eqref{eq:conditions} satisfies the conditions of Proposition \ref{prop:statisticalmc}, then exact (or statistical) model checking with $\widehat{\P}$ allows $(s_t^\otimes, a_t)$ only if $s^\otimes_t \in S(\pi_{\mathrm{safe}},a, \epsilon_t)$ w.p.~1 or at least ($1-\delta$).

Finally, taking a union bound now over both the L1-distance and model checking procedure gives us a worst case step-wise failure probability $\delta_t = 2 \delta$. Thus at timestep $t$, $\pi^\lozenge$ allows $a_t \sim \widehat \pi(\cdot \mid s_t^{\otimes})$ only if $s^{\otimes}_t \in S(\pi_{\text{safe}}, a_t, \epsilon_t)$ w.p.~at least $1-\delta_t$, which by definition provides a step-wise safety guarantee of $\epsilon_t$. On the other hand, if $a_t \sim \pi_{\text{safe}}(s^{\otimes}_t)$ cannot be verified, then by induction it can be shown that the step-wise safety guarantee of $\epsilon_t$ holds, using similar reasoning as in the proof of Theorem \ref{thm:constraintsatisfaction}. Taking a union bound over the step-wise failure probability $\delta_t$ completes the proof.
\end{proof}

\subsection{Proof of Theorem \ref{thm:convergence}}
\label{sec:convergence-proof}

\begin{restatedthm}{\ref{thm:convergence} (restated)} Assume: (i) $\gamma\in[0,1)$ and rewards are bounded and non-negative;
(ii) the step-sizes for each Q-approximation satisfy standard stochastic approximation conditions
(e.g., $\sum_t \alpha_t=\infty$ and $\sum_t \alpha_t^2<\infty$); (iii) the step-wise risk levels and initial states satisfy the conditions for Theorem \ref{thm:constraintsatisfaction}; and (iv) Assumption \ref{ass:finite-horizon} holds. Then, in all modes the task Q-iterates ($Q_\textrm{task}$) produced by stochastic approximation of $\mathbf{B}_{\mathbf{safe}}$ converge w.p.~1 to the unique fixed point $Q^*_{\mathrm{safe}}$, i.e., the optimal Q-function subject to the action restrictions imposed by $S^*(a, \epsilon_t)$. And the deployed policy $\pi^\lozenge$ satisfies the global safety constraint in \eqref{eq:problem} w.p.~1.
\end{restatedthm}

\begin{proof}
     The proof is lengthy and split into several parts, each addressing the three different operation modes of \textsc{RecReg} to a different extent. We summarize the organization below and detail parts are required for convergence of the different operation mode.

     \begin{enumerate}
        \item (\textsc{ModelBased} and \textsc{ModeFree}). We establish convergence of the safe policy $\pi_{\text{safe}}$ and corresponding probabilistic safe set $S(\pi_{\text{safe}}, a, \epsilon_t)$ to the maximally safe policy $\pi^*_{\text{safe}}$ and maximal safe set $S^*(a, \epsilon_t)$ respectively.
        \item (\textsc{ModelBased} and \textsc{ModeFree}). We establish convergence of the indicator $\ind {s^{\otimes} \in S(\pi_{\text{safe}}, a, \epsilon_t)}$ used in the Bellman operator $\mathbf{B}_{\textbf{safe}}$.
        \item (\textsc{Exact}, \textsc{ModelBased} and \textsc{ModeFree}) Analysing convergence of the shielded policy $\pi_{\text{shield}}$ defined in Eq.~\eqref{eq:shieldedpolicy} under a perfect indicator $\ind {s^{\otimes} \in S(\pi_{\text{safe}}, a, \epsilon_t)}$.  
        \item (\textsc{ModelBased}). Analysing the optimality of $\pi_{\text{shield}}$ under $\delta$ corruptions that might persist as a consequence of using statistical model checking
    \end{enumerate}

\paragraph{Convergence of the safe policy.} We first formalize the convergence claim for the backup policy with the following Lemma.

\begin{lemma}[Convergence of the safe policy] \label{lem:safe-policy-convergence} Consider the finite product MDP $\mathcal{M}^\otimes = (\S^\otimes, \A, \S^\otimes_0, \P^\otimes)$ and without loss of generality assume the accepting set $\F^\otimes$ is made absorbing. The cost function in \eqref{eq:costfn} can thus be equivalently defined,
\begin{equation*}
    \mathbf C(s^\otimes, a, {s^\otimes}') = \begin{cases}
        1 & \text{if $s^\otimes \in \F^\otimes$} \\
        0 & \text{otherwise}
    \end{cases}
\end{equation*}
Define the eventual-discounting scheme with state-dependent discount factor,
\begin{equation*}
    \gamma({s^\otimes})
=
\begin{cases}
\gamma, & {s^\otimes}\in\F^\otimes,\\
1, & \text{otherwise},
\end{cases}
\end{equation*}
and consider the update step,
\begin{equation*}
Q_{t+1}(s^\otimes, a) = (1 - \alpha_t(s^\otimes, a))Q_t(s^\otimes, a) + \alpha_t(s^\otimes, a) \left(\mathbf C(s^\otimes, a, {s^\otimes}') + \gamma({s^\otimes}' )\min_{a'} Q_t({s^\otimes}', a') \right)
\end{equation*}
with step-sizes satisfying the Robbins-Monro conditions,
\begin{equation*}
\sum_t \alpha_t(s^\otimes,a)=\infty,\qquad
\sum_t \alpha_t(s^\otimes,a)^2<\infty
\end{equation*}
and such that every $(s^\otimes, a)$ is visited infinitely often. Further, assuming the initialization,
\begin{equation*}
Q_0(s^\otimes,a)=0 \quad \forall (s^\otimes,a) \in \S^\otimes \times \A
\end{equation*}
For each state-action pair $(s^\otimes,a)$ define,
\begin{equation*}
    p^*(s^\otimes,a)
:=
\inf_{\pi\in\Pi^\otimes}
\mathbb{P}^a_\pi(s^\otimes\models\lozenge \mathrm{unsafe})
\end{equation*}
Then
\begin{equation*}
    Q_t(s^\otimes,a)\to Q^*(s^\otimes,a)
=
\frac{1}{1-\gamma}\,p^*(s^\otimes,a)
\quad
\forall (s^\otimes,a) \in \S^\otimes \times \A  \quad \text{(almost surely)}
\end{equation*}
as $t \to \infty$. Consequently, any greedy policy with respect to $Q^*$ is maximally safe,
and for every product state $s^\otimes$ visited infinitely often,
\begin{equation*}
    \lim_{t\to\infty}
\mathbb{P}\!\left(
\pi_t(s^\otimes)\in
\arg\min_{a\in\A}p^*(s^\otimes,a)
\right)
=1
\end{equation*}
where $\pi_t$ is the greedy policy induced by $Q_t$, with a fixed
deterministic tie-breaking rule.
\end{lemma}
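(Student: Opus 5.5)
The plan is to reduce the update to Q-learning for a \emph{stochastic shortest path} (SSP) problem in which every policy is proper, and then invoke the classical convergence result for SSP Q-learning (Tsitsiklis 1994; Bertsekas--Tsitsiklis). Let $T$ denote the expected update operator,
\begin{equation*}
(TQ)(s^\otimes,a) = \mathbb{E}_{{s^\otimes}'\sim\P^\otimes(\cdot\mid s^\otimes,a)}\big[\mathbf C(s^\otimes,a,{s^\otimes}') + \gamma({s^\otimes}')\min_{a'}Q({s^\otimes}',a')\big].
\end{equation*}
$T$ is monotone but is \emph{not} a sup-norm contraction, because $\gamma(\cdot)=1$ off $\F^\otimes$. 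Consequently the non-accepting part is a reachability equation, which in general has many fixed points. The initialization $Q_0=0$ is what must select the correct one.

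\emph{Step 1: identify the fixed point.} On the absorbing set $\F^\otimes$ every step costs $1$ and is discounted by $\gamma$, so the candidate value there is $\sum_k\gamma^k = 1/(1-\gamma)$. Off $\F^\otimes$ the cost is $0$ and the discount is $1$, so $V(s^\otimes):=\min_a Q(s^\otimes,a)$ satisfies the minimal-reachability Bellman equation scaled by $1/(1-\gamma)$. By the standard result for min-reachability (Baier--Katoen), value iteration from $0$ increases monotonically to the \emph{least} fixed point, which is $p^*/(1-\gamma)$. This matches the computation in the proof of Proposition~\ref{prop:maximally}, where unsafe paths receive total cost $1/(1-\gamma)$ and safe paths receive $0$. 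Hence $Q^*=p^*/(1-\gamma)$ is the least nonnegative fixed point of $T$.

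\emph{Step 2: remove the degenerate part.} Let $Z=\{s^\otimes : \min_a p^*(s^\otimes,a)=0\}$. This set is computable by graph analysis and consists of the states from which some policy can remain forever inside end components that avoid $\F^\otimes$. For $s^\otimes\in Z$, let $\A_Z(s^\otimes)$ be the set of actions whose support stays inside $Z$ and which attain $p^*=0$.

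I would prove by induction on $t$ that $Q_t(s^\otimes,a)=0$ for every pair with $s^\otimes\in Z$ and $a\in\A_Z(s^\otimes)$. Such a pair has cost $0$, and its target $\min_{a'}Q_t({s^\otimes}',a')$ is squeezed between $0$ and $Q_t({s^\otimes}',a_Z)=0$, using that $Q_t\ge 0$ always holds by nonnegativity of the costs. Consequently $\min_{a'}Q_t({s^\otimes}',\cdot)=0$ exactly on $Z$, and the states of $Z$ act as a zero-cost terminal set. The states of $\F^\otimes$ act as a terminal set with known value $1/(1-\gamma)$; their own updates form an ordinary $\gamma$-discounted Q-learning that converges by Jaakkola et al.

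\emph{Step 3: apply the SSP result.} On the remaining pairs every policy is proper: an end component avoiding both $\F^\otimes$ and $Z$ would have $p^*=0$, which contradicts the definition of $Z$. For such an SSP with all policies proper, $T$ is a contraction in a weighted sup-norm. The Tsitsiklis asynchronous stochastic-approximation theorem then applies, using the Robbins--Monro conditions, infinitely-often visitation, and bounded-variance noise (costs are bounded). It yields $Q_t\to Q^*$ almost surely.

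I expect this step to be the main obstacle. One must be careful that the terminal values injected from $\F^\otimes$ are themselves only converging rather than exact. I would handle this with a two-timescale or asymptotically-vanishing-perturbation argument, as in Tsitsiklis' treatment of inexact targets. One must also check that the pairs $(s^\otimes,a)$ with $s^\otimes\in Z$ but $a\notin\A_Z(s^\otimes)$ are covered by the SSP part; they are, since their target only involves successor values.

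\emph{Step 4: greedy policy.} Greedy maximal safety is immediate from Step 1 and Proposition~\ref{prop:maximally}. For the limiting statement, fix $s^\otimes$ and let $\Delta>0$ be the gap between $\min_a p^*(s^\otimes,a)$ and the best non-optimal value of $p^*(s^\otimes,\cdot)$; this gap is positive because $\A$ is finite. Almost-sure convergence gives an almost surely finite time after which $\|Q_t-Q^*\|_\infty<\Delta/(2(1-\gamma))$. After that time every minimizer of $Q_t(s^\otimes,\cdot)$ lies in $\arg\min_a p^*(s^\otimes,a)$, whatever the deterministic tie-breaking rule. Almost-sure convergence implies convergence in probability, which gives the stated limit.
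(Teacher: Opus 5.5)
Your argument is correct, up to a constant inherited from the statement (see the last paragraph), and it takes a genuinely different and more careful route than the paper.

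The paper proceeds as follows:
\begin{itemize}
\item it computes $Q_\pi=\frac{1}{1-\gamma}\mathbb{P}^a_\pi(s^\otimes\models\lozenge\mathrm{unsafe})$ via the hitting time of $\F^\otimes$;
\item it posits a ``proper SSP'' condition, phrased as the existence of a safe recurrent class disjoint from $\F^\otimes$;
\item it asserts that $\mathbf{B}_{\textbf{cost}}$ then has a unique fixed point in $[0,1/(1-\gamma)]$;
\item it applies Jaakkola et al.\ directly, using the initialization only to bound the iterates.
\end{itemize}
That uniqueness claim fails under exactly the paper's own condition. Take a zero-cost closed class avoiding $\F^\otimes$, e.g.\ a non-accepting state whose only action is a self-loop. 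Every value in $[0,1/(1-\gamma)]$ is a fixed point there, and Q-learning started at a positive value on it never moves. Zero-cost improper policies are precisely what the standard SSP hypotheses rule out.

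Your proof supplies the missing ingredient in three moves:
\begin{itemize}
\item you characterize $Q^*$ as the \emph{least} fixed point;
\item you use $Q_0=0$ and nonnegativity of the iterates to pin $Q_t\equiv 0$ on the pairs with $s^\otimes\in Z$, $a\in\A_Z(s^\otimes)$;
\item you use the end-component argument to make every policy proper relative to $\F^\otimes\cup Z$.
\end{itemize}
With these, a weighted-sup-norm contraction and the Tsitsiklis/Jaakkola machinery genuinely apply. The price is the graph-theoretic preprocessing. The gain is a valid proof that also explains why $Q_0=0$ is among the hypotheses.

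The boundary issue you worry about in Step~3 is mild. One option is the standard vanishing-bias extension of the Jaakkola et al.\ theorem (bias $c_t\to 0$ a.s.). Another is to read the factor $\gamma({s^\otimes}')$ on transitions whose successor lies in $\F^\otimes$ as a killing probability. Then all unpinned pairs form a single SSP in which every policy is proper, and one contraction covers $\F^\otimes$ and the remaining states jointly.

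One caveat, shared with the paper. The lemma's literal conventions charge cost at the current state and evaluate the discount $\gamma(\cdot)$ at the successor. Under these, a transition from outside $\F^\otimes$ into $\F^\otimes$ contributes $\gamma\cdot\frac{1}{1-\gamma}$. So your remark in Step~1 that the discount is $1$ off $\F^\otimes$ is not accurate for those transitions, and the limit off $\F^\otimes$ is $\frac{\gamma}{1-\gamma}p^*$ rather than $\frac{1}{1-\gamma}p^*$. The stated formula holds if the cost is charged on the successor, as in \eqref{eq:costfn}. This positive rescaling changes nothing in your argument or in the greedy conclusions; in Step~4, replace $\Delta$ by $\gamma\Delta$.
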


\begin{proof} Since $\F^\otimes$ is absorbing, every trajectory has a well-defined hitting time,
\begin{equation*}
    \tau_{\F}:=\inf\{t\ge 1:s^\otimes_t\in\F^\otimes\}.
\end{equation*}
Under the eventual-discounting convention, no discounting is applied before $\tau_{\F}$, and after $\tau_{\F}$ the process remains in $\F^\otimes$ and receives a cost of 1 geometrically discounted by $\gamma < 1$. Hence, for
any policy $\pi$ and first action $a$,
\begin{equation*}
    Q_\pi(s^\otimes,a)
=
\mathbb{E}_\pi
\left[
\sum_{t=0}^{\infty}
\Gamma_t
\mathbf C(s^\otimes_t,a_t,s^\otimes_{t+1})
\mid a_0 = a\right]
=
\frac{1}{1-\gamma}
\mathbb{P}^a_\pi(s^\otimes\models\lozenge \mathrm{unsafe}).
\end{equation*}
where, 
\begin{equation*}
 \gamma_t =
\begin{cases}
    \gamma & \text{if $s_t^\otimes \in \F^\otimes$,}\\
    1 &\text{otherwise.}
\end{cases} \quad \text{and} \quad \Gamma_t = \prod^{t-1}_{i=0} \gamma^i, \, \Gamma_0 = 1
\end{equation*}
Taking the infimum over policies,
\begin{equation*}
    \pi^*(s^\otimes, a) = \arg \inf_{\pi \in \Pi^\otimes} Q_\pi(s^\otimes,a)
\end{equation*}
gives the optimal policy $\pi^*$ that minimizes the eventual-discounted return, which as established in Proposition \ref{prop:maximally} is a maximally safe policy. 

It remains to justify convergence of the update rule under the conditions of the Lemma.  Consider the following Bellman operator
\begin{equation*}
    \mathbf{B}_{\textbf{cost}} := \mathbb{E}^{\mathcal{M}^\otimes} \left[ \mathbf C (s^\otimes, a, {s^\otimes}') + \gamma({s^\otimes}') \min_{a'} Q({s^\otimes}', a) \right]
\end{equation*}
\end{proof}
for which the update rule in the Lemma statement is precisely the stochastic approximation of the fixed point equation,
\begin{equation*}
    Q = \mathbf{B}_{\textbf{cost}} Q
\end{equation*}
The operator $\mathbf{B}_{\textbf{cost}}$ is not necessarily a $\gamma$-contraction in the sup-norm as $\gamma(s^\otimes) = 1$ outside of $\F^\otimes$. Thus we require a proper stochastic shortest path condition. Assume that there exists at least one safe policy $\bar \pi$ such that, from every state-action pair with $p^*(s^\otimes, a) > 0$, the accepting set $\F^\otimes$ is reached with positive probability in finite time. Equivalently the product MDP decomposes into an absorbing unsafe class $\F^\otimes$ and at least one safe recurrent class that is disjoint from $\F^\otimes$. The holds under the assumption of existence of a non-empty maximally safe set which as established in the proof of Proposition \ref{prop:maximally} is true if there exists at least one safe policy, which in turn follows from the Theorem statement (all initial states have probability not more than $\epsilon$ of reaching $\F^\otimes$ under $\pi_{\text{safe}}$).

On any closed safe recurrent class the accumulated cost is precisely zero, while on $\F^\otimes$ the accumulated cost is $1 / (1 - \gamma)$. This is exactly a stochastic avoidance, or stochastic shortest-path, structure associated with the reachability objective. 

Under this condition $\mathbf{B}_{\textbf{cost}}$ has a unique fixed point on the interval,
\begin{equation*}
    0 \leq Q(s^\otimes, a) \leq \frac{1}{1-\gamma}
\end{equation*}
and this fixed point is the optimal avoidance value as shown above. Consider writing the update rule equivalently as follows,
\begin{equation*}
    Q_{t+1}(s^\otimes,a)
=
Q_t(s^\otimes,a)
+
\alpha_t(s^\otimes,a) \left[ (\mathbf{B}_{\textbf{cost}} Q_t)(s^\otimes,a) - Q_t(s^\otimes,a) + M_{t+1} \right]
\end{equation*}
where $M_{t+1}$ is the martingale-difference noise induced by sampling ${s^\otimes}'\sim \P^\otimes(\cdot\mid s^\otimes,a)$. Since $\mathbf{C}(s^\otimes,a, {s^\otimes}') \in [0, 1]$ and $\gamma({s^\otimes}') \in [\gamma, 1]$ for all $(s^\otimes,a, {s^\otimes}')$, and the iterates are initialized in the interval $[0, 1/(1-\gamma)]$, the update targets are bounded. Thus, the noise has bounded conditional second moment.

Standard stochastic approximation results for finite stochastic shortest-path problems can thus be applied \citep{jaakkola1993convergence}. Furthermore, the usual Robbins-Monro conditions and infinite visitation of every state-action pair implies that,
$Q_t(s^\otimes, a) \to Q^*(s^\otimes, a) \quad \text{(almost surely)}$
as $t \to \infty$ and for every $(s^\otimes,a)\in\S^\otimes\times\A$. Finally, because,
\begin{equation*}
    Q^*(s^\otimes,a)
=
\frac{1}{1-\gamma}p^*(s^\otimes,a)
\end{equation*}
and $1/(1-\gamma)>0$, minimizing $Q^*(s^\otimes,a)$ over actions is equivalent to minimizing $p^*(s^\otimes,a)$. Therefore every greedy policy with respect to $Q^\star$ is maximally safe. Since $Q_t\to Q^*$ almost surely and the greedy policy uses a fixed deterministic tie-breaking rule, for every product state visited infinitely often,

\begin{equation*}
    \lim_{t\to\infty}
\mathbb{P}\!\left(
\pi_t(s^\otimes)\in
\arg\min_{a\in\A}p^\star(s^\otimes,a)
\right)
=1
\end{equation*}
This completes the proof.

\paragraph{Convergence of the indicator} We provide the following Lemma to establish this result.

\begin{lemma}[Convergence of the indicator] \label{lem:indicator} Denoting $\indt{ a \in S(\pi_{\text{safe}}, a, \epsilon_t) }$ as the time varying empirical indicator for the true function $\ind{ a \in S(\pi_{\text{safe}}, a, \epsilon_t) }$, then
\begin{equation*}
    \indt{ a \in S(\pi_{\text{safe}}, a, \epsilon_t) } \to \ind{ a \in S^*(a, \epsilon_t) } \quad \forall a \in A \quad (\text{almost surely})
\end{equation*}
as $t \to \infty$
\end{lemma}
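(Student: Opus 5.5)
The plan is to obtain convergence of the empirical indicator from the almost-sure convergence of the underlying risk estimate (Lemma~\ref{lem:safe-policy-convergence}), and then pass it through the threshold map $x \mapsto \ind{x < \epsilon_t}$. The only delicate part is the boundary case where the limiting risk sits exactly at the threshold.

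\textbf{Step 1: almost-sure convergence of the risk estimate.} Fix $s^\otimes$ and $a$, and write $\hat p_t(s^\otimes,a)$ for the risk estimate used at learning time $t$. I would show $\hat p_t(s^\otimes,a) \to p^*(s^\otimes,a) = \inf_\pi \mathbb{P}^a_\pi(s^\otimes \models \lozenge \mathrm{unsafe})$ almost surely, treating each mode separately.

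In \textsc{Exact} mode the indicator is already $\ind{s^\otimes \in S^*(a,\epsilon_t)}$, so there is nothing to prove.

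In \textsc{ModelFree} mode, Lemma~\ref{lem:safe-policy-convergence} gives $Q_{\text{safe}} \to p^*/(1-\gamma)$ almost surely. Because the greedy action set is eventually correct, the greedy policy $\pi_t$ eventually lies among the maximally safe actions at every infinitely-visited state. For large $t$, the finite-horizon operator $\mathbf{B}_\text{fin}$ in \eqref{eq:finitiebellman} is therefore evaluated under a fixed maximally safe policy. Its iterates $Q^{(0)}$ then converge by a second stochastic-approximation argument \citep{jaakkola1993convergence}, on a two-timescale or eventually-stationary basis.

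The limit is $\mathbb{P}^a_{\pi^*}(s^\otimes\models\lozenge^{\le H^*}\mathrm{unsafe})$. Assumption~\ref{ass:finite-horizon} is what makes thresholding this bounded-horizon quantity equivalent to thresholding the unbounded one: it is $\ge\epsilon_t$ exactly when $p^*(s^\otimes,a)\ge\epsilon_t$.

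In \textsc{ModelBased} mode, infinite visitation and the strong law give $\widehat\P\to\P$ almost surely. By Lemma~\ref{lem:paths}, the $H^*$-step reachability probability is continuous in the kernel, so the model-checked value converges as well. Combining this with convergence of $\pi_{\text{safe}}$ yields the same limit. In the statistical model-checking variant, I would let the sample count $m_t$ grow so that $\sum_t \delta_t<\infty$, and then invoke Borel--Cantelli.

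\textbf{Step 2: from the risk estimate to the indicator.} Given $\hat p_t \to p^*$, if $p^*(s^\otimes,a)\neq\epsilon_t$ then $\hat p_t$ eventually lies strictly on the correct side of $\epsilon_t$, so the indicator is eventually constant and equal to $\ind{s^\otimes\in S^*(a,\epsilon_t)}$. Since $\S^\otimes\times\A$ is finite, the almost-sure events can be intersected over all pairs.

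\textbf{Main obstacle: the boundary case.} The hard step is $p^*(s^\otimes,a)=\epsilon_t$, where the threshold map is discontinuous and $\hat p_t$ may oscillate across $\epsilon_t$ indefinitely. I would first try the conservative test already used in Proposition~\ref{prop:statisticalmc}: accept only if $\hat p_t \le \epsilon_t - \eta_t$ with margin $\eta_t\downarrow 0$. At the boundary this rejects eventually, which matches $\ind{s^\otimes\in S^*}=0$ under the strict inequality in the definition of $S^*$. To make this work, $\eta_t$ must decay more slowly than the estimation error. If that rate is unavailable, I would fall back to assuming $\epsilon_t\notin\{p^*(s^\otimes,a)\}$, a measure-zero genericity condition. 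A secondary difficulty is that the time-varying $\epsilon_t$ is indexed by episode step while convergence is over learning time. I would handle this by proving the statement for each fixed step index separately.
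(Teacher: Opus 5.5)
Your route is essentially the paper's: per-mode almost-sure convergence of the risk estimate, then Assumption~\ref{ass:finite-horizon} to identify the thresholded limit with membership in $S^*(a,\epsilon_t)$. For \textsc{ModelFree} that convergence comes from stochastic approximation of the finite-horizon recursion once $\pi_{\mathrm{safe}}$ has settled. For \textsc{ModelBased} it comes from the strong law plus continuity of $H^*$-step reachability in the kernel. The paper's proof ends with ``therefore the time-varying indicator converges'' and never addresses the discontinuity of the threshold map. Your boundary discussion therefore fills a real hole: with the plain check $Q^{(0)}\le\epsilon_t$, the lemma needs either a genericity condition or a modified test.

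You have, however, put the boundary in the wrong place.
\begin{itemize}
\item Your estimates converge to $p_{H^*}(s^\otimes,a):=\mathbb{P}^a_{\pi^*_{\mathrm{safe}}}(s^\otimes\models\lozenge^{\le H^*}\mathrm{unsafe})$, not to $p^*(s^\otimes,a)$ as Step 1 announces; your own \textsc{ModelFree} paragraph says this.
\item Assumption~\ref{ass:finite-horizon} only yields $p_{H^*}\ge\epsilon_t$ when $p^*\ge\epsilon_t$. So $p^*>\epsilon_t$ with $p_{H^*}=\epsilon_t$ is permitted.
\item In that case Step 2's claim fails: $\hat p_t\to\epsilon_t$ may cross the threshold infinitely often, while the correct output is rejection.
\end{itemize}
The degenerate set is $\{p_{H^*}=\epsilon_t\}$, and a genericity fallback must exclude those values rather than the values of $p^*$.

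Your conservative test still works, because of an asymmetry worth stating. If $p^*<\epsilon_t$, then $p_{H^*}\le p^*<\epsilon_t$ gives a strict gap automatically. Degeneracy therefore only occurs on the reject side, and a margin $\eta_t\downarrow 0$ that dominates the estimation error handles it. Relatedly, the paper's \textsc{ModelBased} check uses the fixed margin $\epsilon'$ of Proposition~\ref{prop:statisticalmc}. Its limit would be $\ind{p_{H^*}\le\epsilon_t-\epsilon'}$ rather than $\ind{s^\otimes\in S^*(a,\epsilon_t)}$, so your vanishing margin is what is actually needed to get the stated limit.

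Finally, ``a fixed maximally safe policy'' is more than Lemma~\ref{lem:safe-policy-convergence} delivers. With ties in $\arg\min_a p^*(s^\otimes,a)$, the greedy action can keep switching among minimizers whose $H^*$-step risks differ, so $Q^{(0)}_t$ need not converge. The paper makes the same leap. The indicator-level conclusion can plausibly be recovered: every policy using only minimizing actions has infinite-horizon risk exactly $p^*$, hence $H^*$-step risk at most $p^*$, which settles the accept side. That still needs an argument.
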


\begin{proof} By Lemma~\ref{lem:safe-policy-convergence},
$\pi_{\mathrm{safe}}$ converges almost surely to a maximally safe policy.
Therefore the reachability probabilities defining
$S(\pi_{\mathrm{safe}},a,\epsilon_t)$ converge to those defining
$S^\star(a,\epsilon_t)$.

In the \textsc{ModelBased} case we argue that $\widehat \P \to \P$ almost surely under infinite exploration. Fix a state-action pair $(s^\otimes, a)$ visited infinitely often, then given that $\widehat \P$ is a consistent unbiased estimator for, we have,
\begin{equation*}
	\widehat \P(s' \mid, s, a) \to \P(s' \mid s, a) \quad (\text{almost surely})
\end{equation*}
Since finite-horizon reachability probabilities are
continuous polynomials in the transition probabilities over horizon $H^*$, then this is sufficient to complete the proof for this mode. 

In the \textsc{ModelFree} case \eqref{eq:finitiebellman} is a finite-horizon dynamic-programming
recursion for the probability of reaching $\F^\otimes$ within $H^\star$ steps when the first action is $a$ and the safe policy is followed thereafter. Since the horizon is finite, $\mathbf{B}_{\textbf{fin}}$ can be evaluated backward from depth $H^\star$. Given each operator is non-expansive and bounded in $[0, 1]$, and the stochastic approximation thereof has noise that is a martingale-difference sequence with bounded variance, then standard stochastic approximation results \citep{jaakkola1993convergence} yield the following result,
\begin{equation*}
Q^{(0)}_t \to 
    \mathbb{P}^a_{\pi^\star_{\mathrm{safe}}}
(s^\otimes\models\lozenge^{\le H^\star}\mathrm{unsafe})
\quad\text{(almost surely)}
\end{equation*}
as $t \to \infty$. Therefore the time-varying indicator converges almost surely to the true maximal safe set indicator $s^\otimes_t \in S^*(a, \epsilon_t)$. Assumption \ref{ass:finite-horizon} then does the heavy lifting transferring the finite-horizon check to an infinite-horizon safety guarantee.
\end{proof}

Before continuing with the convergence properties of the Q-iterates we formally establish the following assumption which follows from Lemma \ref{lem:indicator}. 

\begin{assumption}[Eventual correctness of the indicator] \label{ass:eventual}  For every $(s^{\otimes},a) \in \S^{\otimes} \times \A$ there exists some (finite, possibly random) time $T_{(s^{\otimes},a)}$ such that,
\begin{equation*}
    \indt{ a \in S(\pi_{\text{safe}}, a, \epsilon_t) } = \ind{ a \in S^*(, a, \epsilon_t)} \quad \forall t \geq T_{(s^{\otimes},a)} \quad (\text{almost surely})
\end{equation*}
Thus the indicator may be wrong at first but (almost surely) stops making mistakes. 
\end{assumption}

\paragraph{Convergence of the shielded policy.} We provide the following Lemma to establish this result.

\begin{lemma} Let $\mathbf{B}_{\textbf{safe}, t}$ be the time-varying analogue for the safety-aware Bellman operator $\mathbf{B}_{\textbf{safe}}$, concretely,
\begin{equation*}
    (\mathbf{B}_{\textbf{safe}, t}Q)(s^{\otimes}, a) :=
    \;\begin{cases}
        \mathbb{E}^{\mathcal{M}^\otimes} \left[ \R(s, a, s') + \gamma \max_{a'} Q({s^{\otimes}}', a')\right] & \mathds{1}_t \left[ a \in \S^{\otimes}(\pi_{\text{safe}}, a, \epsilon_t) \right]\\
        0 & \text{otherwise}\\
    \end{cases}
\end{equation*}
Then under Assumption \ref{ass:eventual}, for the corresponding update step,
\begin{equation}
    Q_{t+1}(s^{\otimes}, a) = (1 - \alpha_t(s^{\otimes}, a)) Q_t(s^{\otimes}, a) + \alpha_t(s^{\otimes}, a) (\mathbf{B}_{\textbf{safe}, t} Q_t)(s^{\otimes}, a) \label{eq:stochasticapprox}
\end{equation}
with step-sizes satisfying the usual Robbins-Monro conditions and such that every $(s^\otimes, a)$ is visited infinitely often. Then with probability 1,
\begin{equation*}
    \lim_{t \to \infty} Q_{t+1}(s^{\otimes}, a)= Q^*_{\text{safe}}(s^{\otimes}, a)
\end{equation*}
where $Q^*_{\text{safe}}$ is the unique fixed point of $\mathbf{B}_{\textbf{safe}}$.
\end{lemma}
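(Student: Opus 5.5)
The plan is to treat the update in \eqref{eq:stochasticapprox} as a stochastic approximation scheme driven by a time-varying operator that eventually coincides with $\mathbf{B}_{\textbf{safe}}$. We then reduce to the standard asynchronous stochastic approximation theorem for contractions \citep{jaakkola1993convergence}, applied from a random but almost surely finite time onwards. Concretely, let $T := \max_{(s^\otimes,a)} T_{(s^\otimes,a)}$. This is a maximum over finitely many pairs, since $\S^\otimes\times\A$ is finite, so by Assumption~\ref{ass:eventual} we have $T<\infty$ almost surely. On the event $\{T<\infty\}$ and for all $t\ge T$, the indicator agrees with $\ind{s^\otimes\in S^*(a,\epsilon_t)}$, and hence $\mathbf{B}_{\textbf{safe},t}=\mathbf{B}_{\textbf{safe}}$ identically.

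First, we check boundedness of the iterates on the initial, possibly erroneous, phase $t<T$. The rewards lie in $[0,R_{\max}]$, and each $\mathbf{B}_{\textbf{safe},t}$ (with either indicator value) maps the box $[0,R_{\max}/(1-\gamma)]$ into itself. The update is also a convex combination once $\alpha_t\le 1$. So, assuming initialisation in this box, every $Q_t$ stays in it; this holds regardless of what the indicator did. In particular, $Q_T$ is a finite (random) initial condition.

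Second, for $t\ge T$ we write the sampled update in the usual form
\[
Q_{t+1}=Q_t+\alpha_t\bigl[(\mathbf{B}_{\textbf{safe}}Q_t)-Q_t+M_{t+1}\bigr],
\]
where $M_{t+1}$ is the martingale-difference noise from sampling $s'\sim\P$. On unsafe pairs the target is the deterministic $0$, so the noise vanishes there. On safe pairs the noise has conditional variance bounded by $C(1+\|Q_t\|_\infty^2)$, because rewards and iterates are bounded. By Proposition~\ref{prop:bellmanoperator}, $\mathbf{B}_{\textbf{safe}}$ is a $\gamma$-contraction in $\|\cdot\|_\infty$ with unique fixed point $Q^*_{\text{safe}}$. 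The Robbins--Monro conditions on the tail $\{\alpha_t\}_{t\ge T}$ still hold, since removing finitely many terms preserves $\sum\alpha_t=\infty$ and $\sum\alpha_t^2<\infty$. Infinite visitation also holds on the tail. Thus the theorem of \citet{jaakkola1993convergence} applied to the process restarted at $T$ gives $Q_t\to Q^*_{\text{safe}}$ almost surely.

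The main obstacle is that $T$ is a random time rather than a stopping time known in advance, so "restart at $T$" has to be justified. I would handle this by a countable decomposition. Write $\{T<\infty\}=\bigcup_{k}\{T\le k\}$. For each fixed deterministic $k$, I would consider an auxiliary process that uses $\mathbf{B}_{\textbf{safe}}$ for all $t\ge k$ and starts from $Q_k$. The cited theorem applies to it for arbitrary bounded initial conditions, so it converges almost surely. On $\{T\le k\}$ this auxiliary process coincides pathwise with the actual iterates. A countable union of null sets is null, which gives almost-sure convergence on $\{T<\infty\}$, an event of probability one.

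An alternative route is to treat $\mathbf{B}_{\textbf{safe},t}Q_t-\mathbf{B}_{\textbf{safe}}Q_t$ as a perturbation term $\beta_t$ that is eventually zero, and hence summable. This can then be absorbed using the perturbed version of the contraction stochastic approximation lemma.
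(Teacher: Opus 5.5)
Your proposal is correct, but your main argument differs from the paper's. The paper's proof is essentially your ``alternative route'':

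\begin{itemize}
\item It bounds the discrepancy $\mathbf{B}_{\textbf{safe},t}Q_t-\mathbf{B}_{\textbf{safe}}Q_t$ by the 0--1 indicator error times $(R_{\max}+\gamma Q_t)$.
\item It notes that this error is eventually zero under Assumption~\ref{ass:eventual}, so the injected bias satisfies $\sum_t\alpha_t\Delta_t<\infty$ almost surely.
\item It then appeals to the stochastic-approximation result of \cite{jaakkola1993convergence} for the $\gamma$-contraction $\mathbf{B}_{\textbf{safe}}$ from Proposition~\ref{prop:bellmanoperator}.
\end{itemize}

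That route is shorter, but it needs a version of the convergence theorem that tolerates a vanishing or summable bias term. The basic contraction condition $\|\mathbb{E}[F_t\mid\mathcal{F}_t]\|\le\gamma\|\Delta_t\|$ has no such term. Also, the paper's phrase ``$\lim_t\Delta_t=0$, hence summable'' is only valid because the bias is eventually \emph{zero}, which is the reasoning you give.

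Your restart argument needs only the unperturbed theorem. You show the iterates stay in the invariant box $[0,R_{\max}/(1-\gamma)]$. You then handle explicitly a point the paper glosses over: $T=\max_{(s^\otimes,a)}T_{(s^\otimes,a)}$ is random and not a stopping time. You deal with this by coupling the actual iterates on $\{T\le k\}$ with auxiliary processes started at deterministic times $k$. These share the same samples, visitation sequence and adapted step sizes, and you finish with a countable union of null sets. This is more rigorous than the paper's one-line appeal, at the cost of a little extra bookkeeping. In effect, it shows how an eventually-zero bias reduces to the standard theorem without invoking a perturbed variant.
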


\begin{proof} The only difference between $\mathbf{B}_{\textbf{safe}, t}$ and $\mathbf{B}_{\textbf{safe}}$ is where a wrong safety indicator is used. Since all rewards are bounded and non-negative (i.e., $0\le \R(s,a,s')\le R_{\max}<\infty$ for all $(s, a, s') \in \S \times \A \times \S$), then we have a simple bound,
\begin{equation*}
    \Vert(\mathbf{B}_{\textbf{safe}, t}Q_t)(s^{\otimes},a) - (\mathbf{B}_{\textbf{safe}}Q_t)(s^{\otimes},a) \Vert_{\infty} = \Vert \Delta_t \Vert_{\infty} \leq  \Vert\varepsilon_t(s^\otimes, a) (R_{\max} +  \gamma Q_t ) \Vert_{\infty} 
\end{equation*}
where $\varepsilon_t(s^{\otimes}, a) = \left\vert \indt{ a \in S(\pi_{\text{safe}}, a, \epsilon_t) }- \ind{ a \in S^*(, a, \epsilon_t) } \right\vert$ is the 0-1 error. By Assumption \ref{ass:eventual} we have, $\varepsilon_t(s^{\otimes}, a)$ is eventually zero for all $(s^{\otimes}, a) \in \S^{\otimes} \times \A$. Thus,
\begin{equation*}
    \lim_{t=\infty}\Delta_t = 0
\end{equation*}
We are now equipped to apply standard stochastic approximation results \cite{jaakkola1993convergence}, as we have already established that $\mathbf{B}_{\textbf{safe}}$ is a $\gamma$-contraction with a unique fixed point in Proposition \ref{prop:bellmanoperator}. Furthermore, the total bias injected into each $(s^{\otimes}, a) \in \S^{\otimes} \times \A$ is bounded (since $\lim_{t=\infty}\Delta_t = 0$), and thus,
\begin{equation*}
    \sum_{t}\alpha_t(s^{\otimes}, a)\Delta_t(s^{\otimes}, a) < \infty \quad \text{(almost surely)}
\end{equation*}
thus we meet the sufficient conditions for convergence of the update rule given by,
\begin{equation}
    Q_{t+1}(s^{\otimes}, a) = (1 - \alpha_t(s^{\otimes}, a)) Q_t(s^{\otimes}, a) + \alpha_t(s^{\otimes}, a) (\mathbf{B}_{\textbf{safe}, t} Q_t)(s^{\otimes}, a) \label{eq:update-rule}
\end{equation}
Noting that as $t \to \infty$, we have $\mathbf{B}_{\textbf{safe}, t} Q^*_{\text{safe}} \to \mathbf{B}_{\textbf{safe}} Q^*_{\text{safe}}$ and by uniqueness the Q-iterates for \eqref{eq:update-rule} converges (almost surely) with probability 1 to $Q^*_{\text{safe}}$.
\end{proof}

\paragraph{Persistent $\delta$ corruptions.} 

\begin{lemma}[Optimal policy under $\delta$-corruptions] Define the $\delta$-corrupted safety-aware Bellman operator,
\begin{equation}
    (\mathbf{B}_{\textbf{safe}, \delta}Q)(s^{\otimes}, a) \begin{cases}
        (1 - \delta) (\mathbf{B}_{\text{safe}}Q) & \text{if $\ind{ a \in \S^*(, a, \epsilon_t)}$}\\
        \delta (\mathbf{B}Q) & \text{otherwise}
    \end{cases} \label{eq:bellmanadversarial}
\end{equation}
where $\mathbf B$ is the usual Bellman optimality operator defined as,
\begin{equation}
    (\mathbf{B} Q) := \mathbb{E}^{\mathcal{M}^\otimes} \left[ \R(s, a, s') + \gamma \max_a' Q({s^\otimes}', a') \right]
\end{equation}
the Q-iterates for the corresponding stochastic approximation,
\begin{equation}
    Q_{t+1}(s^{\otimes}, a) = (1 - \alpha_t(s^{\otimes}, a)) Q_t(s^{\otimes}, a) + \alpha_t(s^{\otimes}, a) (\mathbf{B}_{\textbf{safe}, \delta} Q_t)(s^{\otimes}, a) \label{eq:deltaupdate}
\end{equation}
converge to,
\begin{equation*}
    \Vert Q_t - Q^*_{\text{safe}} \Vert_{\infty} \to O(\varepsilon_{\delta}/(1-\gamma)) \leq \delta R_\text{max}/(1-\gamma)^2 \quad (\text{almost surely})
\end{equation*}
under the usual Robbins-Monro conditions and such that every $(s^\otimes, a)$ is visited infinitely often.
\end{lemma}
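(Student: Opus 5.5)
We will compare the corrupted operator $\mathbf{B}_{\textbf{safe}, \delta}$ with $\mathbf{B}_{\textbf{safe}}$ and treat the difference as a persistent but uniformly bounded bias in a stochastic approximation of a contraction. The argument has three steps.

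\emph{Step 1 (contraction).} First we check that $\mathbf{B}_{\textbf{safe}, \delta}$ is itself a $\gamma$-contraction in $\|\cdot\|_\infty$. On pairs with $s^\otimes \in S^*(a,\epsilon_t)$ the operator is $(1-\delta)\mathbf{B}_{\textbf{safe}}$. On the remaining pairs it is $\delta\mathbf{B}$. Both $\mathbf{B}$ and $\mathbf{B}_{\textbf{safe}}$ are $\gamma$-contractions, the latter by Proposition~\ref{prop:bellmanoperator}. Scaling by $(1-\delta)$ or $\delta$ only shrinks the Lipschitz constant, and the case split is a pointwise partition of $\S^\otimes \times \A$. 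Hence Banach's theorem gives a unique fixed point $Q^*_\delta$.

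\emph{Step 2 (stochastic approximation).} Next we invoke the standard stochastic approximation result of \citep{jaakkola1993convergence} for \eqref{eq:deltaupdate}, exactly as in the preceding lemma. Rewards are bounded, so the martingale noise has bounded second moment. Together with the Robbins--Monro conditions and infinite visitation, this yields $Q_t \to Q^*_\delta$ almost surely.

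\emph{Step 3 (perturbation bound).} It remains to bound $\|Q^*_\delta - Q^*_{\text{safe}}\|_\infty$ by the usual fixed-point perturbation argument:
\begin{equation*}
\|Q^*_\delta - Q^*_{\text{safe}}\|_\infty \le \|\mathbf{B}_{\textbf{safe},\delta}Q^*_\delta - \mathbf{B}_{\textbf{safe},\delta}Q^*_{\text{safe}}\|_\infty + \|\mathbf{B}_{\textbf{safe},\delta}Q^*_{\text{safe}} - \mathbf{B}_{\textbf{safe}}Q^*_{\text{safe}}\|_\infty .
\end{equation*}
The first term is at most $\gamma\|Q^*_\delta - Q^*_{\text{safe}}\|_\infty$, so rearranging gives
\begin{equation*}
\|Q^*_\delta - Q^*_{\text{safe}}\|_\infty \le \varepsilon_\delta/(1-\gamma), \qquad \varepsilon_\delta := \|(\mathbf{B}_{\textbf{safe},\delta} - \mathbf{B}_{\textbf{safe}})Q^*_{\text{safe}}\|_\infty .
\end{equation*}
To bound $\varepsilon_\delta$ we split by cases.
\begin{itemize}
\item On safe pairs the difference is $\delta\,|\mathbf{B}_{\textbf{safe}}Q^*_{\text{safe}}| = \delta\,|Q^*_{\text{safe}}| \le \delta R_{\max}/(1-\gamma)$.
\item On unsafe pairs $\mathbf{B}_{\textbf{safe}}Q^*_{\text{safe}} = 0$, so the difference is $\delta\,|\mathbf{B}Q^*_{\text{safe}}| \le \delta\bigl(R_{\max} + \gamma R_{\max}/(1-\gamma)\bigr) = \delta R_{\max}/(1-\gamma)$.
\end{itemize}
Here we use non-negativity of rewards and $0 \le Q^*_{\text{safe}} \le R_{\max}/(1-\gamma)$. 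Thus $\varepsilon_\delta \le \delta R_{\max}/(1-\gamma)$, which gives the stated bound $\delta R_{\max}/(1-\gamma)^2$.

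The main obstacle is interpretive rather than technical. Definition \eqref{eq:bellmanadversarial} is a deterministic mixture, whereas statistical model checking induces a \emph{random} misclassification with probability at most $\delta$ at each step. If one insists on the random version, the right object is the conditional expectation of the corrupted target. That expectation equals a convex combination of $\mathbf{B}_{\textbf{safe}}$ and either $\mathbf{B}$ or $0$, and its deviation from $\mathbf{B}_{\textbf{safe}}$ is still bounded by $\varepsilon_\delta$. The residual zero-mean part is absorbed into the martingale noise. Step 2 then goes through with a non-vanishing bias, and convergence is to a ball of radius $\varepsilon_\delta/(1-\gamma)$ rather than to a point; this follows from the asymptotic-bias version of the stochastic approximation theorem. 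Accordingly, the limit statement should be read as $\limsup_t \|Q_t - Q^*_{\text{safe}}\|_\infty \le \delta R_{\max}/(1-\gamma)^2$ almost surely.
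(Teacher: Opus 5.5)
Your proof is correct and follows the same route as the paper: you show $\mathbf{B}_{\textbf{safe},\delta}$ is a $\gamma$-contraction, get almost-sure convergence of the stochastic approximation to its fixed point from the standard theorem, and then apply the fixed-point perturbation bound with $\varepsilon_\delta \le \delta R_{\max}/(1-\gamma)$. Your execution is slightly cleaner than the paper's in two places: you evaluate the operator discrepancy at $Q^*_{\text{safe}}$, whereas the paper's $\sup_Q$ over all $Q$ is not finite unless $Q$ is restricted to $[0, R_{\max}/(1-\gamma)]$; and you bound the unsafe pairs explicitly, where the discrepancy is $\delta\,\mathbf{B}Q$.
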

\begin{proof} It can easily be shown that $\mathbf{B}_{\textbf{safe}, \delta}$ is still a $\gamma$-contraction but perturbed by $\delta$, i.e, for every function $Q_1, Q_2$ on the metric space $\S^{\otimes} \times \A$, we have,
\begin{equation*}
    \Vert(\mathbf{B}_{\textbf{safe}, \delta}Q_1) - (\mathbf{B}_{\textbf{safe}, \delta}Q_2) \Vert_{\infty} \leq \gamma \Vert Q_1 - Q_2 \Vert_{\infty}
\end{equation*}
Thus the stochastic approximation in \eqref{eq:deltaupdate} converges to a unique fixed point. It remains to determine where this fixed point is in relation to $Q^*_{\text{safe}}$ (the fixed point of $\mathbf B_{\textbf{safe}}$).

We define the following quantity,
\begin{equation*}
    \varepsilon_{\delta} = \sup_{Q} \Vert (\mathbf{B}_{\textbf{safe}, \delta}Q) - (\mathbf{B}_{\textbf{safe}}Q) \Vert_{\infty}
\end{equation*}
Noting that for all $Q$ we have,
\begin{equation*}
    (\mathbf{B}_{\text{safe}}Q) \leq \max_{\pi, s} V_{\pi}(s) \leq R_\text{max}/(1-\gamma)
\end{equation*}
Then by \eqref{eq:bellmanadversarial} we have,
\begin{equation*}
    \varepsilon_{\delta}  \leq \delta R_\text{max}/(1-\gamma)
\end{equation*}
then standard perturbation results for $\gamma$-contractions give the desired result,
\begin{equation*}
    \Vert Q_t - Q^*_{\text{safe}} \Vert_{\infty} \to O(\varepsilon_{\delta}/(1-\gamma)) \leq \delta R_\text{max}/(1-\gamma)^2 \quad (\text{almost surely})
\end{equation*}
\end{proof}

\paragraph{Conclusion.} In \textsc{Exact} mode, the indicator $\ind{ a \in S^*(a, \epsilon_t) }$ is computed directly  from
the true product MDP, so the task update is a stochastic approximation of the fixed Bellman operator $\mathbf B_{\mathrm{safe}}$. Proposition \ref{prop:bellmanoperator} gives that $\mathbf B_{\mathrm{safe}}$ is a $\gamma$-contraction hence $Q_\text{task}$ almost surely converges to its unique fixed point $Q^*_\text{safe}$.

In \textsc{ModelBased} mode, Lemma \ref{lem:safe-policy-convergence} establishes convergence of $\pi_{\text{safe}}$ to the maximally safe policy, and Lemma \ref{lem:indicator} gives eventual correctness of the safe set indicator.  After the almost-sure finite time at which the indicator is
correct, the task policy update is again an ordinary stochastic approximation of $\mathbf B_{\mathrm{safe}}$, and therefore converges almost surely to $Q^*_\text{safe}$. When statistical model checking is used, the only additional error is the failure probability $\delta$. By standard the Hoeffding bounds used in Lemma \ref{lem:bounded}, $m$ samples paths gives failure probability decreasing exponentially in $m$, thus $\delta$ can be driven to zero rapidly, with only logarithmic dependence on
the desired confidence.

In \textsc{ModelFree} mode, the same conclusion follows from convergence of the finite-horizon safety value function Bellman update established Lemma \ref{lem:indicator}, The
learned indicator becomes correct almost surely, after which the task update is again governed by the operator $\mathbf B_{\mathrm{safe}}$. Therefore, in all three modes,
\begin{equation*}
    Q_{\mathrm{task},t}\to Q^*_{\mathrm{safe}} \quad \text{(almost surely)}
\end{equation*}
as $t \to \infty$. Finally, Theorem \ref{thm:constraintsatisfaction} applies to the deployed shielded policy $\pi^\lozenge$ which enforces the step-wise risk levels $\epsilon_t$ that satisfy $ \sum_t^\infty \epsilon_t \leq \epsilon$. The by construction $\pi^\lozenge$ satisfies the global safety constraint in \eqref{eq:problem}, which completes the proof.

\end{proof}

\clearpage
\newpage

\section{\textsc{RecReg} with Deep RL}
\label{sec:deep-rl}

\subsection{Overview}

\textsc{RecReg} with DreamerV3's RSSM \citep{hafner2025mastering} is described in Section \ref{sec:deep-recreg}. Here we provide a high level overview. Figure \ref{fig:prediction} illustrates how the probability $\mathbb{P}_\pi^a (s^\otimes_t \models \lozenge \mathrm{unsafe})$ is estimated. Algorithm \ref{alg:ambs} provides the high-level algorithmic framework for \textsc{Deep-RecReg}. 

\begin{figure}[H]
    \centering
    \includegraphics[width=0.85\linewidth]{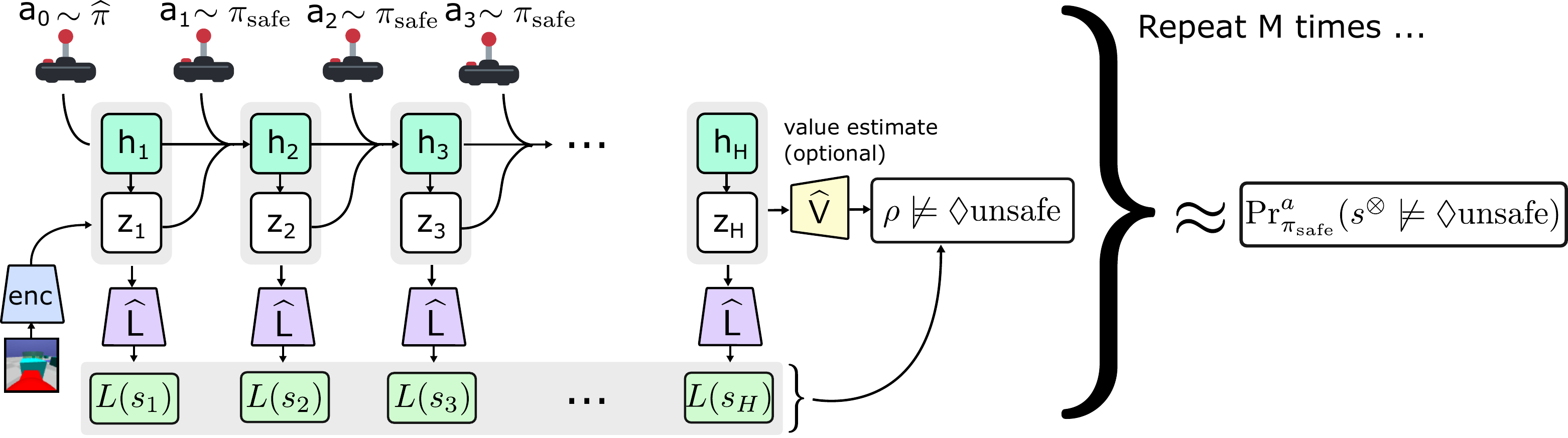}
    \caption{Estimating the safety probability with RSSM.}
    \label{fig:prediction}
\end{figure}

\begin{algorithm}[H]
\caption{\textsc{RecReg} with DreamerV3}
\label{alg:ambs} 
\raggedright
\textbf{Input:} $\mathcal{D}$, hyperparameters: $\epsilon_t$, $m$, $H$.\\
\textbf{Initialize:} replay buffer $D$, DreamerV3 parameters $\theta$, task policy $\widehat\pi$ and safe policy $\pi_{\text{safe}}$.
\begin{algorithmic}[1]
\For{{\bf each episode}}
\State Observe observation $o_0$.
\For{t = 1, \ldots, T}
\State \emph{// Shielding with the latent world model}
\State Sample $a_t \sim \widehat \pi$.
\State Using $a_t$ and $\pi_{\text{safe}}$ sample $m$ trajectories $M = \langle\{\hat o_{t:t+H}, \hat r_{t:t+H}, \hat c_{t:t+H}\} \rangle_{i=0}^{m} \sim p_{\theta}$.
\State \emph{// Compute the probability estimate}
\State $\bar X \gets \frac{1}{m}  \sum^m_{i=0} \max(\hat c_{t:t+H}) + \text{(optional value estimate $\, \ind{\hat v_c > c \cdot \gamma^H}$)}$.
\State $a_t \sim \pi_{\text{safe}}$ \textbf{if} $\bar X > \epsilon_t$.
\State Execute $a_t$ and observe $(o_{t+1}, r_t)$.
\State Compute DFA state $q_{t+1}$ and let $c_t \gets \ind{q_{t+1} \in \F}$. 
\State Append $(o_t, a_t, r_t, c_t, o_{t+1})$ to the replay buffer $D$.
\If{\texttt{time\_to\_update}}
\State \emph{// World model learning}
        \State Sample a batch $B$ of transition sequences $ \{ ( o_{t'}, a_{t'}, r_{t'}, c_{t'}, o_{t'+1} )\} \sim D$.
        \State Update DreamerV3 parameters $\theta$ with maximum likelihood \cite{hafner2023mastering}.
        \State \emph{// Task policy optimization}
        \State Sample sequences $\{\hat o_{t':t'+N}, \hat r_{t':t'+N}, \hat c_{t':t'+N}\} \sim p_{\theta}$ with the task policy $\widehat\pi$.
        \State Update the task policy $\widehat\pi$ with RL (to maximize reward).
        \State Update the corresponding value critics with maximum likelihood.
        \State \emph{// Safe policy optimization}
        \State Sample sequences $\{\hat o_{t':t'+N}, \hat r_{t':t'+N}, \hat c_{t':t'+N}\} \sim p_{\theta}$ with the safe policy $\pi_{\text{safe}}$.
        \State Update the safe policy $\pi_{\text{safe}}$ with RL (to minimize cost).
        \State Update the corresponding value critics with maximum likelihood.
\EndIf
\EndFor
\EndFor
\end{algorithmic}
\end{algorithm}

\subsection{Implementation Details}

Here we provide additional implementation details including environment descriptions and training details.

\begin{figure}[t]
    \centering
    \includegraphics[width=0.35\linewidth]{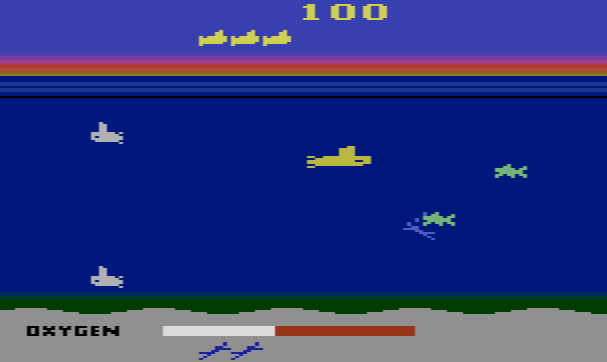}
    \caption{Seaquest environment}
    \label{fig:seaquest}
\end{figure}

\paragraph{Atari Seaquest.}  Seaquest is an Atari 2600 game provided as part of the Arcade Learning Environment (ALE) \cite{machado18arcade} (c.f., Fig.~\ref{fig:seaquest}).
\begin{itemize}
    \item \textbf{Dynamics:} the agent (submarine) can navigate freely in the water.
    \item \textbf{Action space:} Seaquest utilizes the full Atari controller action space (18 distinct actions), which includes all possible combinations of FIRE, UP, DOWN, LEFT, RIGHT.
    \item \textbf{State space:} Seaquest is partially observable, the observations correspond to $210 \times 160 \times 3$ pixel images provided by the ALE. The observations are pre-processed and scaled to $64 \times 64 \times 3$ tensors corresponding to pixel values with RGB information intact.
    \item \textbf{Additional modifications:} We also use \texttt{frame\_skip=4} (every 4 frames are skipped with the agent's action repeated) and \texttt{sticky\_actions=0.25} (there is a 0.25 chance that the agent's previous action is repeated), which is in line with the recommended settings for Atari games \cite{machado18arcade}.
    \item \textbf{Reward function:} the agent receives reward for collecting divers and returning them to the surface, and ``shooting'' enemy sharks and submarines.
    \item \textbf{Episode length:} $T=27,000$.
    \item \textbf{Safety properties}:
    \begin{itemize}
        \item property\_1 ($\varphi_1$): the agent must only surface with a diver, avoid running out of oxygen and avoid being hit by enemy ``sharks'' and ``submarines'', i.e., $\varphi_1 = \square((\text{surface} \land X \neg \text{surface}) \to \square(\text{surface} \to \text{diver})) \land \square (\neg \text{out-of-oxygen}) \land \square(\neg \text{hit})$. 
        \item property\_2 ($\varphi_2$): the agent must return to the surface in at most 30 timesteps after picking up a diver, i.e., $\varphi_2 = \square(\texttt{diver} \to \lozenge^{\leq 30} \texttt{surface})$.
    \end{itemize}
    \item \textbf{Automaton size:} $|\Q|=5$ for property\_1 and $|\Q|=32$ for property\_2.
    \item \textbf{Cost thresholds:}
    \begin{itemize}
        \item \textsc{RecReg} and AMBS: $\epsilon_t = 0.01$.
        \item DreamerV3 (Aug. Lag.), PPO-Lag and CPO: $1.0$
    \end{itemize}
    \item \textbf{Cost coefficient:} $c = 10.0$.
    \item \textbf{Model checking horizon}: $H=30$ for Property 1 ($\varphi_1$) and $H=50$ for Property 2 ($\varphi_2$).
    \item All other hyperparameters remained untouched and are detailed in Table~\ref{tab:hyperparams1}
\end{itemize}

\paragraph{SafetyGym.} SafetyGym \cite{ray2019benchmarking} is built on the advanced physics simulator, MuJoCo \cite{todorov2012mujoco}. In our experiments we use three separate environments: {\em PointGoal1}, {\em PointGoal2} and {\em CarGoal1}. The first two environment use the {\em Point} vehicle for navigation in the environment and the third environment uses the {\em Car} vehicle. 

\begin{figure}[H]
    \centering
    \begin{minipage}{0.5\textwidth}
    \begin{subfigure}{0.4\textwidth}
        \centering
        \includegraphics[width=\textwidth]{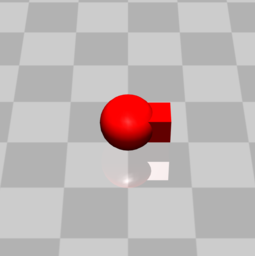}
        \caption{Point}
    \end{subfigure}
    \hfill
    \begin{subfigure}{0.4\textwidth}
        \centering
        \includegraphics[width=\textwidth]{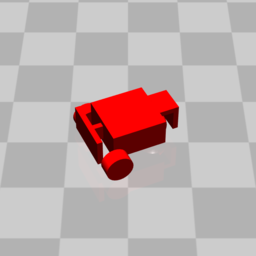}
        \caption{Car}
    \end{subfigure}
    \caption{Safety Gym vehicles.}
    \label{fig:vehicles}
    \end{minipage}
\end{figure}

\begin{itemize}
    \item \textbf{Dynamics}: the agent navigates a static environment with one of two vehicles (c.f., Fig~\ref{fig:vehicles}):
    \begin{itemize}
        \item \emph{Point}: A simple vehicle that is constrained to the 2D plane. One actuator controls the rotation of the vehicle the other controls its linear movement. The simple control scheme makes the robot easy to control. The corresponding action space is $[-1, +1]^2$.
        \item \emph{Car}: This vehicle exhibits more realistic dynamics, with two independently driven wheels. The mechanics are 3D, although the vehicle mostly resides in the 2D plane. The vehicle has a free moving wheel at the front that is not connected to any actuators. The control scheme is a little more advanced since, rotating, and moving forward and backwards require coordination between the two driven wheels. However, the action space is identical, i.e.~$[-1, +1]^2$.
    \end{itemize}
    \item \textbf{Action space:} $[-1, +1]^2$.
    \item \textbf{State space:} the agent observes $64 \times 64 \times 3$ pixel images rendered from a front facing camera in MuJoCo. As a result the environment is partially observable. 
    \item \textbf{Reward function:} the agent receives a reward of $+1$ for entering a goal state (c.f., Fig \ref{fig:goal}). The goal state and obstacles are then re-initialized to randomly generated positions. 
    \item \textbf{Episode length:} $T=1000$.
    \item \text{Safety properties:}
    \begin{itemize}
        \item For {\em PointGoal1} and {\em CarGoal1}: $\varphi = \square \neg\,\text{hazard}$
        \item For {\em PointGoal2}: $\varphi = \square \neg ( \text{hazard} \land \text{vase})$
    \end{itemize}
    \item \textbf{Automaton size:} $\vert \Q \vert = 2$.
    \item \textbf{Cost thresholds:}
    \begin{itemize}
        \item \textsc{RecReg} and AMBS: $\epsilon_t = 0.01$.
        \item DreamerV3 (Aug. Lag.), PPO-Lag and CPO: $1.0$
    \end{itemize}
    \item \textbf{Cost coefficient:} $c = 10.0$.
    \item \textbf{Model checking horizon}: $H=30$
    \item All other hyperparameters remained untouched and are detailed in Table \ref{tab:hyperparams1}.
    \end{itemize}

\begin{figure}[H]
    \centering
    \begin{minipage}{0.55\textwidth}
    \begin{subfigure}{0.3\textwidth}
        \centering
        \includegraphics[width=\textwidth]{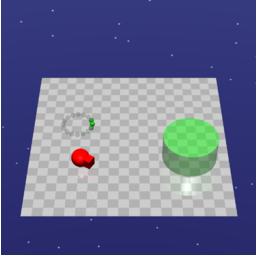}
        \caption{Goal positions}
        \label{fig:goal}
    \end{subfigure}
    \hfill
    \begin{subfigure}{0.3\textwidth}
        \centering
        \includegraphics[width=\textwidth]{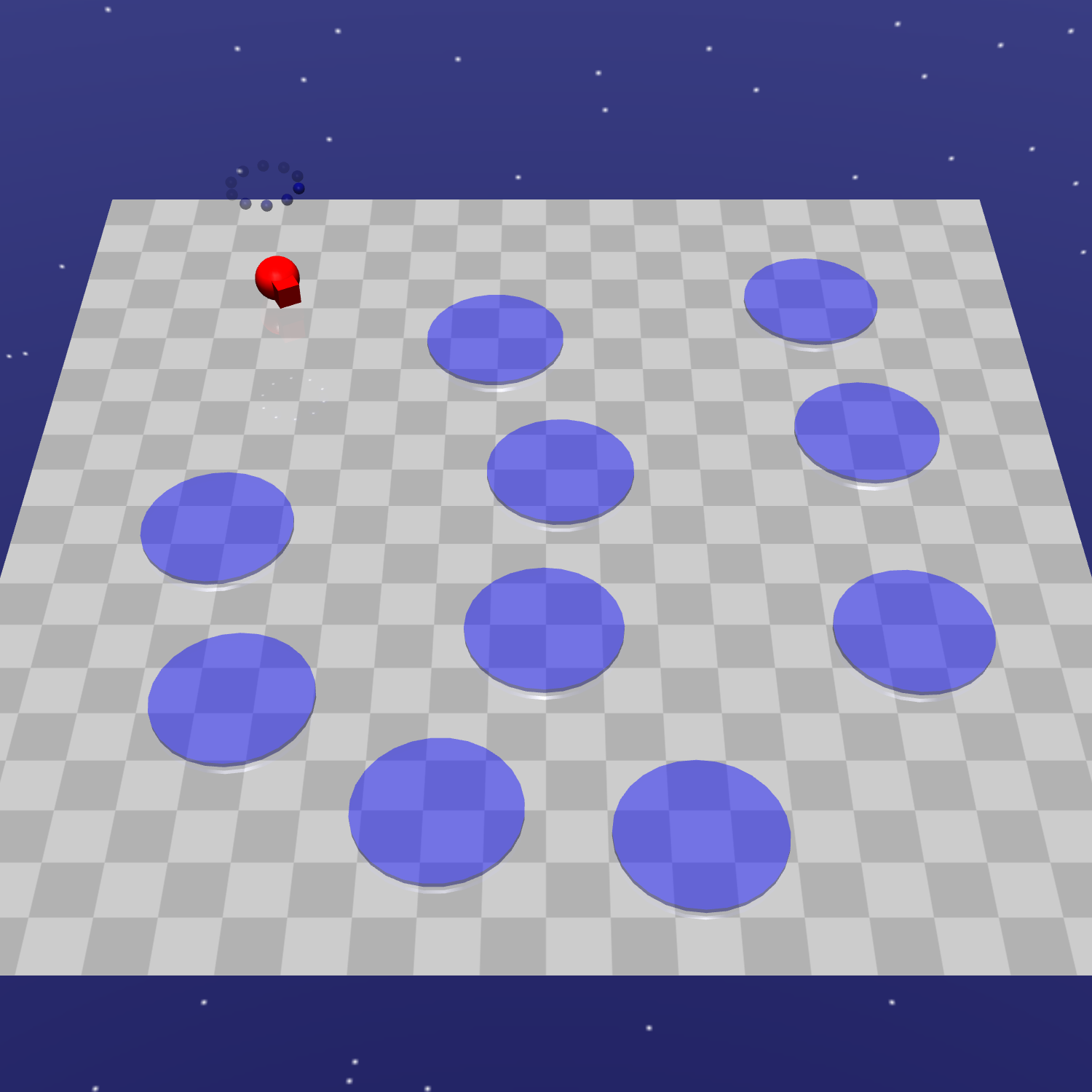}
        \caption{Hazards}
        \label{fig:hazards}
    \end{subfigure}
    \hfill
    \begin{subfigure}{0.3\textwidth}
        \centering
        \includegraphics[width=\textwidth]{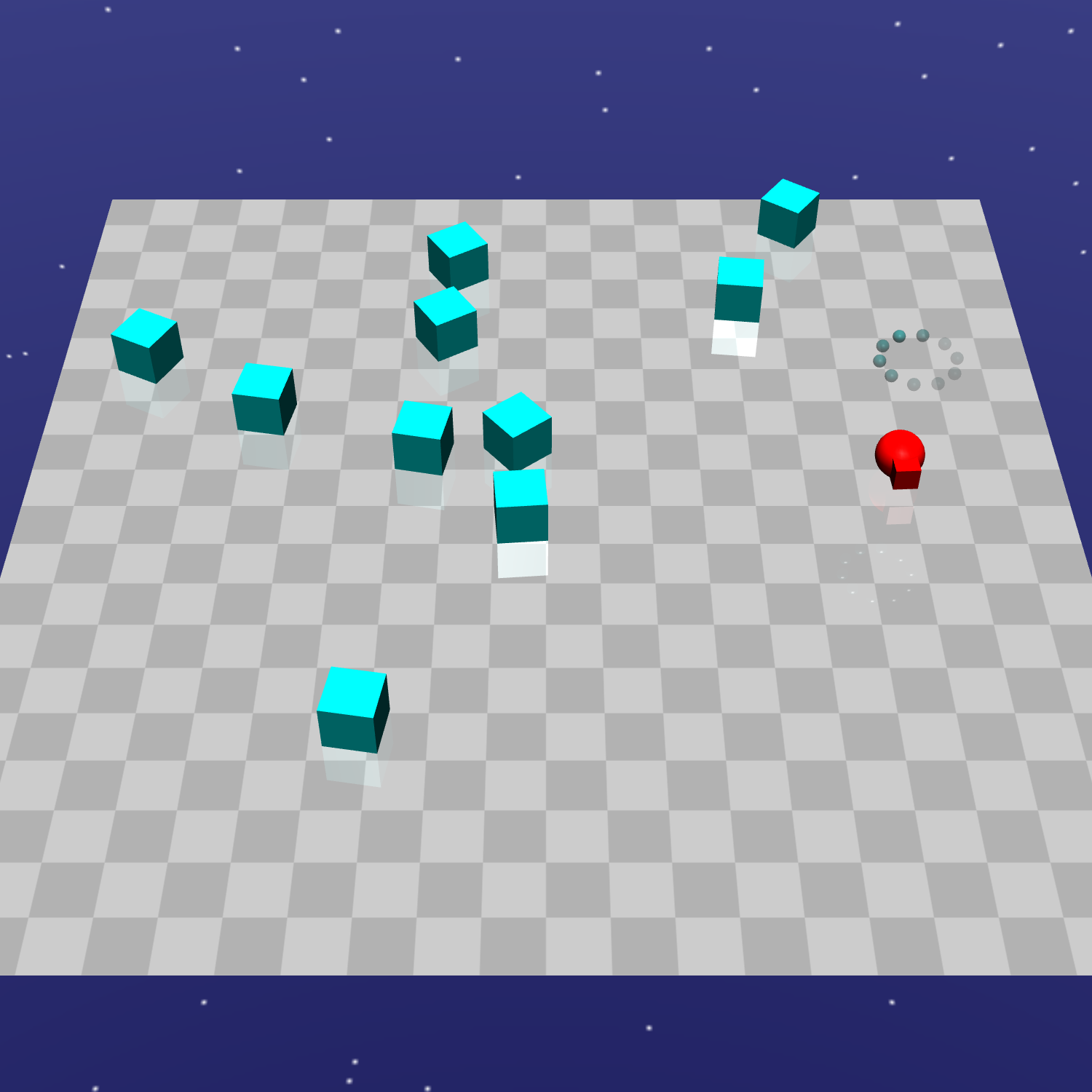}
        \caption{Vases}
        \label{fig:vases}
    \end{subfigure}
    \caption{Safety Gym tasks and constraints.}
    \label{fig:tasksandconstraints}
    \end{minipage}
\end{figure}

\paragraph{Access to code.} The code for the additional experiments on Atari Seaquest is our own private branch of the AMBS code base \cite{goodall2023approximate}, please refer to \url{https://github.com/sacktock/AMBS} (MIT License) for setup instructions. Our private code can be made available upon reasonable request. Furthermore, the implementations for PPO-Lag \cite{ray2019benchmarking} and CPO \cite{achiam2017constrained} were provided by \emph{Omnisafe} \cite{omnisafe}, for setup instructions please refer to the \emph{Omnisafe} code base \url{https://github.com/PKU-Alignment/omnisafe} (Apache-2.0 license).

\paragraph{Training details.}  We has access to 2 NVIDIA Tesla A40 (48GB RAM) GPU and a 24-core/48 thread Intel Xeon CPU each with 32GB-64GB of additional RAM. Each run for Atari Seaquest can take 8 hours to 1 day (per seed) depending on the exact configuration of DreamerV3, in general we see a slow down of factor of roughly 2 when using AMBS and \textsc{RecReg} compared to the unmodified DreamerV3 baseline.  Memory requirements may differ depending on the DreamerV3 configuration used, for the \textit{xlarge} configuration 32GB of GPU memory will suffice. 

\paragraph{Statistical significance.} Standard error (SE) intervals are provided for each of our experiments in Tab.~\ref{tab:seaquestresults} and Tab.~\ref{tab:safetygymresults}. In particular, we report 5 random initializations (seeds) for each experiment. In the learning curves, we plot the non-parametric (bootstrap) 95\% confidence intervals, provided by \texttt{seaborn.lineplot} with default parameters: \texttt{errorbar=(`ci', 95), n\_boot=1000}. The error bars capture the randomness in the initialization of the policy parameters (and DreamerV3 world model), the randomness of the environment and any randomness in the batch sampling.

\subsection{Atari Seaquest results}

We provide tabular results for the Atari Seaquest experiments (c.f., Table \ref{tab:seaquestresults}) and a larger version of Figure \ref{fig:seaquest-results} from the main text (c.f., Figure \ref{fig:seaquest-results-2}).

\begin{table}[H]
  \caption{Results for the deep RL experiments on Atari Seaquest. The violation rate is scaled by $10^{-4}$.}
  \label{tab:seaquestresults}
  \centering
  \tiny
  \begin{tabular}{llcccccc}
    \toprule
       &  & DreamerV3 & Aug. Lag. & AMBS & \textsc{Deep-RecReg} & PPO-Lag & CPO \\
    \midrule
    \texttt{seaquest} ($\varphi_1$) & Return & $ 2440 \pm 400 $& $ 706 \pm 162 $& $ 2350 \pm 340 $& $\bm{3140 \pm 326}$& $ 506 \pm 7.57 $& $ 543 \pm 47.4 $\\

    & Viol. rate & $ 5.30 \pm 0.657 $& $ 9.83 \pm 1.39 $& $\bm{4.20 \pm 0.215}$& $ 4.43 \pm 0.187 $& $ 25.6 \pm 0.468 $& $ 23.9 \pm 1.97 $\\
    \midrule
    \texttt{seaquest} ($\varphi_2$) & Return & $ 2440 \pm 400 $& $ 580 \pm 54.0 $& $ 2480 \pm 483 $& $\bm{2700 \pm 364}$& $ 79.6 \pm 10.5 $& $ 342 \pm 17.1 $\\
    & Viol. rate & $ 17.9 \pm 5.77 $& $\bm{1.86 \pm 0.891}$& $ 2.00 \pm 0.767 $& $ 2.52 \pm 0.673 $& $ 16.8 \pm 2.17 $& $ 22.0 \pm 4.45 $\\
    \bottomrule
  \end{tabular}
\end{table}

\begin{figure}[H]
    \centering
    \includegraphics[width=0.95\linewidth]{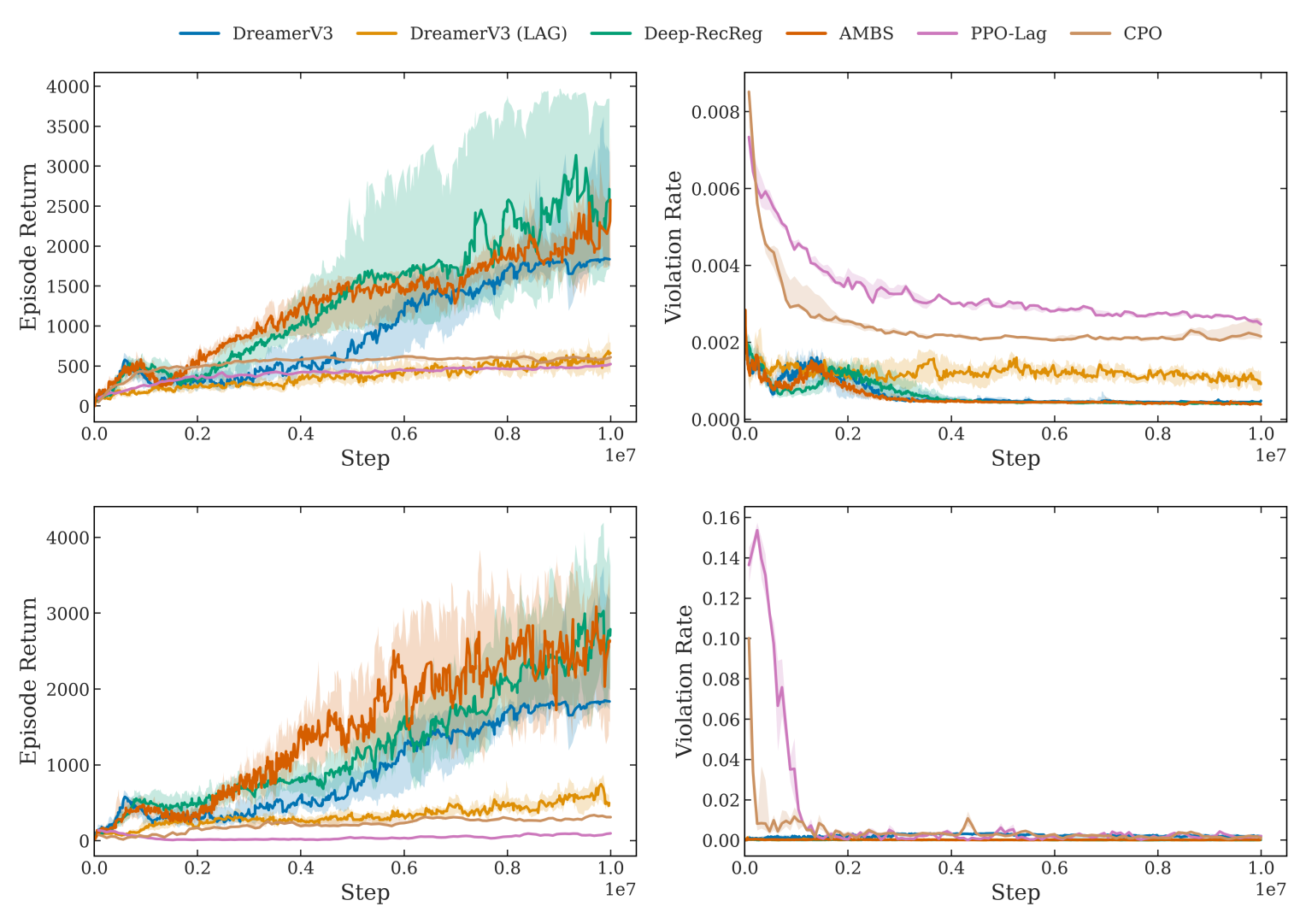}
    \caption{Learning curves for Atari Seaquest. \emph{Episodic reward} and \emph{violation rate} are averaged over 5 random seeds.}
    \label{fig:seaquest-results-2}
\end{figure}

\subsection{SafetyGym results}

We provide additional results for \textit{SafetyGym} \citep{ray2019benchmarking}. Table \ref{tab:safetygymresults} provides tabular scores for each of the experiments and Figure \ref{fig:safetygym-results} provides the learning curves.

\begin{table}[H]
  \caption{Results for the deep RL experiments on \textit{SafetyGym}.}
  \label{tab:safetygymresults}
  \centering
  \tiny
  \begin{tabular}{llcccc}
    \toprule
       &  & DreamerV3 & Aug. Lag. & AMBS & \textsc{Deep-RecReg} \\
    \midrule
    \textit{PointGoal1} & Return & $\bm{22.2 \pm 0.629}$& $ 19.1 \pm 0.661 $& $ 16.7 \pm 2.58 $& $ 17.3 \pm 0.628 $\\

    & Viol. rate & $ 0.0562 \pm 0.00239 $& $ 0.0111 \pm 0.00867 $& $\bm{0.00383 \pm 0.00346}$& $ 0.00500 \pm 0.00237 $\\
    \midrule
    \textit{PointGoal2} & Return & $\bm{20.6 \pm 1.22}$& $ 12.7 \pm 1.58 $& $ 6.42 \pm 1.04 $& $ 7.88 \pm 2.08 $\\
    & Viol. rate & $ 0.188 \pm 0.0177 $& $ 0.0337 \pm 0.0112 $& $\bm{0.0126 \pm 0.00909}$ & $ 0.0213 \pm 0.0179 $ \\
    \midrule
    \textit{CarGoal1} & Return & $\bm{17.4 \pm 2.13}$& $ 11.2 \pm 2.96 $& $ 7.26 \pm 3.52 $ &  $ 7.56 \pm 3.23 $\\
    & Viol. rate & $ 0.0836 \pm 0.0101 $& $ 0.0444 \pm 0.0308 $& $ 0.00552 \pm 0.00393 $& $\bm{0.00367 \pm 0.00415}$\\
    \bottomrule
  \end{tabular}
\end{table}

\begin{figure}[H]
    \centering
    \includegraphics[width=0.85\linewidth]{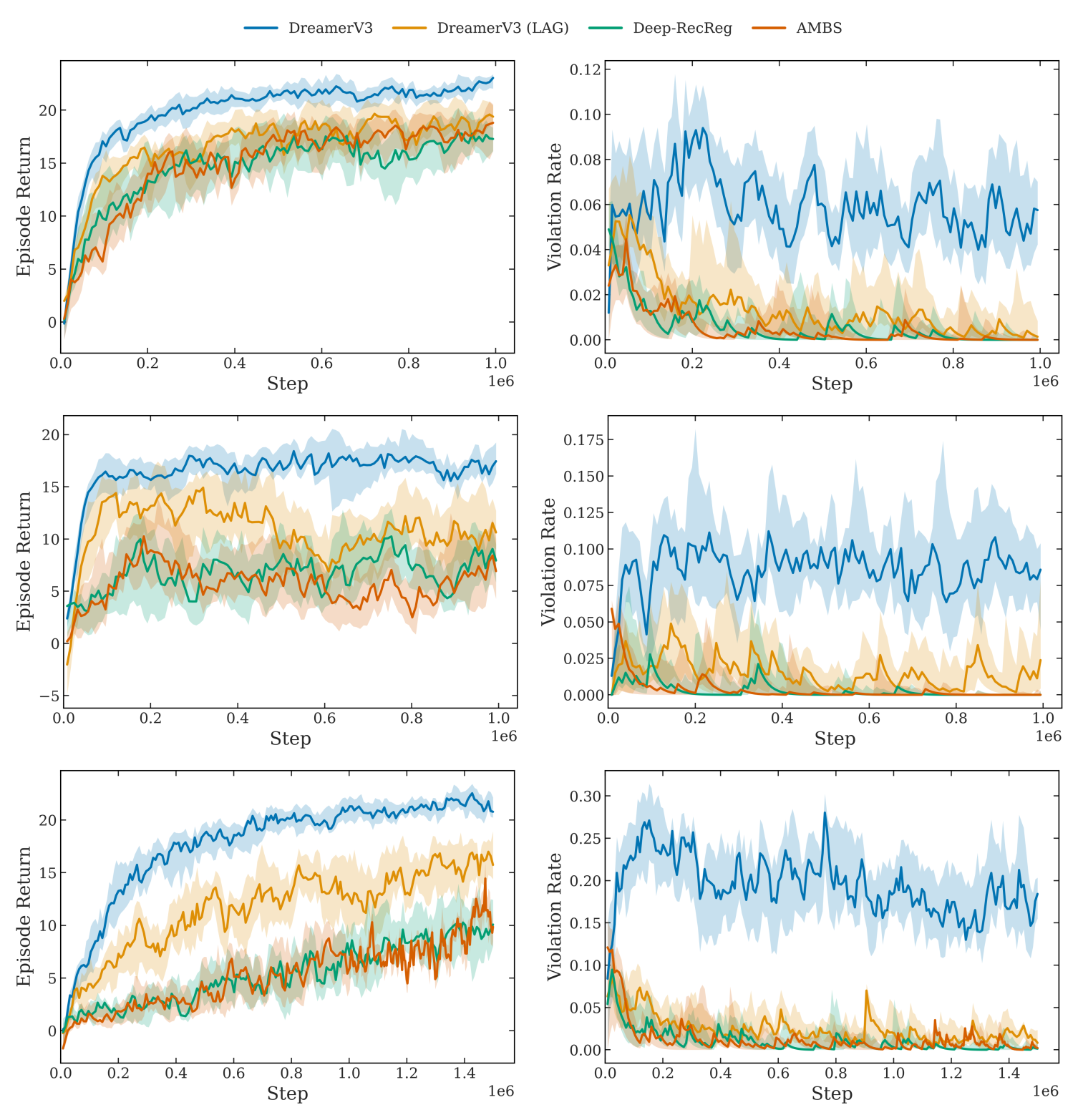}
    \caption{Learning curves for \emph{Safety Gym} environments. \emph{Episodic reward} and \emph{violation rate} are averaged over 5 random seeds.}
    \label{fig:safetygym-results}
\end{figure}

\paragraph{Discussion (\textit{Safety Gym}).} AMBS and \textsc{Deep-RecReg} are almost inseparable with markedly similar performance profiles. Both achieve relatively low violation rates but do not achieve the best returns. Broadly speaking \textsc{Deep-RecReg} achieves slightly better returns but slightly higher violation rates compared with AMBS across the board. On the other end of the spectrum, the unconstrained DreamerV3 achieves the highest returns but does not meaningfully reduce safety violations. The Augmented Lagrangian has a harder time at reducing safety violations likely due to fluctuating penalties but achieves higher returns than both AMBS and \textsc{Deep-RecReg}. 
\clearpage
\newpage

\begin{table}[t]
  \caption{Hyperparameter details for the DreamerV3-based algorithms: Aug. Lag., AMBS and \textsc{Deep-RecReg}.}
  \label{tab:hyperparams1}
  \centering
  \small
  \begin{tabular}{lcc}
    \toprule
       Name & Symbol & Value  \\
       \midrule
    Replay capacity& $D$ & $10^6$\\
    Batch size & $B$ & 16\\
    Batch length & - & 64\\
    Number of envs & - & 8 \\
    Train ratio & - & 64 \\
    Number of MLP layers & - & 5 \\
    Number of MLP units & - & 1024 \\
    Activation & - & LayerNorm + SiLU \\
    \midrule
    \multicolumn{3}{c}{World Model (shared)}\\
    \midrule
    Number of latents & - & 32\\
    Classes per latent & - & 32\\
    RSSM loss scales & $\beta_{\text{pred}}$, $\beta_{\text{dyn}}$, $\beta_{\text{rep}}$ & 1.0, 0.5, 0.1\\
    Predictor loss scales & $\beta_{o}, \beta_r, \beta_c, \beta_{\gamma}$  & 1.0, 1.0, 1.0, 1.0\\
    Learning rate & - & $1.0 \times10^{-4}$ \\
    Adam epsilon & $\epsilon_{\text{adam}}$& $1.0 \times10^{-8}$\\
    Gradient clipping & - & 1000 \\
    \midrule
    \multicolumn{3}{c}{World Model (Atari Seaquest)}\\
    \midrule
    Number of layers & - & 3 \\
    Number of hidden units & - & 640 \\
    Number of recurrent units & - & 1024 \\
    CNN depth & - & 48\\
    \midrule
    \multicolumn{3}{c}{World Model (\textit{SafetyGym})}\\
    \midrule
    Number of layers & - & 2 \\
    Number of hidden units & - & 768 \\
    Number of recurrent units & - & 2048 \\
    CNN depth & - & 64\\
    \midrule
    \multicolumn{3}{c}{Actor Critic (shared)}\\
    \midrule
    Roll-out horizon & - & 15\\
Discount factor & $\gamma$ & 0.997 \\
TD lambda & $\lambda$ & 0.95\\
Critic EMA decay & - & 0.98 \\
Critic EMA regularizer & -&1\\
Return norm. scale & $S_{\text{reward}}$ & $\text{Per}(R, 95) - \text{Per}(R, 5) $ \\
Return norm. limit & $L_{\text{reward}}$ & 1\\
Return norm. decay & - & 0.99\\
Actor entropy scale & $\eta_{\text{actor}}$ & $3.0 \times 10^{-4}$\\
Learning rate & - & $3.0 \times 10^{-5}$\\
Adam epsilon & $\epsilon_{\text{adam}}$& $1.0 \times 10^{-5}$\\
Gradient clipping & - & 100 \\
    \midrule
    \multicolumn{3}{c}{\textsc{Deep-RecReg} (and AMBS)}\\
    \midrule
    Number of samples & $m$ & 512\\
    Step-wise risk & $\epsilon_t$ & 0.01\\
    Failure probability & $\delta_t$ & 0.01 \\
    Lookahead/shielding horizon & $H$ & $\{30, 50\}$\\
    Approximation error & $\epsilon'$ & 0.01\\
    Cost coefficient (safe policy) & $c$ & $10.0$\\
    \midrule
    \multicolumn{3}{c}{Augmented Lagrangian}\\
    \midrule
    Penalty multiplier & $\mu_k$ & $5.0 \times 10^{-9}$ \\
    Initial Lagrange multiplier & $\lambda^k$ & $0.01$ \\
    Penalty power & $\sigma$ & $1.0 \times 10^{-6}$ \\
    Cost coefficient & $c$ & $1.0$ \\
    Cost threshold & - & $1.0$ \\
\bottomrule
  \end{tabular}
\end{table}
\clearpage
\newpage

\section{Additional Results}
\label{sec:additional-results}

Here we provide tabular summaries of the results presented in Figure \ref{fig:results}.

\begin{table}[H]
    \caption{Experimental results with \emph{raw returns}. Bold text denotes the best value among algorithms satisfying the safety threshold; grey text denotes algorithms that fail the safety threshold.}
    \label{tab:raw-returns1}
    \centering
    \tiny
    \begin{tabular}{llcccc}
    \toprule
\multirow{2}{*}{Algorithm} & \multirow{2}{*}{Metric} & MediaStreaming & BridgeCrossing & BridgeCrossingV2 & Colour...World \\
& & $1-\epsilon=0.96$ & $1-\epsilon=0.95$ & $1-\epsilon=0.99$ & $1-\epsilon=0.95$ \\
\midrule
\multirow{2}{*}{QL} & Ep. Return & \textcolor{gray}{$-0.017\pm 0.009$} & \textcolor{gray}{$0.303\pm 0.287$} & \textcolor{gray}{$0.986\pm 0.004$} & \textcolor{gray}{$1.000\pm 0.000$} \\
 & Safety Prob. & \textcolor{gray}{$0.000\pm 0.000$} & \textcolor{gray}{$0.368\pm 0.260$} & \textcolor{gray}{$0.986\pm 0.004$} & \textcolor{gray}{$0.000\pm 0.000$} \\
\midrule
\multirow{2}{*}{QL($\lambda$)} & Ep. Return & $-22.291\pm 0.377$ & $\mathbf{0.998\pm 0.001}$ & \textcolor{gray}{$0.414\pm 0.258$} & $\mathbf{1.000\pm 0.000}$ \\
 & Safety Prob. & $0.975\pm 0.011$ & $0.998\pm 0.001$ & \textcolor{gray}{$0.985\pm 0.010$} & $0.998\pm 0.001$ \\
\midrule
\multirow{2}{*}{\textsc{RecReg}} & Ep. Return & $\mathbf{-21.726\pm 0.234}$ & $\mathbf{0.998\pm 0.002}$ & $\mathbf{0.665\pm 0.272}$ & $0.911\pm 0.174$ \\
 & Safety Prob. & $0.961\pm 0.004$ & $0.998\pm 0.002$ & $0.995\pm 0.003$ & $0.996\pm 0.005$ \\
\midrule
\multirow{2}{*}{\textsc{RecReg-ModelBased}} & Ep. Return & $-22.877\pm 0.382$ & $0.997\pm 0.003$ & \textcolor{gray}{$0.681\pm 0.249$} & $0.706\pm 0.233$ \\
 & Safety Prob. & $0.985\pm 0.006$ & $0.998\pm 0.002$ & \textcolor{gray}{$0.985\pm 0.012$} & $0.980\pm 0.010$ \\
\midrule
\multirow{2}{*}{SEM} & Ep. Return & $-40.000\pm 0.000$ & \textcolor{gray}{$0.924\pm 0.085$} & \textcolor{gray}{$0.973\pm 0.010$} & \textcolor{gray}{$0.894\pm 0.158$} \\
 & Safety Prob. & $1.000\pm 0.000$ & \textcolor{gray}{$0.927\pm 0.078$} & \textcolor{gray}{$0.973\pm 0.010$} & \textcolor{gray}{$0.090\pm 0.157$} \\
\midrule
\multirow{2}{*}{LCRL} & Ep. Return & \textcolor{gray}{$-18.995\pm 0.301$} & \textcolor{gray}{$0.303\pm 0.287$} & \textcolor{gray}{$0.986\pm 0.004$} & $\mathbf{1.000\pm 0.000}$ \\
 & Safety Prob. & \textcolor{gray}{$0.438\pm 0.129$} & \textcolor{gray}{$0.368\pm 0.260$} & \textcolor{gray}{$0.986\pm 0.004$} & $0.968\pm 0.007$ \\
\midrule
\multirow{2}{*}{CPO} & Ep. Return & \textcolor{gray}{$-23.194\pm 0.310$} & $0.000\pm 0.000$ & $0.591\pm 0.315$ & $\mathbf{1.000\pm 0.000}$ \\
 & Safety Prob. & \textcolor{gray}{$0.957\pm 0.007$} & $0.982\pm 0.026$ & $0.993\pm 0.006$ & $0.969\pm 0.006$ \\
\midrule
\multirow{2}{*}{PPO-Lag} & Ep. Return & $-24.960\pm 0.355$ & $0.000\pm 0.000$ & \textcolor{gray}{$0.689\pm 0.295$} & $\mathbf{1.000\pm 0.000}$ \\
 & Safety Prob. & $0.990\pm 0.004$ & $0.959\pm 0.043$ & \textcolor{gray}{$0.979\pm 0.015$} & $0.974\pm 0.004$ \\
\midrule
\multirow{2}{*}{TRPO-Lag} & Ep. Return & $-24.147\pm 0.378$ & \textcolor{gray}{$0.000\pm 0.000$} & $0.495\pm 0.324$ & $\mathbf{1.000\pm 0.000}$ \\
 & Safety Prob. & $0.986\pm 0.005$ & \textcolor{gray}{$0.939\pm 0.102$} & $0.994\pm 0.006$ & $0.974\pm 0.004$ \\
\bottomrule
    \end{tabular}
\end{table}

\begin{table}[H]
    \caption{Experimental results with \emph{normalized returns}. Bold text denotes the best value among algorithms satisfying the safety threshold; grey text denotes algorithms that fail the safety threshold.}
    \label{tab:raw-returns2}
    \centering
    \tiny
    \begin{tabular}{llccccc}
    \toprule
\multirow{2}{*}{Algorithm} & \multirow{2}{*}{Metric} & ColourBomb...(Prop.~2) & ColourBomb...V2 & ColourBomb...V2 (Prop.~2) & Colour...V2 (Prop.~3) & MiniPacman \\
& & $1-\epsilon=0.95$ & $1-\epsilon=0.99$ & $1-\epsilon=0.99$ & $1-\epsilon=0.99$ & $1-\epsilon=0.99$ \\
\midrule
\multirow{2}{*}{QL} & $\ldots$ & \textcolor{gray}{$1.000\pm 0.000$} & \textcolor{gray}{$32.701\pm 0.599$} & \textcolor{gray}{$32.701\pm 0.599$} & \textcolor{gray}{$32.701\pm 0.599$} & \textcolor{gray}{$0.989\pm 0.003$} \\
 & $\ldots$ & \textcolor{gray}{$0.076\pm 0.007$} & \textcolor{gray}{$0.000\pm 0.000$} & \textcolor{gray}{$0.000\pm 0.000$} & \textcolor{gray}{$0.000\pm 0.000$} & \textcolor{gray}{$0.396\pm 0.018$} \\
\midrule
\multirow{2}{*}{QL($\lambda$)} & $\ldots$ & $0.996\pm 0.002$ & \textcolor{gray}{$20.484\pm 2.408$} & \textcolor{gray}{$22.663\pm 1.379$} & \textcolor{gray}{$22.394\pm 0.364$} & \textcolor{gray}{$0.925\pm 0.009$} \\
 & $\ldots$ & $0.996\pm 0.002$ & \textcolor{gray}{$0.877\pm 0.048$} & \textcolor{gray}{$0.949\pm 0.039$} & \textcolor{gray}{$0.947\pm 0.013$} & \textcolor{gray}{$0.883\pm 0.007$} \\
\midrule
\multirow{2}{*}{\textsc{RecReg}} & $\ldots$ & $0.935\pm 0.126$ & \textcolor{gray}{$21.358\pm 0.687$} & \textcolor{gray}{$21.952\pm 1.277$} & \textcolor{gray}{$26.466\pm 0.095$} & \textcolor{gray}{$0.912\pm 0.012$} \\
 & $\ldots$ & $0.996\pm 0.004$ & \textcolor{gray}{$0.980\pm 0.007$} & \textcolor{gray}{$0.978\pm 0.007$} & \textcolor{gray}{$0.979\pm 0.006$} & \textcolor{gray}{$0.894\pm 0.009$} \\
\midrule
\multirow{2}{*}{\textsc{RecReg-ModelBased}} & $\ldots$ & $0.885\pm 0.152$ & \textcolor{gray}{$18.523\pm 3.237$} & \textcolor{gray}{$24.897\pm 0.386$} & $\mathbf{23.218\pm 0.800}$ & \textcolor{gray}{$0.063\pm 0.011$} \\
 & $\ldots$ & $0.993\pm 0.005$ & \textcolor{gray}{$0.986\pm 0.004$} & \textcolor{gray}{$0.967\pm 0.005$} & $0.996\pm 0.003$ & \textcolor{gray}{$0.984\pm 0.005$} \\
\midrule
\multirow{2}{*}{SEM} & $\ldots$ & \textcolor{gray}{$0.960\pm 0.058$} & \textcolor{gray}{$14.334\pm 4.888$} & \textcolor{gray}{$14.988\pm 4.513$} & \textcolor{gray}{$14.815\pm 5.172$} & \textcolor{gray}{$0.980\pm 0.011$} \\
 & $\ldots$ & \textcolor{gray}{$0.198\pm 0.167$} & \textcolor{gray}{$0.014\pm 0.022$} & \textcolor{gray}{$0.025\pm 0.030$} & \textcolor{gray}{$0.022\pm 0.030$} & \textcolor{gray}{$0.386\pm 0.011$} \\
\midrule
\multirow{2}{*}{LCRL} & $\ldots$ & \textcolor{gray}{$0.991\pm 0.004$} & \textcolor{gray}{$25.525\pm 0.311$} & \textcolor{gray}{$17.474\pm 0.483$} & \textcolor{gray}{$17.134\pm 0.377$} & \textcolor{gray}{$0.989\pm 0.004$} \\
 & $\ldots$ & \textcolor{gray}{$0.945\pm 0.008$} & \textcolor{gray}{$0.538\pm 0.035$} & \textcolor{gray}{$0.412\pm 0.021$} & \textcolor{gray}{$0.504\pm 0.016$} & \textcolor{gray}{$0.396\pm 0.018$} \\
\midrule
\multirow{2}{*}{CPO} & $\ldots$ & $\mathbf{1.000\pm 0.000}$ & \textcolor{gray}{$20.905\pm 0.567$} & $\mathbf{19.580\pm 3.937}$ & \textcolor{gray}{$11.333\pm 5.236$} & \textcolor{gray}{$0.485\pm 0.127$} \\
 & $\ldots$ & $0.963\pm 0.010$ & \textcolor{gray}{$0.978\pm 0.019$} & $0.993\pm 0.005$ & \textcolor{gray}{$0.974\pm 0.021$} & \textcolor{gray}{$0.973\pm 0.012$} \\
\midrule
\multirow{2}{*}{PPO-Lag} & $\ldots$ & \textcolor{gray}{$1.000\pm 0.000$} & \textcolor{gray}{$21.426\pm 0.283$} & \textcolor{gray}{$22.672\pm 1.007$} & \textcolor{gray}{$19.053\pm 3.429$} & \textcolor{gray}{$0.978\pm 0.005$} \\
 & $\ldots$ & \textcolor{gray}{$0.945\pm 0.005$} & \textcolor{gray}{$0.938\pm 0.038$} & \textcolor{gray}{$0.942\pm 0.062$} & \textcolor{gray}{$0.968\pm 0.021$} & \textcolor{gray}{$0.891\pm 0.008$} \\
\midrule
\multirow{2}{*}{TRPO-Lag} & $\ldots$ & $\mathbf{1.000\pm 0.000}$ & \textcolor{gray}{$21.328\pm 0.205$} & \textcolor{gray}{$23.291\pm 0.985$} & \textcolor{gray}{$21.250\pm 0.141$} & \textcolor{gray}{$0.981\pm 0.005$} \\
 & $\ldots$ & $0.961\pm 0.017$ & \textcolor{gray}{$0.975\pm 0.013$} & \textcolor{gray}{$0.853\pm 0.075$} & \textcolor{gray}{$0.979\pm 0.015$} & \textcolor{gray}{$0.894\pm 0.013$} \\
\bottomrule

    \end{tabular}
\end{table}

\hfill 
\newpage

\subsection{Learning Curves (tabular)}
\label{sec:learning-curves-tabular}

Here we provide the learning curves for all our experiments on the tabular environments provided in the MASA-Safe-RL library \citep{Goodall2025MASASafeRL}.

\begin{figure}[H]
    \centering
    \includegraphics[width=\linewidth]{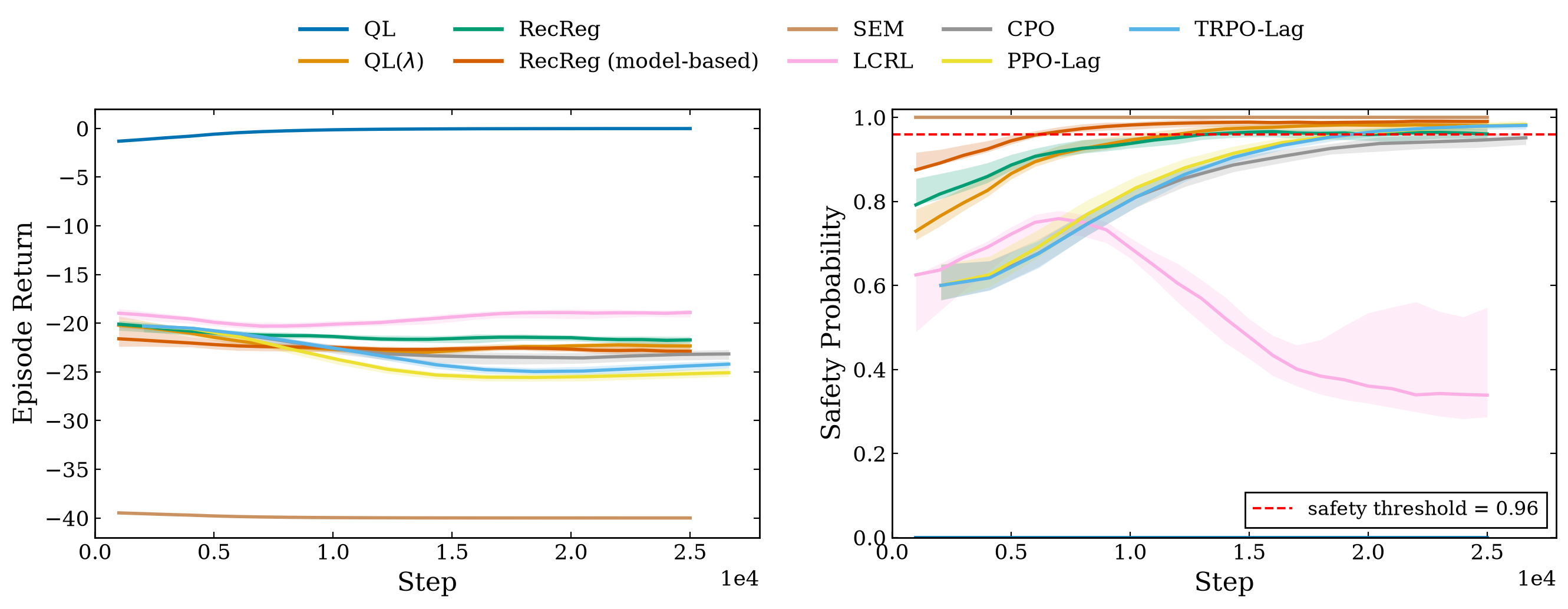}
    \caption{Learning curves for Media Streaming.}
    \label{fig:learningcurves1}
\end{figure}

\begin{figure}[H]
    \centering
    \includegraphics[width=\linewidth]{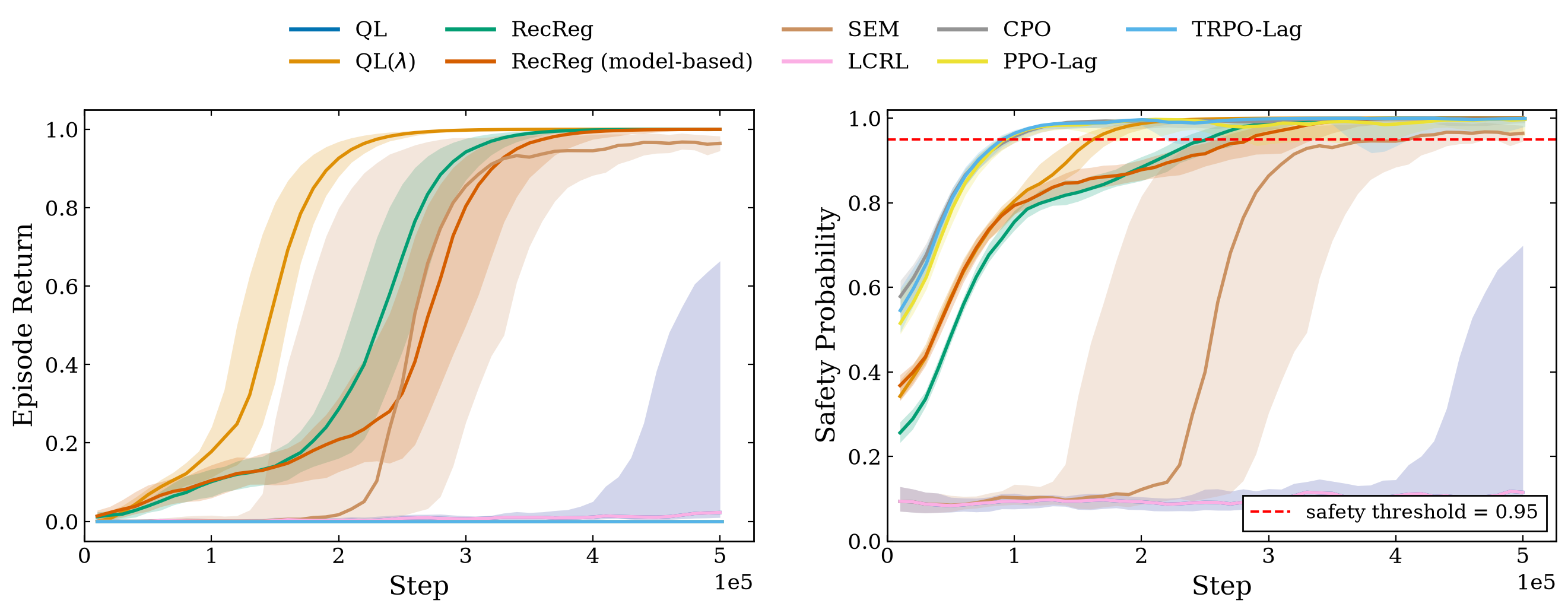}
    \caption{Learning curves for Bridge Crossing.}
    \label{fig:learningcurves2}
\end{figure}
\begin{figure}[H]
    \centering
    \includegraphics[width=\linewidth]{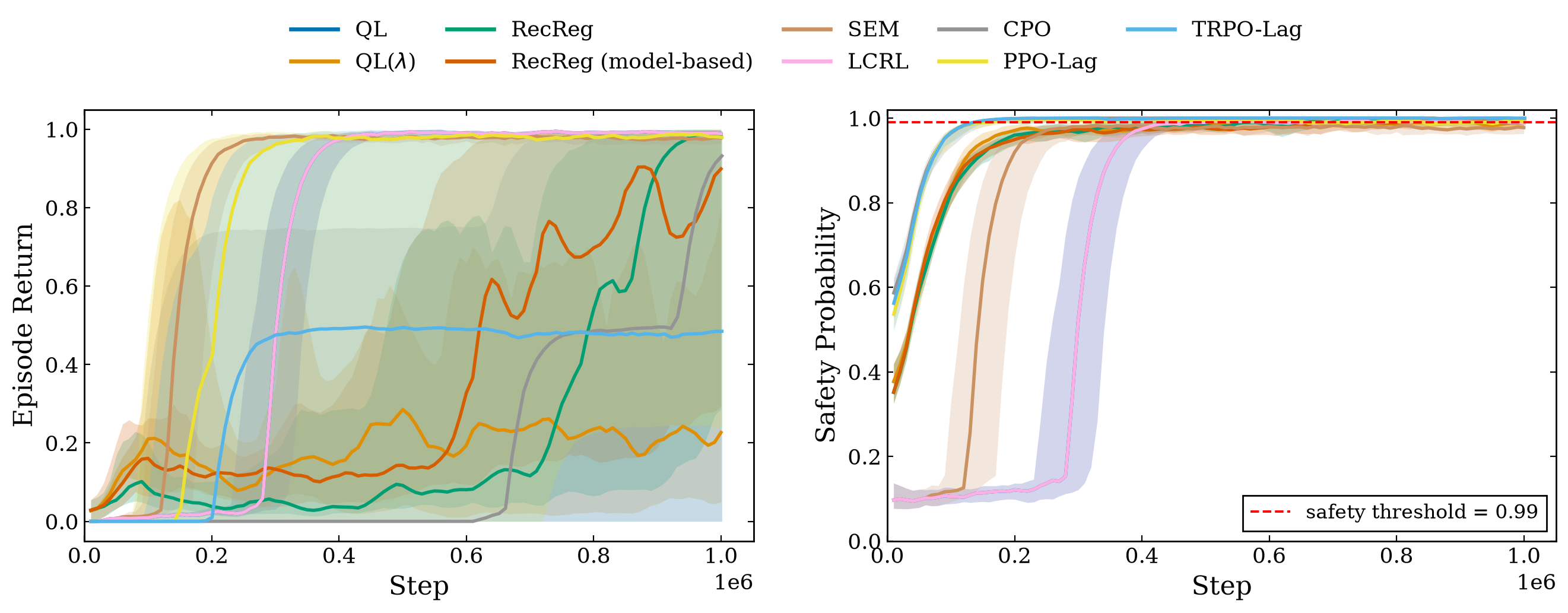}
    \caption{Learning curves for Bridge Crossing V2.}
    \label{fig:learningcurves999}
\end{figure}
\begin{figure}[H]
    \centering
    \includegraphics[width=\linewidth]{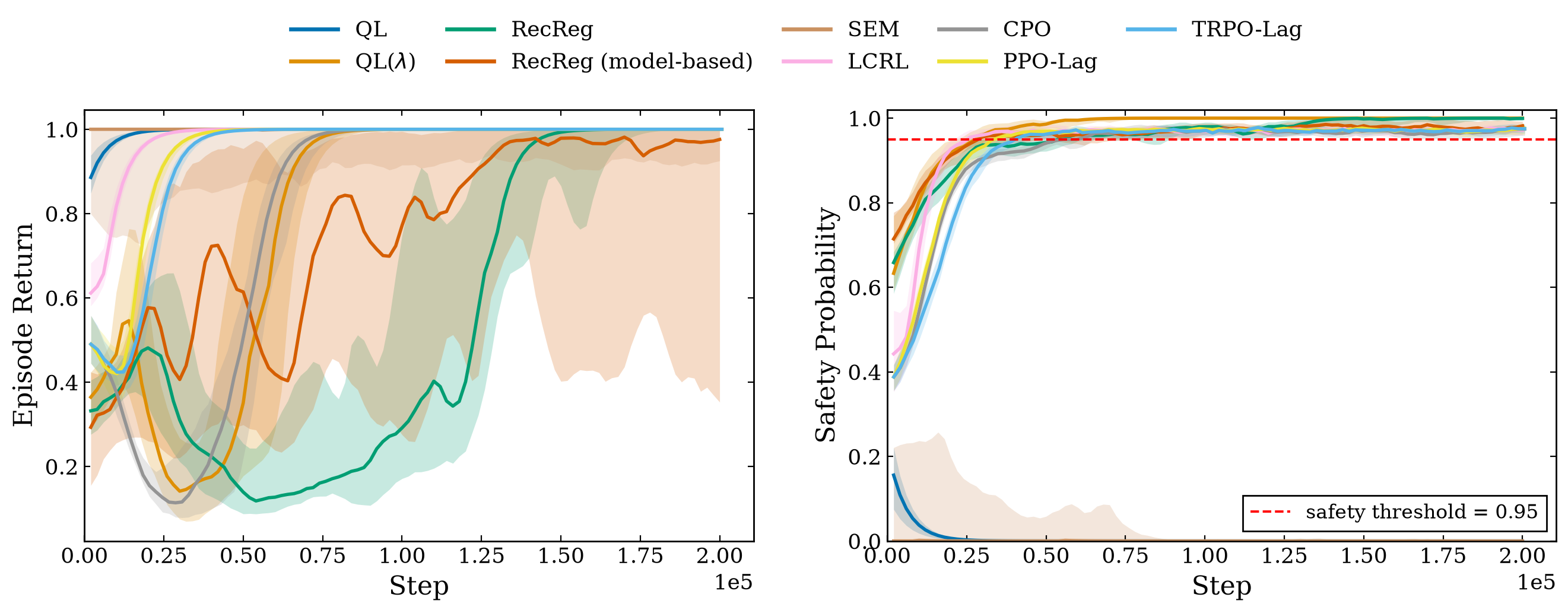}
    \caption{Learning curves for Colour Bomb Gridworld (Property 1).}
    \label{fig:learningcurves3}
\end{figure}

\begin{figure}[H]
    \centering
    \includegraphics[width=\linewidth]{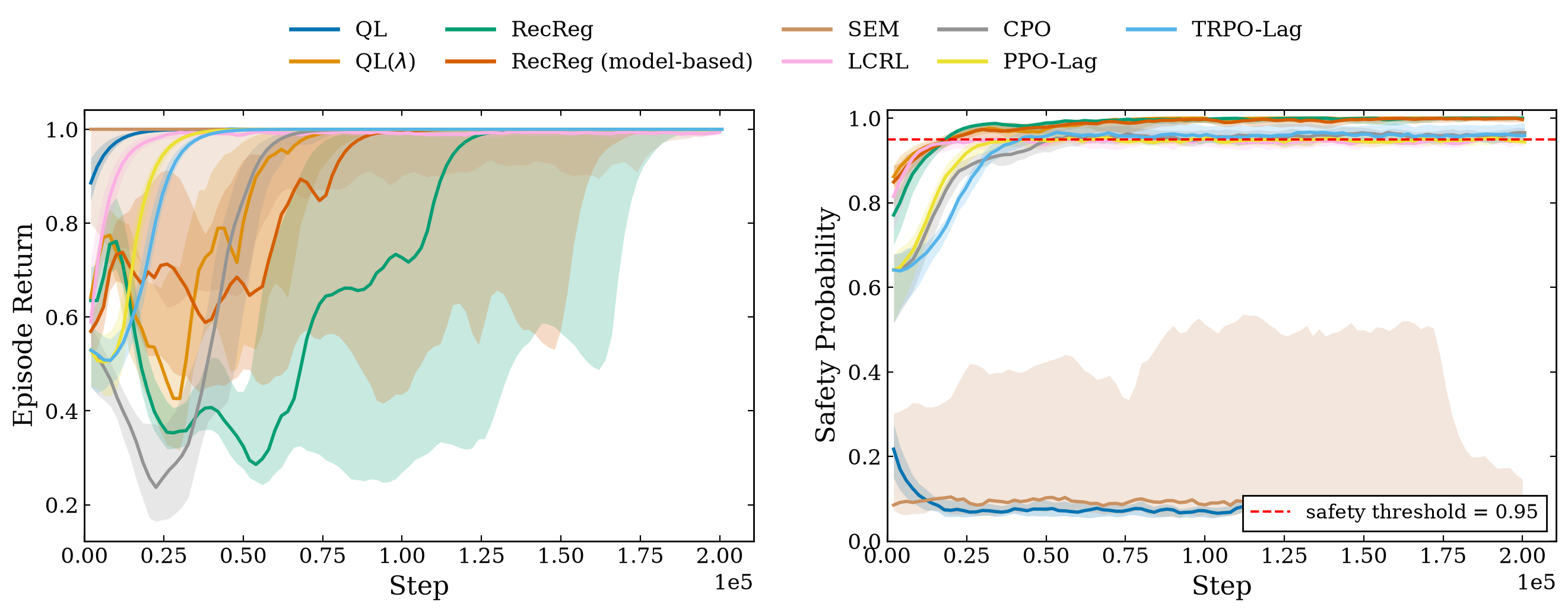}
    \caption{Learning curves for Colour Bomb Gridworld (Property 2).}
    \label{fig:learningcurves4}
\end{figure}

\begin{figure}[H]
    \centering
    \includegraphics[width=\linewidth]{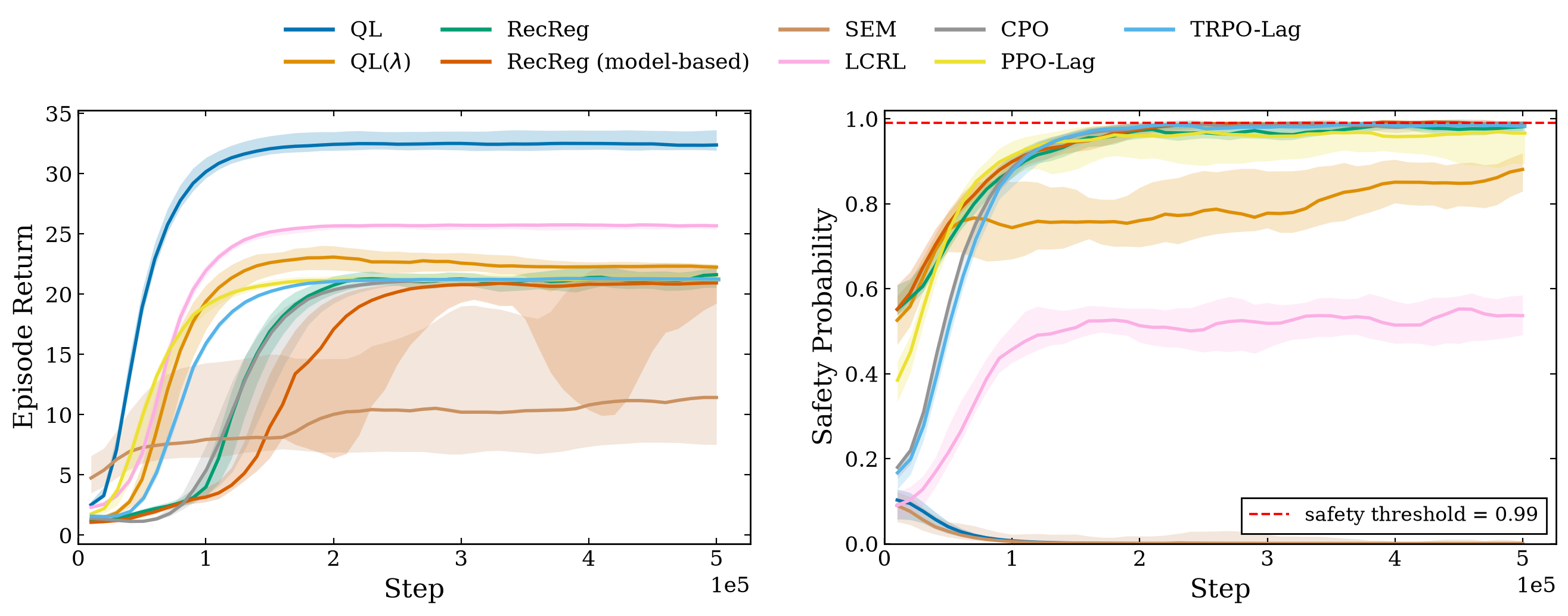}
    \caption{Learning curves for Colour Bomb Gridworld V2 (Property 1).}
    \label{fig:learningcurves5}
\end{figure}

\begin{figure}[H]
    \centering
    \includegraphics[width=\linewidth]{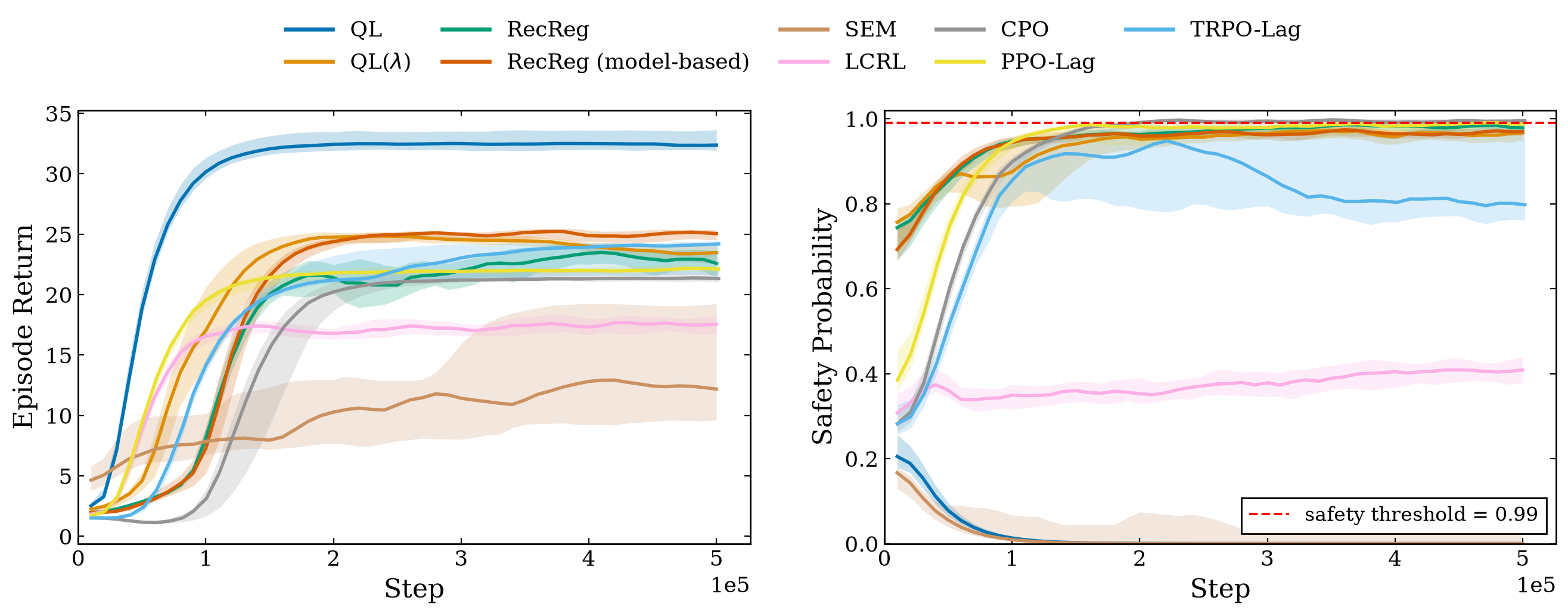}
    \caption{Learning curves for Colour Bomb Gridworld V2 (Property 2).}
    \label{fig:learningcurves6}
\end{figure}

\begin{figure}[H]
    \centering
    \includegraphics[width=\linewidth]{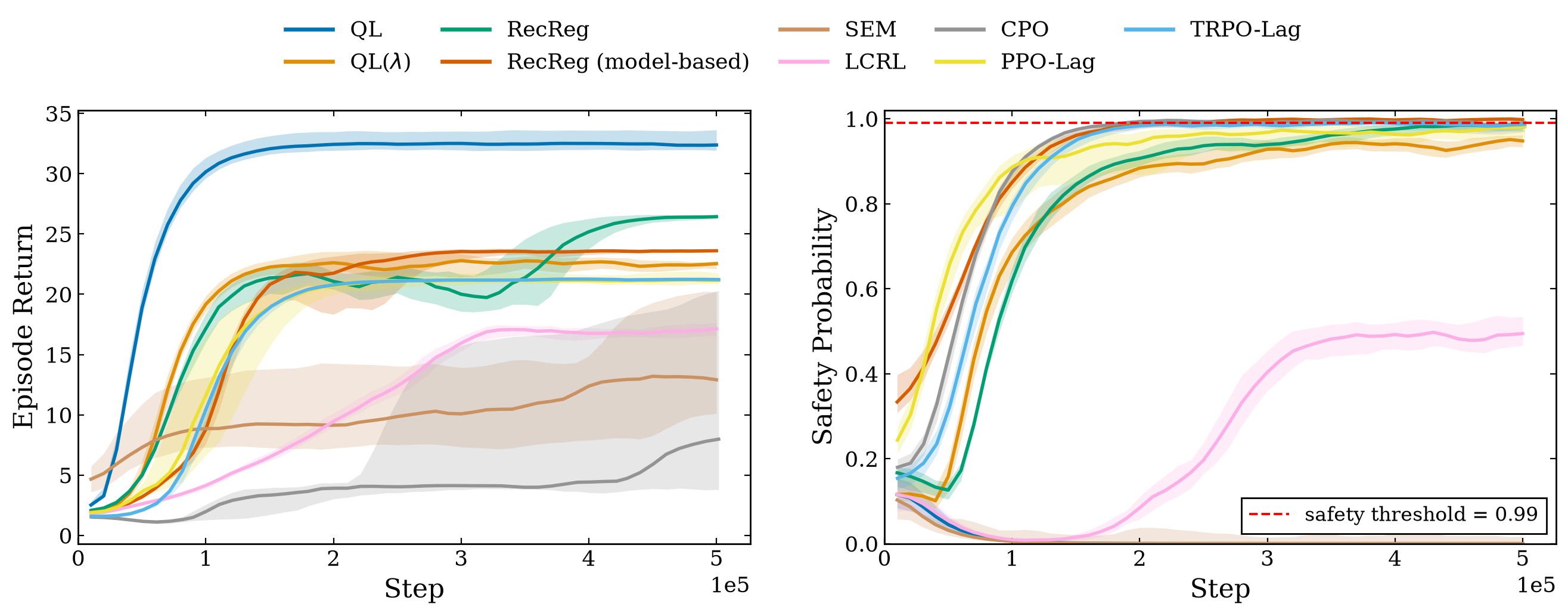}
    \caption{Learning curves for Colour Bomb Gridworld V2 (Property 3).}
    \label{fig:learningcurves7}
\end{figure}

\begin{figure}[H]
    \centering
    \includegraphics[width=0.95\linewidth]{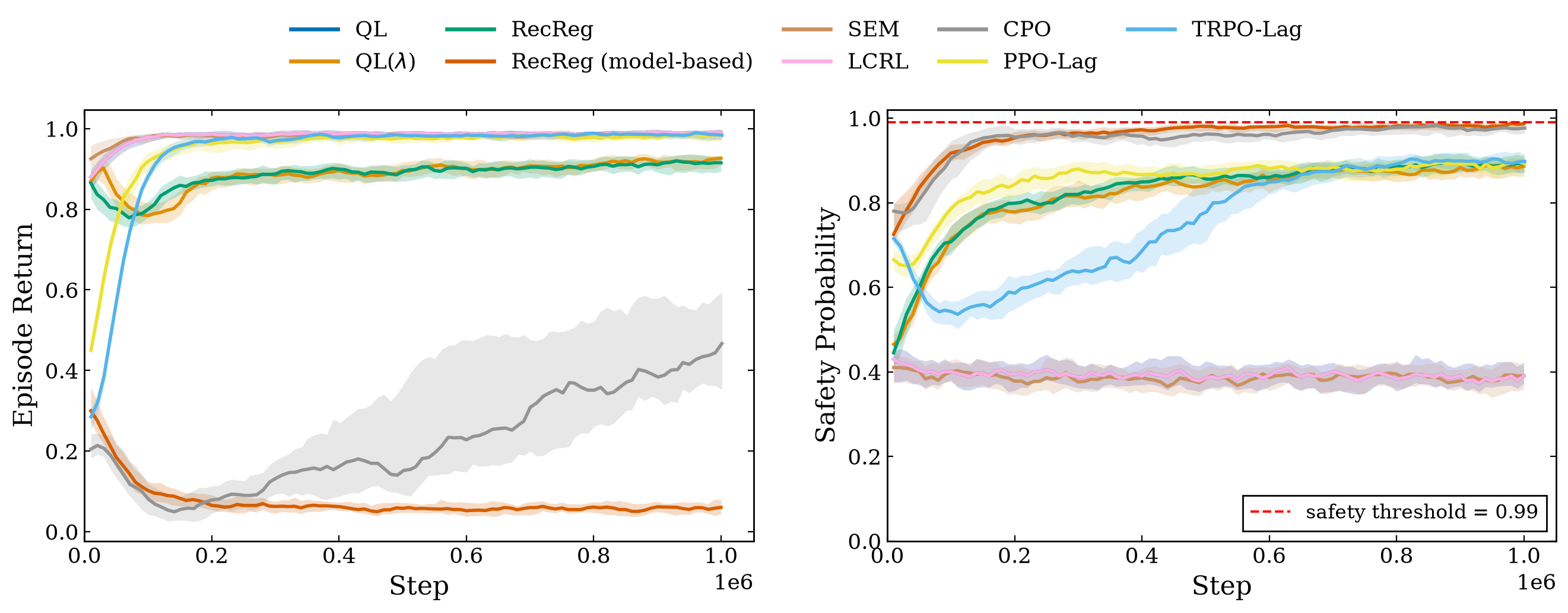}
    \caption{Learning curves for Mini Pacman.}
    \label{fig:learningcurves8}
\end{figure}

\subsection{Learning Curves (\textsc{ProbShield})}
\label{sec:learning-curves-probshield}

Here we provide the learning curves for our experiments comparing \textsc{RecReg-Exact} to \textsc{ProbShield}.

\begin{figure}[H]
    \centering
    \includegraphics[width=\linewidth]{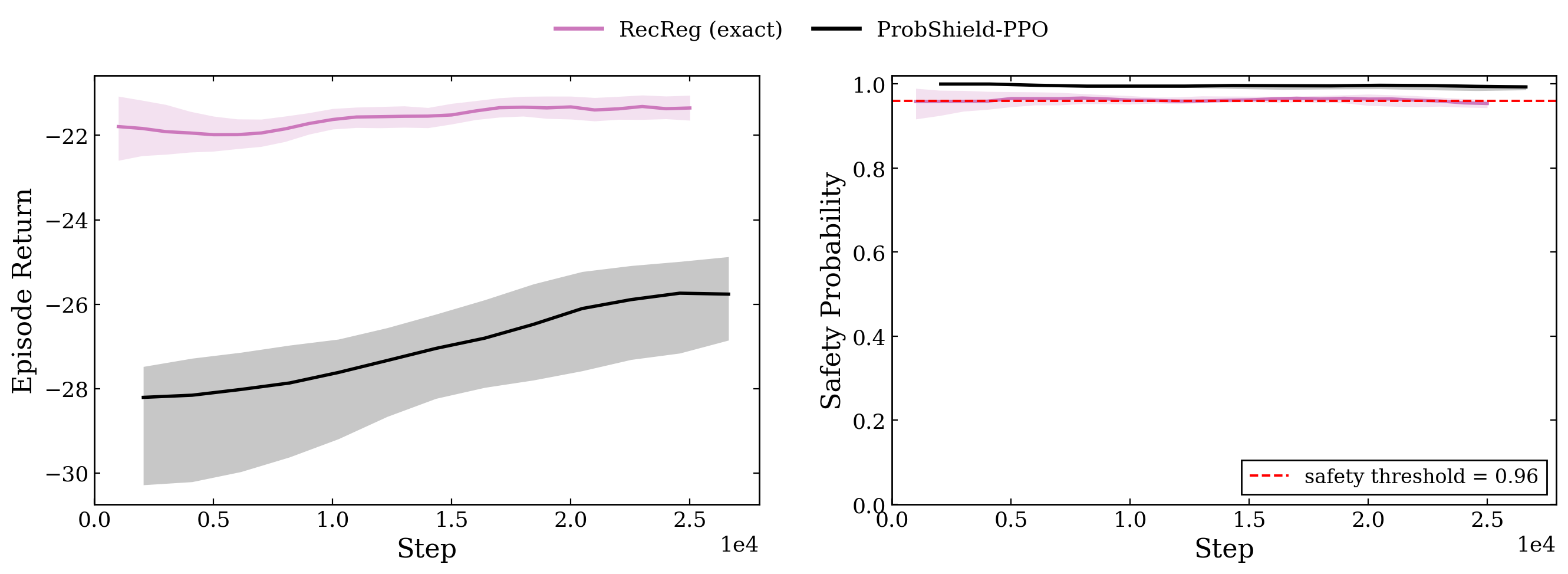}
    \caption{Learning curves for Media Streaming.}
    \label{fig:learningcurves12}
\end{figure}

\begin{figure}[H]
    \centering
    \includegraphics[width=\linewidth]{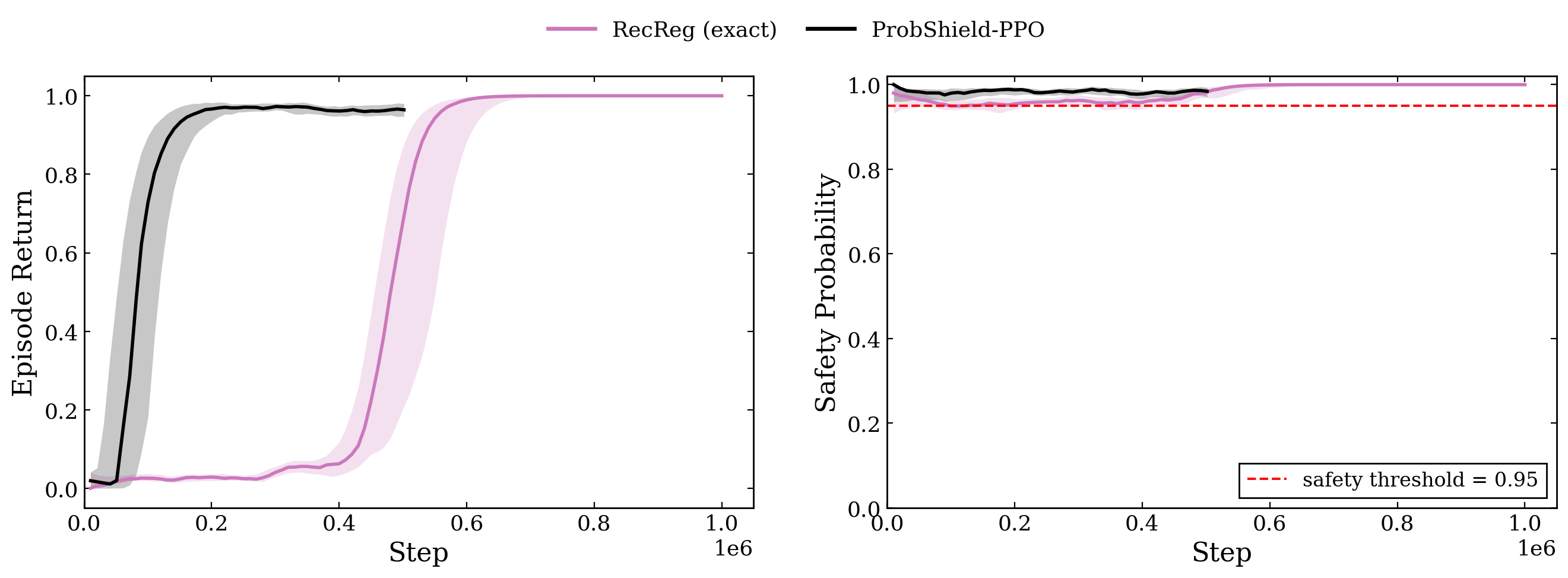}
    \caption{Learning curves for Bridge Crossing.}
    \label{fig:learningcurves11}
\end{figure}

\begin{figure}[H]
    \centering
    \includegraphics[width=\linewidth]{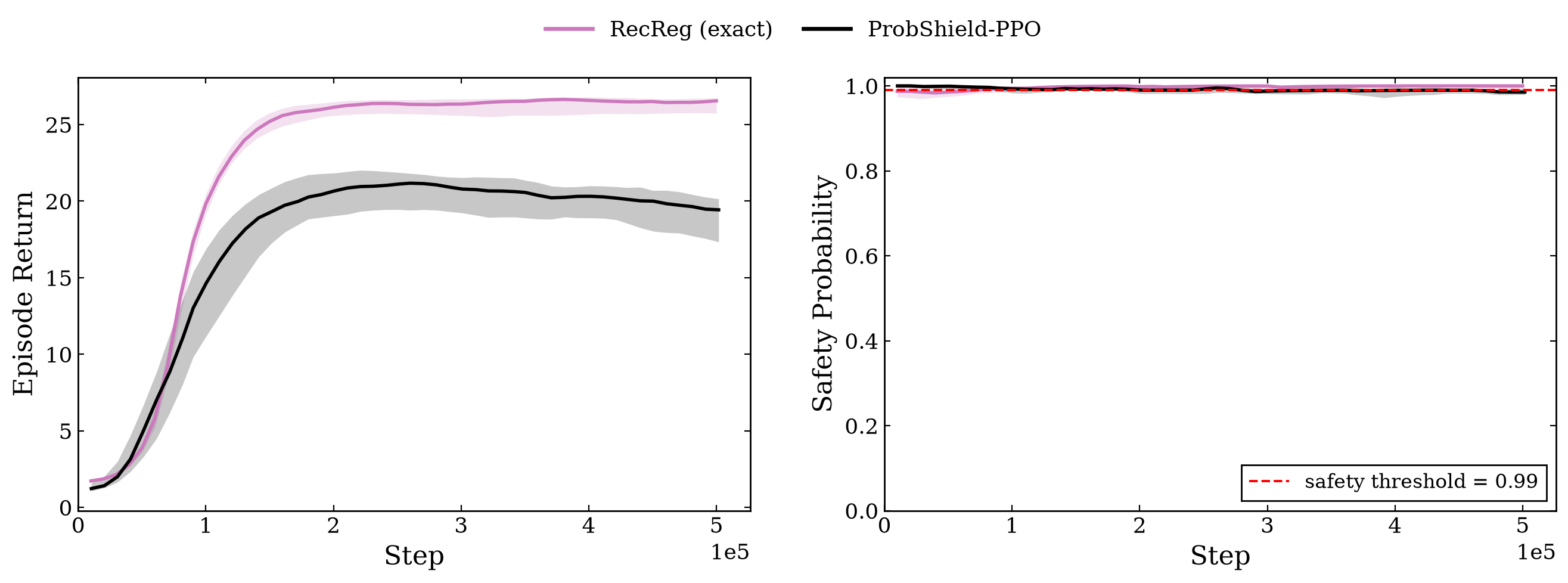}
    \caption{Learning curves for Colour Bomb Gridworld V2 (Property 3).}
    \label{fig:learningcurves9}
\end{figure}

\begin{figure}[H]
    \centering
    \includegraphics[width=\linewidth]{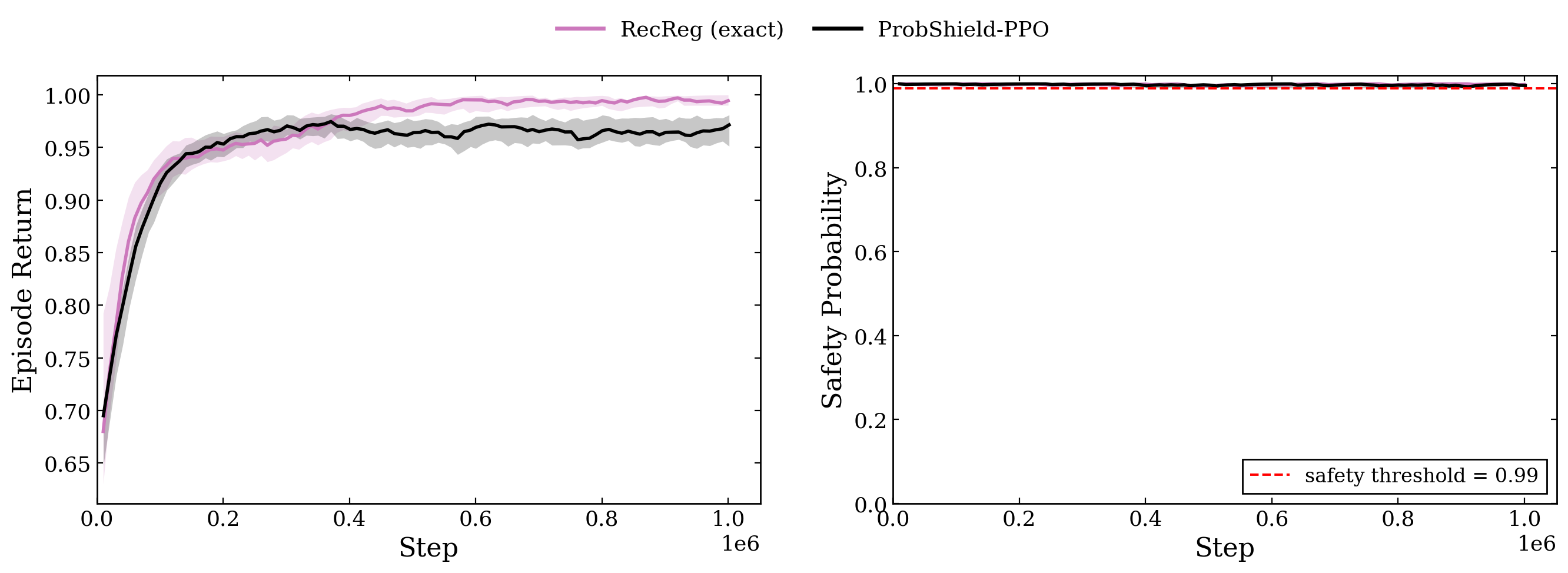}
    \caption{Learning curves for Mini Pacman.}
    \label{fig:learningcurves10}
\end{figure}

\subsection{Ablation Studies}
\label{sec:ablation-studies}

We conduct a simple ablation study for \emph{Colour Bomb Gridworld V2} with \emph{Property 2}. The DFA for Property 2 has 22 states, meaning the product MDP has 22 times more states than the original MDP. In particular, we ablate the use of \emph{counter factual experiences} \cite{icarte2022reward} for training the task and safe policy. We also investigate the effect of two reward shaping mechanisms based on the idea of \emph{potential-based reward shaping} \citep{ng1999policy}. First we provide a primer.

\paragraph{Potential-based reward shaping.}  Potential-based reward shaping is an approach used to typically deal with sparse or delayed reward environments. Potential-based reward shaping provides intermediate rewards of the following form,
\begin{equation}
    r'(s', s, a) = r(s', s, a) + \gamma \Phi(s') - \Phi(s)
\end{equation}
where $\Phi: \S \to \mathbb{R}$ is the potential function.Intuitively this provides the agent with a dense reward signal for moving to `more promising' states with higher potential values. Note that by discounting the potential values with the original discount factor $\gamma$ this keeps the set of optimal policies that maximize the original reward function unchanged \cite{ng1999policy}. Then if the potential function is useful in some sense, this allows us to learn optimal policies more quickly.

\paragraph{Automated reward shaping.}  Automated reward shaping (RS) for reward machines (RM) was proposed in \cite{icarte2022reward}. We adopt the same methodology here. The potential function $\Phi: \S^{\otimes} \to \mathbb{R}$ is defined over product state space $\S^{\otimes}$, each automaton state $q \in \Q$ is assigned a potential value $v(q)$. The potential value for each automaton state is then computed by running value iteration with a different discount factor $\gamma' < \gamma$. The cost function for the safe policy $\pi_{\text{safe}}$ is then redefined as,
\begin{equation}
    \mathbf C'(s^{\otimes}, a, {s^{\otimes}}') = \mathbf C(s^{\otimes}, a, {s^{\otimes}}') + \gamma v(q') - v(q) \quad \text{where $s^{\otimes} = (s, q) $ and $ {s^{\otimes}}'=(s', q')$} \label{eq:reward-shaped-cost}
\end{equation}
Intuitively the value function $v : \Q \to \mathbb{R}$ essentially assigns automaton states that are closer to the accepting state with higher values. This in turn penalized the agent for moving closer to the accepting state in the automaton. 

\paragraph{Cycle-based reward shaping.} This approach is inspired by \emph{cycle experience replay} \cite{shah2024deep} which was used for full LTL constraints. The approach is as follows, we first compute the length shortest path through the DFA $\mathcal{D}$, from each automaton state $q \in \Q$ to the initial state $\Q_0$, e.g., via breadth-first search. Each automaton state $q \in \Q$ is assigned a value $v(q)$ which corresponds to the length of the shortest path from $q$ to $\Q_0$, which will be used for potential-based reward shaping. This provides an incentive to the agent to return to the initial state $\Q_0$ as quickly as possible (short cycles), while avoiding the accepting states of the DFA $\mathcal{D}$, which are sink nodes, i.e. $v(q) = \infty \; \forall q \in \F$. Again, the cost function is redefined as in \eqref{eq:reward-shaped-cost}.

\paragraph{Results for \textsc{RecReg-ModelFree}.} Figure \ref{fig:ablation1} provides the learning curves for \textsc{RecReg-ModelFree} with the two different potential-based reward shaping approaches. First we see that without counter-factual experiences (CFE) the algorithm struggles to converge quickly and achieves a low safety probability due to slower convergence of the safe policy. Automated (Potential) reward shaping destabilizes learning slightly and the algorithm converges suboptimally, whereas cycle-based reward shaping improves learning stability and converges optimally. 

\begin{figure}[H]
    \centering
    \includegraphics[width=0.85\linewidth]{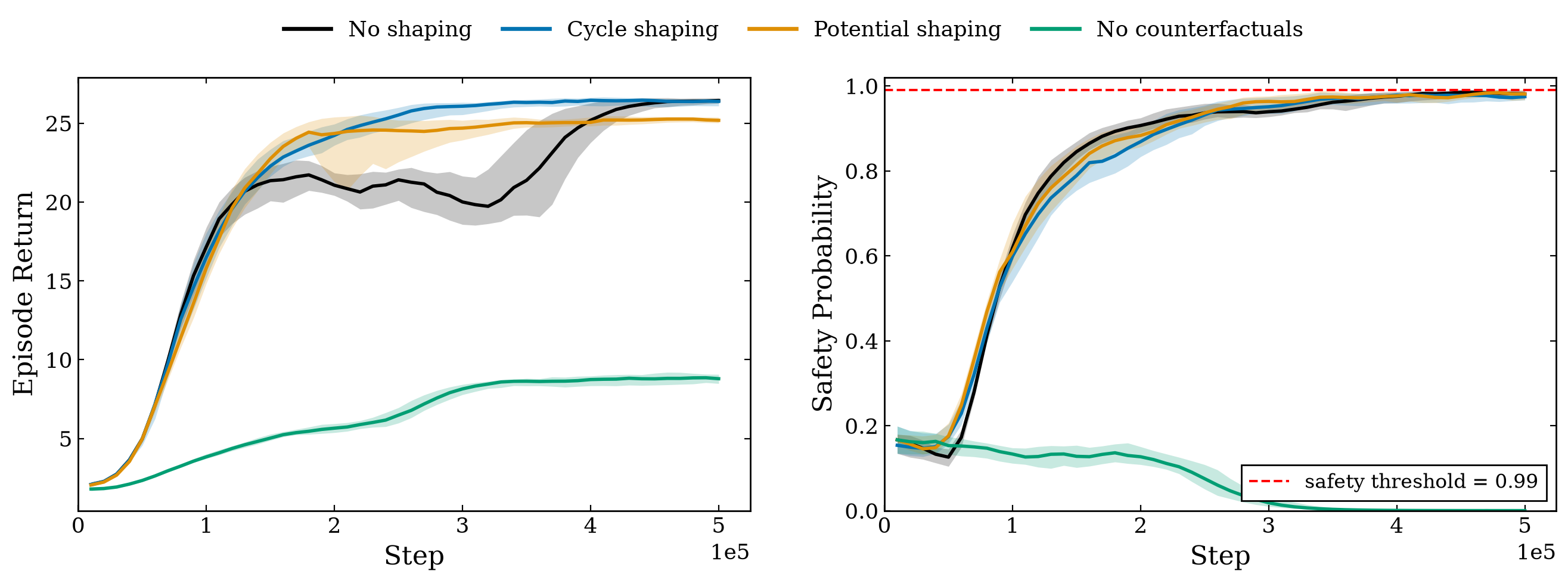}
    \caption{Reward Shaping for Colour Bomb Gridworld V2 with \textsc{RecReg-ModelFree}.}
    \label{fig:ablation1}
\end{figure}

\paragraph{Results for QL-$\lambda$.} Figure \ref{fig:ablation2} provides the learning curves for QL-$\lambda$ with the two different potential-based reward shaping approaches. Again we see that without counter-factual experiences (CFE) the algorithm struggles to converge quickly. However, in contrast to before both automated (Potential) and reward shaping fully destabilizes learning collapsing the policy to the safest possibly policy (no reward). Whereas cycle-based reward shaping improves both safety and task performance uniformly finding the right balance. This highlights the difficulty with tuning Lagrangian approaches like QL-$\lambda$, subtle shifts in the cost function and penalty magnitudes can break the balance between task performance and safety.

\begin{figure}[H]
    \centering
    \includegraphics[width=0.85\linewidth]{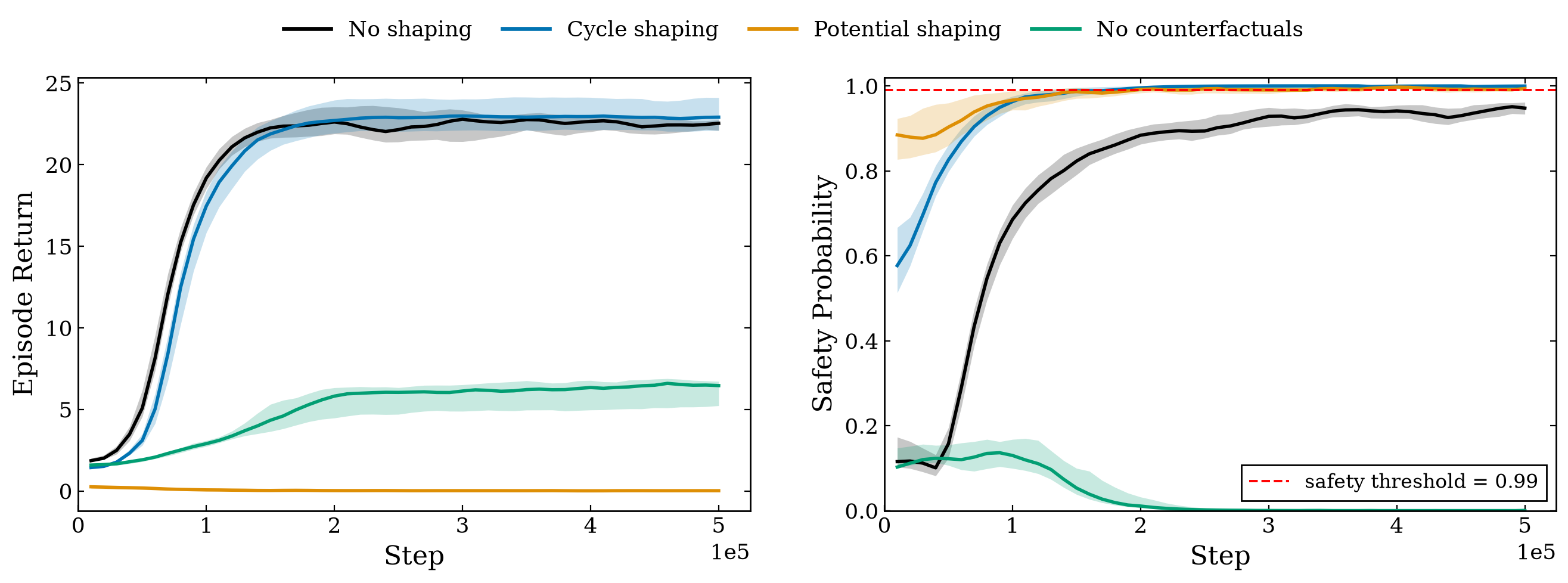}
    \caption{Reward Shaping for Colour Bomb Gridworld V2 with QL-$\lambda$.}
    \label{fig:ablation2}
\end{figure}

\subsection{Runtime Analysis}
\label{sec:rumtime-analysis}

Running a model checking routine at each decision making step inevitable incurs some non-trivial computational overhead. This overhead grows as the size of the MDP or DFA increases. In Tab.~\ref{tab:runtime} we provide a quick runtime comparison for the different operation mode of \textsc{RecReg} on environments of varying size. For reference \textsc{Exact} and \textsc{ModelFree} operate at roughly the same speed as the other mode-free tabular baselines (e.g., Q-learning). 

\begin{table}[H]
    \caption{Frames per second (fps) for different \textsc{RecReg} modes.}
    \label{tab:runtime}
    \centering
    \tiny
    \begin{tabular}{lccc}
        \toprule
           & \textsc{Exact} & \textsc{ModelBased} & \textsc{ModelBased} \\
           & \& \textsc{ModelFree} & (exact) & (stat. $m$=10,000) \\
        \midrule
        \texttt{media\_streaming} & $\sim2600$fps & $\sim1300$fps & $\sim 90$fps \\
        \texttt{bridge\_crossing} & $\sim2200$fps & $\sim1000$fps & $\sim 80$fps \\
        \texttt{$15\times15$\_gridworld} (3) & $\sim2000$fps & $\sim 525$fps & $\sim 60$fps \\
        \texttt{mini\_pacman} & $\sim2600$fps & $\sim 250$fps & $\sim27$fps \\
        \bottomrule
    \end{tabular}
\end{table}

\newpage

\section{Environment Details}
\label{sec:environment-details}

Here we provide a more complete description of each of the tabular environments considered in the main paper. Please note that all environments are provided as part of MASA-Safe-RL \url{https://github.com/sacktock/MASA-Safe-RL} \cite{Goodall2025MASASafeRL}. The exact same implementations are provided in our code submitted in the supplementary material.

\paragraph{Media Streaming.} The environment models a simple data-buffer management task, where the agent must regulate packet inflow to prevent buffer underflow.

\begin{itemize}
    \item \textbf{Dynamics:} packets leave in the data-buffer according to a Bernoulli process with rate $\mu_{\text{out}}=0.7$.
    \item \textbf{Action space:} $\A = \{\textit{fast}, \textit{slow}\}$
    \begin{itemize}
        \item $\textit{fast}$ adds new packets with rate $\mu_{\textit{fast}} = 0.9$.
        \item $\textit{slow}$ adds new packets with rate $\mu_{\textit{slow}} = 0.1$.
    \end{itemize}
    \item \textbf{State space:} 20 different buffer states $\Rightarrow |\S| = 20$.
    \item \textbf{Reward function:} The agent receives a penalty of $-1.0$ when choosing the \textit{fast} action.
    \item \textbf{Episode length:} $T=40$.
    \item \textbf{Safety property:} The buffer must never become empty ($\varphi=\square \neg\, \text{empty}$).
    \item \textbf{Automaton size:} $|\Q|=2$.
    \item \textbf{Cost thresholds}: 
    \begin{itemize}
        \item \textsc{RecReg} (lower-bound): $1 - \epsilon = 0.96$.
        \item PPO-Lag / CPO (cost upper-bound): $0.04$.
        \item QL-$\lambda$ (coefficient): $100.0$.
        \item Rec. RL (step-wise upper-bound): $\epsilon_t = 0.001$.
    \end{itemize}
    \item \textbf{Model checking horizon:} $H=5$.
\end{itemize}

\begin{figure}[H]
  \centering
  \includegraphics[width=0.25\linewidth]{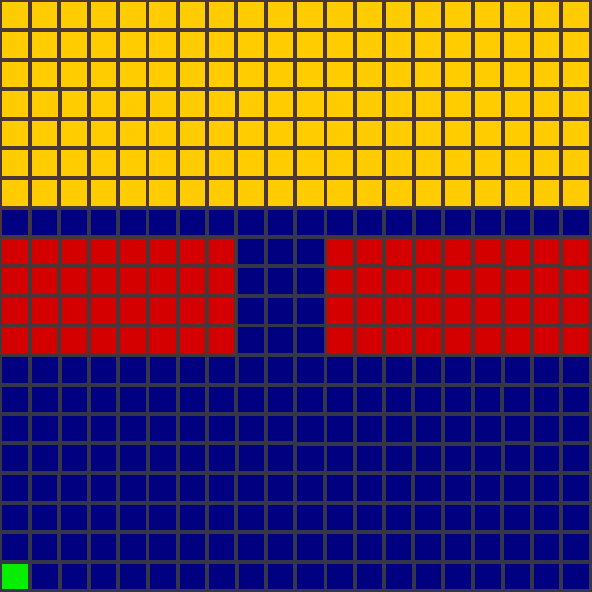}
        \caption{The Bridge Crossing environment.}
        \label{fig:bridgecrossinglarge}
\end{figure}

\paragraph{Bridge Crossing.} In this environment the agent acts in a $20 \times 20$ ``slippery'' gridworld (c.f., Fig.~\ref{fig:bridgecrossinglarge}), i.e., there is a $0.04$ chance that the agent's action is ignored and another action is uniformly sampled.

\begin{itemize}
    \item \textbf{Dynamics:} typical gridworld dynamics with slip probability $0.04$. 
    \item \textbf{Action space:} $\A = \{\textit{Left}, \textit{Right}, \textit{Up}, \textit{Down}, \textit{Stay}\}$
    \item \textbf{State space:} $20 \times 20$ different locations in the gridworld $\Rightarrow |\S| = 400$.
    \item \textbf{Reward function:} The agent receives a reward of $1.0$ and terminates when entering either the \textit{yellow} goal states.
    \item \textbf{Episode length:} $T=400$.
    \item \textbf{Safety property:} the agent must avoid all red \textit{lava} states ($\varphi = \square\neg \, \text{lava}$).
    \item \textbf{Automaton size:} $|\Q|=2$.
    \item \textbf{Cost thresholds}: 
    \begin{itemize}
        \item \textsc{RecReg} (lower-bound): $1 - \epsilon = 0.95$.
        \item PPO-Lag / CPO (cost upper-bound): $0.05$.
        \item QL-$\lambda$ (coefficient): $10.0$.
        \item Rec. RL (step-wise upper-bound): $\epsilon_t = 0.000125$.
    \end{itemize}
    \item \textbf{Model checking horizon:} $H=4$.
\end{itemize}

\begin{figure}[H]
    \centering
    \includegraphics[width=0.25\linewidth]{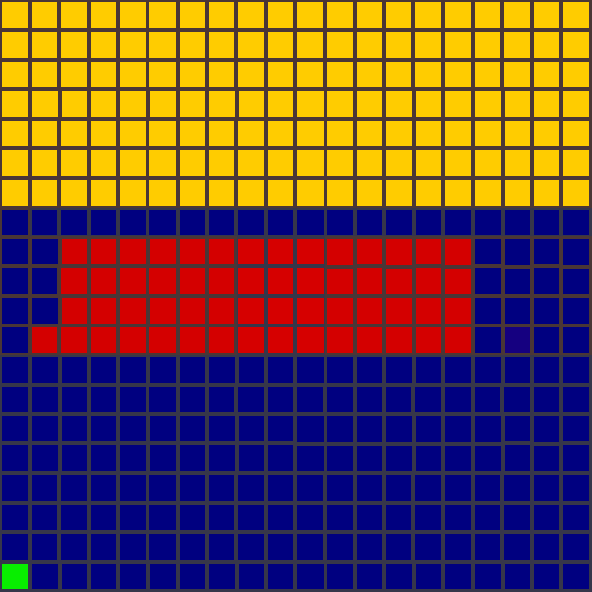}
    \caption{The Bridge Crossing V2 environment.}
    \label{fig:bridgecrossingvslarge}
\end{figure}

\paragraph{Bridge Crossing V2.} In this environment the agent acts in a $20 \times 20$ ``slippery'' gridworld (c.f., Fig.~\ref{fig:bridgecrossinglarge}), i.e., there is a $0.04$ chance that the agent's action is ignored and another action is uniformly sampled.

\begin{itemize}
    \item \textbf{Dynamics:} typical gridworld dynamics with slip probability $0.04$. 
    \item \textbf{Action space:} $\A = \{\textit{Left}, \textit{Right}, \textit{Up}, \textit{Down}, \textit{Stay}\}$
    \item \textbf{State space:} $20 \times 20$ different locations in the gridworld $\Rightarrow |\S| = 400$.
    \item \textbf{Reward function:} The agent receives a reward of $1.0$ and terminates when entering either the \textit{yellow} goal states.
    \item \textbf{Episode length:} $T=400$.
    \item \textbf{Safety property:} the agent must avoid all red \textit{lava} states ($\varphi = \square\neg \, \text{lava}$).
    \item \textbf{Automaton size:} $|\Q|=2$.
    \item \textbf{Cost thresholds}: 
    \begin{itemize}
        \item \textsc{RecReg} (lower-bound): $1 - \epsilon = 0.99$.
        \item PPO-Lag / CPO (cost upper-bound): $0.01$.
        \item QL-$\lambda$ (coefficient): $10.0$.
        \item Rec. RL (step-wise upper-bound): $\epsilon_t = 0.000025$.
    \end{itemize}
    \item \textbf{Model checking horizon:} $H=4$.
\end{itemize}

\begin{figure}[H]
    \centering
    \includegraphics[width=0.25\linewidth]{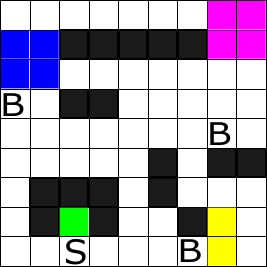}
    \caption{The Colour Bomb Gridworld environment.}
    \label{fig:colourgridworldlarge}
\end{figure}

\paragraph{Colour Bomb Gridworld.} The access point for this environment in MASA is under the env id: \emph{colour\_bomb\_gridworld}. In this environment the agent acts in a $9 \times 9$ ``slippery'' gridworld (c.f., Fig.~\ref{fig:colourgridworldlarge}) with a $0.1$ chance that the agent's action is ignored and another action is uniformly sampled. 

\begin{itemize}
    \item \textbf{Dynamics:} typical gridworld dynamics with slip probability $0.1$. 
    \item \textbf{Action space:} $\A = \{\textit{Left}, \textit{Right}, \textit{Up}, \textit{Down}, \textit{Stay}\}$
    \item \textbf{State space:} $9 \times 9$ different locations in the gridworld $\Rightarrow |\S| = 81$.
    \item \textbf{Reward function:} The agent receives a reward of $1.0$ and terminates when entering either the \textit{blue}, \textit{pink} or \textit{yellow} states.
    \item \textbf{Episode length:} $T=100$.
    \item \textbf{Safety properties:} 
    \begin{itemize}
        \item Property 1: avoid all $\texttt{B}$ states ($\varphi = \square \neg \texttt{B}$).
        \item Property 2: if $\texttt{B}$ state is reached the agent must remain there for one additional timestep ($\varphi= (\neg \texttt{B} \land X \texttt{B})\to X \texttt{B}$).
    \end{itemize}
    \item \textbf{Automaton size:} $|\Q|=2$ for Property 1 and $|\Q|=4$ for Property 2.
    \item \textbf{Cost thresholds}: 
    \begin{itemize}
        \item \textsc{RecReg} (lower-bound): $1 - \epsilon = 0.95$.
        \item PPO-Lag / CPO (cost upper-bound): $0.05$.
        \item QL-$\lambda$ (coefficient): $10.0$.
        \item Rec. RL (step-wise upper-bound): $\epsilon_t = 0.005$.
    \end{itemize}
    \item \textbf{Model checking horizon:} $H=2$ for Property 1 and 2.
\end{itemize}

\begin{figure}[H]
    \centering
    \includegraphics[width=0.35\linewidth]{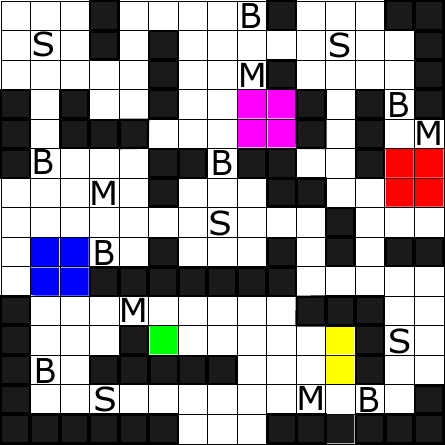}
    \caption{The Colour Bomb Gridworld V2 environment}
    \label{fig:colourgridworldv2large}
\end{figure}

\paragraph{Colour Bomb Gridworld V2.} The access point for this environment in MASA is under the env id: \emph{colour\_bomb\_gridworld\_v2}. In this environment the agent acts in a $9 \times 9$ ``slippery'' gridworld (c.f., Fig.~\ref{fig:colourgridworldv2large}) with a $0.1$ chance that the agent's action is ignored and another action is uniformly sampled. 

\begin{itemize}
    \item \textbf{Dynamics:} typical gridworld dynamics with slip probability $0.1$. After entering a coloured goal state, the agent transitions to a random starting ($\texttt{S}$) state.
    \item \textbf{Action space:} $\A = \{\textit{Left}, \textit{Right}, \textit{Up}, \textit{Down}, \textit{Stay}\}$
    \item \textbf{State space:} $15\times 15$ different locations in the gridworld $\Rightarrow |\S| = 225$.
    \item \textbf{Reward function:} The agent receives a reward of $1.0$ when entering the (non-terminal) coloured states \textit{blue}, \textit{pink}, \textit{red}, \textit{green} or \textit{yellow} states.
    \item \textbf{Episode length:} $T=250$.
    \item \textbf{Safety properties:} 
    \begin{itemize}
        \item Property 1: avoid all $\texttt{B}$ states ($\varphi = \square \neg \texttt{B}$).
        \item Property 2: if $\texttt{B}$ state is reached the agent must remain there for one additional timestep ($\varphi= (\neg \texttt{B} \land X \texttt{B})\to X \texttt{B}$).
        \item Property 3: If a $\texttt{B}$ state is reached the agent must reach a $\texttt{M}$ state and remain there for 2 timesteps all within 10 timesteps ($\varphi= \texttt{B} \to \lozenge^{\leq 10} (\texttt{M} \to (X \texttt{M}))$).
    \end{itemize}
    \item \textbf{Automaton size:} $|\Q|=2$ for Property 1, $|\Q|=4$ for Property 2 and $|\Q|=22$ for Property 3.
    \item \textbf{Cost thresholds}: 
    \begin{itemize}
        \item \textsc{RecReg} (lower-bound): $1 - \epsilon = 0.975$.
        \item PPO-Lag / CPO (cost upper-bound): $0.025$.
        \item QL-$\lambda$ (coefficient): $10.0$.
        \item Rec. RL (step-wise upper-bound): $\epsilon_t = 0.0001$.
    \end{itemize}
    \item \textbf{Model checking horizon:} $H=9$ for Property 1 and 2; $H=12$ for Property 3.
\end{itemize}

\begin{figure}[H]
    \centering
    \includegraphics[width=0.3\linewidth]{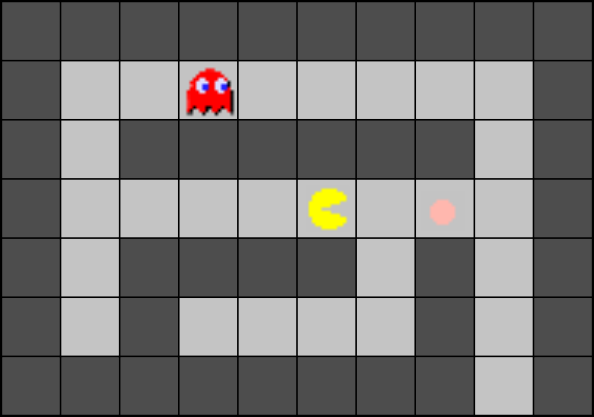}
    \caption{The Mini Pacman environment.}
    \label{fig:pacmanlarge}
\end{figure}

\paragraph{Mini Pacman.} The agent (Pacman) operates in a Pacman world, with one ghost and one piece of food (yellow circle) to collect (c.f., Fig.~\ref{fig:pacmanlarge}). The agent can exit via the bottom right hand corner, although from the initial configuration the ghost is likely to catch the agent if it heads straight for the food or exit.

\begin{itemize}
    \item \textbf{Dynamics:} typical Pacman dynamics: can only change direction at an intersection and turn around when facing a wall. The ghost follows the path with shortest Manhattan distance to the agent at every point, but with probability 0.6 picks a random action.
    \item \textbf{Action space:} $\A = \{\textit{Left}, \textit{Right}, \textit{Up}, \textit{Down}, \textit{No-Change}\}$
    \item \textbf{State space:} Accounting for all valid configurations for the Pacman's location, direction and the ghost's location and direction, there are precisely 8124 game states $\Rightarrow |\S| = 8124$.
    \item \textbf{Reward function:} The agent receives a reward of $1.0$ when it enters the location with the food present (c.f., Fig.~\ref{fig:pacmanlarge}); once the food is picked up it disappears. 
    \item \textbf{Episode length:} $T=100$ (no termination upon collecting the food, but can terminate via the exit).
    \item \textbf{Safety property:} The agent must avoid the ghost for the entire duration of the episode, even after picking up the food ($\varphi=\square \neg\, \text{ghost}$)
    \item \textbf{Automaton size:} $|\Q|=2$.
    \item \textbf{Cost thresholds}: 
    \begin{itemize}
        \item \textsc{RecReg} (lower-bound): $1 - \epsilon = 0.99$.
        \item PPO-Lag / CPO (cost upper-bound): $0.01$.
        \item QL-$\lambda$ (coefficient): $10.0$.
        \item Rec. RL (step-wise upper-bound): $\epsilon_t = 0.0001$.
    \end{itemize}
    \item \textbf{Model checking horizon:} $H=20$.
\end{itemize}

\section{Hyperparameters and Code}
\label{sec:hyperparameters-and-code}

\paragraph{Access to code.} 
The tabular environments are provided by MASA-Safe-RL \citep{Goodall2025MASASafeRL} (MIT license) and implemented with Gymnasium \citep{towers2024gymnasium} (MIT license). Tabular Q-learning is implemented with \emph{numpy} in \emph{Python}, the model checking procedures (both exact and statistical) are implemented with JAX \cite{jax2018github} (Apache-2.0 license) which supports vectorized computation on GPU and CPU. For PPO-Lag \cite{ray2019benchmarking} and CPO \cite{achiam2017constrained}, we use the implementations provided by Omnisafe \cite{omnisafe}, the code for running these benchmarks is mostly public facing at \url{https://github.com/sacktock/MASA-Safe-RL} our private code can also be made available upon reasonable request, for setup instructions please refer to the Omnisafe code base \url{https://github.com/PKU-Alignment/omnisafe} (Apache-2.0 license).

\paragraph{Training details.} For collecting all sets of experiments we has access to 2 NVIDIA Tesla A40 (48GB RAM) GPU and a 24-core/48 thread Intel Xeon CPU each with 32GB of additional RAM. Each run for the tabular experiments can take anywhere from several minutes to several hours depending on the number of training steps and the complexity of the environment or safety property being used.

\paragraph{Statistical significance.} Standard error (SE) intervals are provided in Table~\ref{tab:raw-returns1} \& \ref{tab:raw-returns2} for each of our experiments. In particular, we report 10 random initializations (seeds) for each experiment. In the learning curves (c.f., Section \ref{sec:learning-curves-tabular}) and bar charts (c.f., Figure \ref{fig:results}), we plot the non-parametric (bootstrap) 95\% confidence intervals, provided by \texttt{seaborn.lineplot} with default parameters: \texttt{errorbar=(`ci', 95), n\_boot=1000}. The error bars capture the randomness in the initialization of the policy parameters and batch sampling (e.g., for PPO-Lag and CPO), and any randomness in the environment.

\hfill
\newpage
\subsection{Hyperparameters for Tabular Algorithms}

Here we provide hyperparameter details for all of the tabular algorithms tested in MASA-Safe-RL \citep{Goodall2025MASASafeRL}. For more concrete per-environment hyperparameter settings we encourage the reader to study the relevant python scripts provided in the supplementary material.

\begin{table}[H]
  \caption{Hyperparameter details for Q-Learning based algorithms: \textsc{RecReg} Rec. RL, QL-$\lambda$, SEM, LCRL.}
  \label{tab:hyperparams2}
  \centering
  \small
  \begin{tabular}{lcc}
    \toprule
       Name & Symbol & Value  \\
    \midrule
    \multicolumn{3}{c}{Q-Learning (shared)}\\
    \midrule
    Learning rate & $\alpha$ & $\{0.05, 0.1\}$\\
    Discount factor & $\gamma$ & $\{0.95, 0.99\}$\\
    Exploration type & - & $\{\texttt{Boltzmann}, \texttt{Epsilon-greedy} \}$\\
    Boltzmann Temp. & $\tau$ & $0.05$\\
    Epsilon start & $\epsilon_\text{init}$ & 1.0 \\
    Epsilon final & $\epsilon_\text{final}$ & 0.05 \\
    Epsilon decay & - & \textit{Linear} \\
    Decay steps & - & 250,000 \\
    Counter factual experiences & - & \textit{True} \\
    \midrule
    \multicolumn{3}{c}{\textsc{RecReg}}\\
    \midrule
    Model checking type & - & \texttt{Statistical} (for \textsc{ModelBased})\\
    Number of samples & $m$ & $8124$ \\
    Step-wise risk & $\epsilon_t$ & Variable (c.f., Section \ref{sec:experiments}) \\
    Failure probability & $\delta_t$ & $0.0001$ \\
    Model checking horizon & $H$ & Variable (c.f., Section \ref{sec:environment-details}) \\
    Approximation error  & $\epsilon'$ & $0.0005$\\
    \midrule
    \multicolumn{3}{c}{\textsc{RecReg} (Safe policy $\pi_{\text{safe}}$)}\\
    \midrule
    Learning rate & $\alpha_\text{safe}$ & $\{0.05, 0.1\}$\\
    Discount factor & $\gamma_\text{safe}$ & $\{0.95, 0.99\}$\\
    Exploration type & - & $\{\texttt{Boltzmann}, \texttt{Epsilon-greedy} \}$\\
    Boltzmann Temp. & $\tau_\text{safe}$ & $0.01$\\
    Cost coefficient & - & $10.0$\\
    \midrule
    \multicolumn{3}{c}{Rec. RL}\\
    \midrule
    Risk Critic Alpha & $\alpha_\text{risk}$ & 0.1\\
    Risk Critic Gamma & $\gamma_\text{risk}$ & 0.99 \\
    Step-wise risk & $\epsilon_t$ & Variable (c.f., Section \ref{sec:environment-details})\\
    \midrule
    \multicolumn{3}{c}{Q-$\lambda$}\\
    \midrule
    Dual variable & $\lambda$ & $\{1.0, 10.0, 100.0\}$\\
    \midrule
    \multicolumn{3}{c}{SEM}\\
    \midrule
    $d(s, a)$ metric learning rate & $\alpha_d$ & $0.1$\\
    $d(s, a)$ metric discount & $\gamma_d$ & $0.95$\\
    $c(s, a)$ metric learning rate & $\alpha_d$ & $0.05$\\
    $c(s, a)$ metric discount & $\gamma_d$ & $0.999$\\
    \bottomrule
  \end{tabular}
\end{table}

\clearpage 
\newpage

\subsection{Hyperparameters for PPO-Lag and CPO}

\begin{table}[H]
  \caption{Hyperparameter details for PPO-Lag, TRPO-Lag and CPO.}
  \label{tab:hyperparam3}
  \centering
  \small
  \begin{tabular}{lcc}
    \toprule
       Name & Symbol & Value  \\
    \midrule
    \multicolumn{3}{c}{PPO-Lag, TRPO-Lag, CPO shared (Tabular)}\\
    \midrule
    Actor learning rate (PPO-Lag) & $\eta$ & $3\times 10^{-4}$\\
    Critic learning rate (CPO/TRPO-Lag) & $\eta_v$ & $1\times 10^{-3}$\\
    Discount factor & $\gamma$ & $\{0.95, 0.99\}$\\
    Cost coefficient & $c$ & $1.0$\\
    Cost threshold & - & Variable (c.f., Section \ref{sec:environment-details})\\
    Cost gamma & $\gamma_c$ & $\{0.9, 0.95, 0.99\}$ \\
    TD-lambda & $\lambda$ & $0.95$ \\
    Cost TD-lambda & $\lambda_c$ & $0.95$ \\
    Max grad norm & - & $0.5$ \\
    Entropy coefficient & - & $0.0$\\
    Update iterations (per epoch) & $k$ & $10$ \\
    Steps per epoch & $n$ & $2048$\\
    \midrule
    \multicolumn{3}{c}{CPO, TRPO-Lag shared trust region (Tabular)}\\
    \midrule
    Target KL divergence & $\delta$ & $0.01$\\
    Conjugate gradient iterations & - & $15$\\
    Conjugate gradient damping & - & $0.1$\\
    FVP sample frequency & - & $1$\\
    Line search steps & - & $15$\\
    Line search decay & - & $0.8$\\
    \midrule
    \multicolumn{3}{c}{PPO, CPO shared (Atari Seaquest)}\\
    \midrule
    Actor/critic learning rate & $\eta$ & $3\times 10^{-5}$\\
    Discount factor & $\gamma$ & $0.9967$\\
    Cost coefficient & $c$ & $1.0$\\
    Cost threshold & - & $1.0$\\
    Cost gamma & $\gamma_c$ & $0.95$ \\
    TD-lambda & $\lambda$ & $0.95$ \\
    Cost TD-lambda & $\lambda_c$ & $0.95$ \\
    Max grad norm & - & $40.0$ \\
    Entropy coefficient & - & $0.0$\\
    Steps per epoch & $n$ & $200000$\\
    \midrule
    \multicolumn{3}{c}{PPO-Lag, TRPO-Lag shared (Tabular and Atari Seaquest)}\\
    \midrule
    Initial Lagrangian multiplier & $\lambda_{\textit{init}}$ & $\{10.0, 100.0\}$\\
    Lagrangian learning rate & $\eta_\lambda$ & $0.01$ \\
    Lagrangian upper bound & $\lambda_{\max}$ & $100.0$ \\
    \midrule
    \multicolumn{3}{c}{PPO-Lag (Tabular and Atari Seaquest)}\\
    \midrule
    Epsilon clip & $\epsilon_{\textit{clip}}$ & $0.2$\\
    Batch size & $B$ & $64$\\
    \bottomrule
  \end{tabular}
\end{table}

\clearpage 
\newpage

\subsection{Hyperparameter details for \textsc{ProbShield}}

\begin{table}[H]
  \caption{Hyperparameter details for \textsc{ProbShield}.}
  \label{tab:hyperparams4}
  \centering
  \small
  \begin{tabular}{lcc}
    \toprule
    Name & Symbol & Value  \\
    \midrule
    \multicolumn{3}{c}{\texttt{ProbShieldWrapperDisc}}\\
    \midrule
    Initial safety bound & $\epsilon_\text{init}$ & 0.01\\
    Max VI Steps & - & 10,000\\
    VI Cut off & - & $1\times 10^{-10}$\\
    Disc. Granularity & - & 20\\
    \midrule
    \multicolumn{3}{c}{PPO (underlying algorithm)}\\
    \midrule
    Num. steps & - & 2048\\
    Num. envs. & - & 1\\
    Batch size & - & 64\\
    $\pi$ training epochs & \texttt{n\_epochs} & 10\\
    Discount factor & $\gamma$ & 0.99\\
    (GAE) lambda & $\lambda$ & 0.95\\
    Clip range & $\epsilon$ & 0.2\\
    Normalize & - & advantages\\
    Ent. coef. & $\beta_{\text{ent}}$ & 0.0\\
    Value coef. & $\beta_{\text{value}}$ & 1.0\\
    $\pi$ optimizer & - & adam (max\_grad=0.5, $\epsilon=10^{-5}$)\\
    $\pi$ learning rate & - & $3\times 10^{-4}$\\
    $\pi$ network & - & 2-layers; 64 units; \texttt{ReLU}\\
    $V$ optimizer & - & adam (max\_grad=0.5, $\epsilon=10^{-5}$)\\
    $V$ learning rate & - & $3\times 10^{-4}$\\
    $V$ network & - & 2-layers; 64 units; \texttt{ReLU}\\
    \bottomrule
\end{tabular}
\end{table}

\end{document}